\documentclass{article}

    \PassOptionsToPackage{numbers, compress}{natbib}


\usepackage[preprint]{neurips_2025}
\usepackage[numbers]{natbib}
\usepackage{amsmath,amsfonts,amsthm,amssymb}
\usepackage{mathtools}
\usepackage{placeins}
\usepackage{algorithm}
\usepackage[noend]{algpseudocode}

\usepackage{amsmath,amsfonts,bm}

















\def\1{\bm{1}}










\DeclareMathAlphabet{\mathsfit}{\encodingdefault}{\sfdefault}{m}{sl}
\SetMathAlphabet{\mathsfit}{bold}{\encodingdefault}{\sfdefault}{bx}{n}











\newcommand{\R}{\mathbb{R}}



\DeclareMathOperator*{\argmin}{arg\,min}

\usepackage{graphicx}
\usepackage{hyperref}
\usepackage{url}

\usepackage{subcaption}
\usepackage{geometry}
\newtheorem{lemma}{Lemma}
\newtheorem{theorem}{Theorem}
\newtheorem{proposition}{Proposition}

\theoremstyle{definition}
\newtheorem{asu}{Assumption}
\newtheorem*{asu*}{Assumption}
\newtheorem{remark}{Remark}
\newtheorem{definition}{Definition}
\usepackage{enumitem}
\newcommand{\B}{\boldsymbol}

\newcommand{\cA}{\mathcal{A}}
\newcommand{\bS}{\mathbb{S}}
\newcommand{\cR}{\mathcal{R}}

\newcommand{\cN}{\mathcal{N}}

\newcommand{\cM}{\mathcal{M}}

\newcommand{\cY}{\mathcal{Y}}
\newcommand{\cO}{\mathcal{O}}
\newcommand{\cX}{\mathcal{X}}
\newcommand{\cZ}{\mathcal{Z}}
\newcommand{\cD}{\mathcal{D}}

\renewcommand{\R}{\mathbb{R}}
\renewcommand{\P}{\mathbb{P}}

\newcommand{\p}{{\rm I}\kern-0.18em{\rm P}}
\newcommand{\norm}[1]{\left\Vert#1\right\Vert}




\usepackage[utf8]{inputenc} 
\usepackage[T1]{fontenc}    
\usepackage{hyperref}       
\usepackage{url}            
\usepackage{booktabs}       
\usepackage{amsfonts}       
\usepackage{nicefrac}       
\usepackage{microtype}      
\usepackage{xcolor}         

\title{Differentially Private High-dimensional Variable Selection via Integer Programming}

%

\author{%
  Petros Prastakos \\
  Operations Research Center\\
  MIT\\
  Cambridge, MA 02139, USA \\
  \texttt{pprastak@mit.edu} \\
  \And
  Kayhan Behdin \\
  LinkedIn \\
   Sunnyvale, CA 94085, USA \\
   \texttt{kbehdin@linkedin.com}
   \And
   Rahul Mazumder \\
Operations Research Center\\
Sloan School of Management\\
MIT\\
Cambridge, MA 02139, USA \\
\texttt{rahulmaz@mit.edu} 
}

\begin{document}

\maketitle

\begin{abstract}

Sparse variable selection improves interpretability and generalization in high-dimensional learning by selecting a small subset of informative features. Recent advances in Mixed Integer Programming (MIP) have enabled solving large-scale non-private sparse regression—known as Best Subset Selection (BSS)—with millions of variables in minutes. However, extending these algorithmic advances to the setting of Differential Privacy (DP) has remained largely unexplored. In this paper, we introduce two new pure differentially private estimators for sparse variable selection, levering modern MIP techniques. Our framework is general and applies broadly to problems like sparse regression or classification, and we provide theoretical support recovery guarantees in the case of BSS. Inspired by the exponential mechanism, we develop structured sampling procedures that efficiently explore the non-convex objective landscape, avoiding the exhaustive combinatorial search in the exponential mechanism. We complement our theoretical findings with extensive numerical experiments, using both least squares and hinge loss for our objective function, and demonstrate that our methods achieve state-of-the-art empirical support recovery, outperforming competing algorithms in settings with up to $p=10^4$. Code is available at \href{https://github.com/petrosprastakos/DP-variable-selection}{\texttt{https://github.com/petrosprastakos/DP-variable-selection}}.

\end{abstract}

\section{Introduction}
High-dimensional datasets are increasingly common, but extracting meaningful models is challenging due to overfitting and lack of interpretability. Statistical regularizations that encourage model simplicity from certain perspectives have been successful in addressing such challenges, becoming a staple of high-dimensional statistics and machine learning. One such common regularization is sparsity~\citep{learninglasso,hastie2009elements}, where one seeks to choose a small subset of features in the data to form the statistical model. 

In this paper, we focus on the problem of sparse variable selection. Given the data matrix $\B X\in\R^{n\times p}$ and the observations $\B y\in\R^n$, we seek to obtain an estimator $\B \beta$ that describes the data well with only a few coordinates of $\B \beta$ being nonzero. A natural first formulation for this problem is
\begin{equation}\label{bss}
    \min_{\B\beta\in\R^p} \sum_{i=1}^n \ell(y_i, \B x_i^T \B \beta) ~~\text{s.t.}~~\|\B\beta\|_0\leq s, ~\|\B\beta\|_2^2\leq r^2
\end{equation}
where $\|\cdot\|_0$ counts the number of nonzero coordinates of a vector. In the case where $\ell(y_i, \B x_i^T \B \beta) = (y_i - \B x_i^T \B \beta)^2$, the objective becomes least squares, and the problem is referred to as Best Subset Selection (BSS,~\citet{miller2002subset}). The constraint $\|\B\beta\|_0\leq s$ enforces sparsity via the sparsity budget $s>0$, and the constraint $\|\B\beta\|_2^2\leq r^2$ for some $r>0$ serves as an additional (ridge) regularization. A sparse linear estimator can be more interpretable and have better statistical performance~\citep{hastie2009elements,learninglasso,wainwright2019high}. 

Real-world datasets often contain confidential and personal information, that should be protected. Hence, recent years have seen a surge in private learning algorithms, hoping to preserve sensitive information while extracting useful statistical knowledge. In particular, Differential Privacy (DP,~\citet{dwork2006differential}) has garnered significant interest in the machine learning and statistics literature. On a high level, DP aims to ensure one cannot obtain too much information from the private dataset, via querying the statistical model in an adversarial way. A significant body of work is dedicated to designing DP algorithms for general machine learning tasks~\citep{exponentialmechanism,laplace,boosting1,boostong2,dwork2014algorithmic}, as well as specialized algorithms for specific statistical problems. Particularly, there is a long line of work studying the sparse linear regression problem~\citep{lasso2015nearly, stabilitylasso, kifer2012resample, lei2018differentially, dpbss}. 

In this paper, we develop two scalable pure DP algorithms for variable selection under a broad framework where one releases the optimal support in~\eqref{bss} (i.e., the location of nonzero coordinates in the optimal $\B\beta$). To our knowledge, we are the first to incorporate MIP techniques for this task. While our support recovery results are derived specifically for the Best Subset Selection (BSS) setting, we provide pure DP guarantees for our methods that hold for general loss functions (not just least squares). Specifically, we make the following contributions:
\begin{enumerate}[itemsep=0pt, topsep=0pt, leftmargin=*, label=\arabic*.]

\item Our first method, named \textbf{top-$\mathbf{R}$}, satisfies pure DP under only a standard boundedness assumption on the data (achievable via clipping). For BSS, it achieves support recovery with high probability whenever $\beta_{\min} := \min_{j\in \{i: \beta_i^* \neq 0\}}|\beta_j^*| \gtrsim \sqrt{\max\{1, s^2/\epsilon\} (\log p) / n}$, matching the non-private minimax-optimal $\sqrt{(\log p) / n}$ threshold in the low-privacy regime.

\item Our second method, named \textbf{mistakes}, also satisfies pure DP, but requires an additional separation assumption on the objective gap for pure-DP guarantees. In BSS, this condition holds with high probability under $\beta_{\min} \gtrsim \sqrt{s \log p / n}$, with the milder condition for support recovery of $\beta_{\min} \gtrsim \sqrt{\max\{1, 1/\epsilon\} (s \log p) / n}$, which aligns with the condition of \cite{dpbss} in the high-privacy regime.
\item Empirically, our methods outperform the other DP variable selection methods in the literature, including the state-of-the-art approximate DP MCMC approach of \cite{dpbss} for BSS, under a wide range of parameter values and up to $p = 10{,}000$.  We also show strong empirical performance in wider settings, including sparse classification with hinge loss. Our results demonstrate that DP variable selection with provable guarantees and practical scalability is possible by combining optimization and privacy.
\end{enumerate}

\subsection{Related Work}
\paragraph{DP variable selection.}
Most of the existing DP literature focuses on non-sparse linear regression, or $\ell_2$ risk excess in sparse regression~\citep{lasso2015nearly,lassoclipping,dp-lasso1,cai2021cost,dplasso2}. For the specific problem of DP variable selection, previous works have focused on the sparse regression setting.  As Lasso tends to promote sparsity, an interesting line of work is based on releasing the variables selected by Lasso in a private fashion~\citep{stabilitylasso,kifer2012resample}. \cite{stabilitylasso} introduce two propose-test-release algorithms for variable selection. However, the failure probability for support recovery for these methods does not approach 0 with growing sample size. \cite{kifer2012resample} propose a computationally efficient resample-and-aggregate~\citep{resample} algorithm, which underperforms compared to our methods in practice, and requires a stronger $\beta_{\min}$ condition than in our methods in the case of BSS. ~\citet{lei2018differentially} propose an algorithm based on the exponential mechanism, requiring to enumerate all feasible supports in~\eqref{bss}, limiting the scalability of their method. Recently,~\citet{dpbss} have proposed a new method based on the notion of Markov chain mixing to obtain approximate DP solutions for BSS, resulting in a statistically strong estimator. While our $\beta_{\min}$ conditions are comparable with theirs in the low-privacy regime of top-$R$ or high-privacy regime of mistakes, we note that we achieve pure-DP guarantees for general loss functions, our algorithms have scope beyond BSS, and our empirical performance is stronger across a broad range of parameters.

\paragraph{Modifications to the exponential mechanism.}
The methods we introduce in this paper involve modifications to the exponential mechanism, a fundamental DP algorithm, in order to reduce the outcome set of our sampling distribution. Some other truncations of the exponential mechanism have existed in the DP literature. First, the Restricted Exponential Mechanism (REM) \cite{brown2021covariance} for private mean estimation samples from the exponential mechanism restricted to points of sufficiently large Tukey depth, together with a private “safety” check that the restricted set is well behaved. 
Second, the Truncated Exponential Mechanism (TEM) for metric-DP on text \citep{silva2021tem} restricts selection to a $\gamma$-ball around the input and collapses the remainder of the domain into a single $\bot$ bucket—equivalently, assigning the outside set a shared score. While the spirit of truncation is analogous to the modifications proposed in this paper, our methods target combinatorial support selection under pure DP, rather than metric-DP over text data or mean estimation.

\textbf{Notation.} We let $[p]=\{1,\cdots,p\}$. Data points follow $(\B x_i,  y_i) \in \cZ =  \cX\times \cY \subset \R^p \times \R$, with $\cD=(\B X,\B y)\in\cZ^n$ for a dataset containing $n$ observations. 
\section{Method}
\label{section2}
\paragraph{Background on Differential Privacy}
Before continuing with our selection procedure, let us formalize the notion of differential privacy.  

\begin{definition}[\cite{dwork2006differential}]\label{dp-def}
    Given the privacy parameters $(\varepsilon, \delta) \in \R^+ \times \R^+$, a randomized algorithm $\cA(\cdot)$ is said to satisfy the $(\varepsilon, \delta)$-DP property if
    $$\p(\cA(\cD)\in K)\leq e^{\varepsilon}\p(\cA(\cD')\in K)+\delta$$
for any measurable event $K \subset \text{range}(\cA)$ and for any pair of neighboring datasets $\cD$ and $\cD'$.
\end{definition}
We note that in Definition~\ref{dp-def}, the probability is taken over the randomness of the algorithm $\cA$. When $\delta > 0$, the $(\varepsilon,\delta)$-DP property is also commonly referred to as \emph{approximate} differential privacy, while the special case where $\delta=0$ is commonly referred to as \emph{pure} differential privacy.

Next, let us briefly review the exponential mechanism~\citep{exponentialmechanism}, a general mechanism to achieve pure DP. Consider a general task where the dataset $\cD\in\cZ^n$ is given, and we seek to design a procedure such as $\cA:\cZ^n\to\cO$ to choose the outcome of the task, where $\cO$ is the set of possible outcomes. We also assume we are given an objective function such as $\cR:\cO\times\cZ^n  \to \R$, where a smaller objective indicates a more desirable outcome. 
The global sensitivity of the objective is then defined as
\begin{equation}\label{delta-orig}
   \Delta = \max_{o\in\cO}\max_{\substack{\cD,\cD'\in\cZ^n \\ \cD,\cD' \text{ are neighbors} }} \cR(o,\cD)-\cR(o,\cD'). 
\end{equation}

\begin{lemma}[Exponential Mechanism,~\citet{exponentialmechanism}]\label{exp-lemma}
    The exponential mechanism $\cA_E(\cdot)$ that follows
    \begin{equation}\label{exp-mech-prob}
        \p(\cA_E(\cD)=o)\propto \exp\left(-\frac{\varepsilon \cR(o,\cD)}{2\Delta}\right),~~\forall o\in\cO
    \end{equation}
    ensures $(\varepsilon,0)$-DP.
\end{lemma}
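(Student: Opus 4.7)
The plan is to verify Definition~\ref{dp-def} with $\delta = 0$ by bounding the likelihood ratio of $\cA_E(\cD)$ and $\cA_E(\cD')$ pointwise and then integrating over the event $K$. Concretely, for neighboring datasets $\cD, \cD'$ and any outcome $o \in \cO$, I would decompose the ratio of the sampling probabilities into two pieces: the ratio of the unnormalized weights at $o$, and the ratio of the normalizers,
\begin{equation*}
    \frac{\p(\cA_E(\cD)=o)}{\p(\cA_E(\cD')=o)} = \underbrace{\frac{\exp(-\varepsilon \cR(o,\cD)/(2\Delta))}{\exp(-\varepsilon \cR(o,\cD')/(2\Delta))}}_{(\mathrm{I})} \cdot \underbrace{\frac{\sum_{o'\in\cO}\exp(-\varepsilon \cR(o',\cD')/(2\Delta))}{\sum_{o'\in\cO}\exp(-\varepsilon \cR(o',\cD)/(2\Delta))}}_{(\mathrm{II})}.
\end{equation*}

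For factor (I), I would invoke the sensitivity bound~\eqref{delta-orig}. Since the definition of $\Delta$ takes the maximum over both orderings of neighboring pairs, we get $|\cR(o,\cD) - \cR(o,\cD')| \le \Delta$, hence $(\mathrm{I}) \le \exp(\varepsilon/2)$. For factor (II), I would apply the same pointwise sensitivity bound inside the sum: each summand in the numerator satisfies $\exp(-\varepsilon \cR(o',\cD')/(2\Delta)) \le \exp(\varepsilon/2)\cdot \exp(-\varepsilon \cR(o',\cD)/(2\Delta))$, so pulling the common factor $\exp(\varepsilon/2)$ out of the sum gives $(\mathrm{II}) \le \exp(\varepsilon/2)$. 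Multiplying the two bounds yields $\p(\cA_E(\cD)=o)/\p(\cA_E(\cD')=o) \le e^{\varepsilon}$.

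Finally, to upgrade from the pointwise ratio bound to the event-level statement in Definition~\ref{dp-def}, I would integrate (or sum, in the discrete case) over any measurable $K \subset \cO$: $\p(\cA_E(\cD)\in K) = \int_K \p(\cA_E(\cD)=o)\,d\mu(o) \le e^{\varepsilon}\int_K \p(\cA_E(\cD')=o)\,d\mu(o) = e^{\varepsilon}\p(\cA_E(\cD')\in K)$, which is exactly $(\varepsilon, 0)$-DP. The argument is symmetric in $\cD, \cD'$, so no separate treatment is needed.

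I do not anticipate a serious obstacle here: the proof is essentially a two-line application of the sensitivity inequality, once in the numerator weight and once inside the normalizing sum. The only mild care required is handling the asymmetry in the definition of $\Delta$ in~\eqref{delta-orig} (which is written without absolute values), and confirming that the bound carries through when $\cO$ is continuous by replacing sums with integrals against a common base measure used to define the density in~\eqref{exp-mech-prob}.
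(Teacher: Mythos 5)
Your proof is correct and is precisely the standard McSherry--Talwar argument: the paper does not prove Lemma~\ref{exp-lemma} itself but cites it from \citet{exponentialmechanism}, whose proof uses exactly your decomposition into the weight ratio and the normalizer ratio, each bounded by $e^{\varepsilon/2}$ via the sensitivity bound. Your two points of care are also handled properly --- since the neighboring relation is symmetric, the maximum over ordered pairs in~\eqref{delta-orig} does give $|\cR(o,\cD)-\cR(o,\cD')|\le\Delta$, and in this paper's setting $\cO$ is finite (supports of size $s$), so the continuous-case caveat is not needed.
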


\subsection{Selection Procedure}
\label{selection-method}
\subsubsection{Top-$R$ Method}
\label{topR-method}
The main inspiration for our selection procedure is the exponential mechanism, defined in Lemma~\ref{exp-lemma}. In particular, in the BSS problem we seek to select a subset of features with size $s$ that are a good linear predictor of $n$ observations $\B y$. Therefore, a natural choice for the outcome set in BSS is the set of all subsets of $[p]$ with size $s$,
$\cO=\{S\subseteq [p]: |S|=s\}$. 
Next, a natural choice for the objective in the BSS problem for each $S$ is the least squares loss, when the regression coefficients can only be nonzero for features in $S$. Formally, 
\begin{equation}\label{rsd-def}
    \cR(S,\cD) = \min_{\B\beta\in\R^{|S|}} \sum_{i=1}^n \ell(y_i,  (\B x_i)_S^T \B \beta)~~\text{s.t.}~~\|\B\beta\|_2^2\leq r^2
\end{equation}
where $(\B x_i)_S$ is the vector $\B x_i$ with columns indexed by $S$. 

Note that, if the elements of the dataset $\cD$ are unbounded, the global sensitivity of $\cR(S,\cD)$ may be unbounded.
We thus make the following boundedness assumption on $\cD$.
\begin{asu}
\label{data-bounded-asu}
There exist positive constants $b_y,b_x$ such that $\sup_{y\in \cY} |y| \leq b_y, \sup_{\B x\in \cX} \|\B x\|_\infty \leq b_x$.
\end{asu}
We note that, in practice, one might not know the exact values of $b_x,b_y$, or such values might not exist. In such cases, one can clip the values of $\B X,\B y$ to satisfy the boundedness requirements of Lemma~\ref{delta-1-lemma}. In Section \ref{delta-1-lemma} in the appendix, we show that, for the special case where our objective function is least squares, assumption \ref{data-bounded-asu} yields $\Delta \leq {2b_y^2  + 2b_x^2 r^2 s}$. 



\paragraph{Our Proposal} With a bounded global sensitivity, one can directly apply the exponential mechanism on $\cR(S,\cD)$ and achieve a $(\varepsilon,0)$-DP procedure for problem \ref{bss}. The difficulty in variable selection under DP constraints arises from the need to enumerate all feasible solutions in $\cO$. However, one can argue that if a support $S$ is far from the optimal one, the least-squares objective $\cR(S,\cD)$ is likely to be large, therefore, the probability mass of $S$ in~\eqref{exp-mech-prob} should be small. Therefore, one might ask: \\

\centerline{\textbf{{\emph{Is it necessary to have access to $\cR(S,\cD)$ for all $S\in \cO$ in the exponential mechanism?} }}}

Specifically, for the moment, suppose we have access to an oracle that for a fixed $R>1$, can return $R$ feasible supports from $\cO$ that have the smallest objectives. Formally, assume we can access $\hat{S}_1(\cD),\cdots,\hat{S}_R(\cD)$ where
\begin{equation}
\label{Sk_definition}
    \hat{S}_k(\cD)\in \argmin_S \cR(S,\cD)~~\text{s.t.}~~S\subseteq[p], ~|S|=s, ~S\neq \hat{S}_i(\cD), i=1,\cdots,k-1.
\end{equation}
In particular, $\hat{S}_1(\cD)$ is the optimal support for BSS in~\eqref{bss}. Then, based on our discussion above, if $R$ is sufficiently large, the values $\cR(\hat{S}_{k}(\cD),\cD)$ for $k\geq R$ are expected to be significantly larger than $\cR(\hat{S}_{k}(\cD),\cD)$ for $k\ll R$. Therefore, most of the probability mass of the distribution in~\eqref{exp-mech-prob} is concentrated around $\hat{S}_k(\cD)$ for $k\ll R$. Hence, we might not need to have access to the exact values of $\cR(\hat{S}_{k}(\cD),\cD)$ for $k\geq R$, as long as we can replace them with a suitable lower bound. This lower bound can be taken as $\cR(\hat{S}_{R}(\cD),\cD)$. 
To this end, we propose the sampling procedure $\hat \cM$, shown as Algorithm~\ref{alg1} below, where $\p_0$ is the probability distribution following
\begin{equation}\label{p0-dist}
    \p_0(k)\propto \begin{cases}
        \exp\left(-\varepsilon {\cR}(\hat{S}_k(\cD),\cD)/(2\Delta)\right) & \mbox{ if } k \leq R \\
        \left({p\choose s}- R\right)\exp\left(-\varepsilon {\cR}(\hat{S}_R(\cD),\cD)/(2\Delta)\right) & \mbox{ if } k = R+1.
    \end{cases}
\end{equation}

\begin{algorithm}
\caption{Top-$R$ method}
\label{alg1}
\begin{algorithmic}[1]

\Procedure{$\hat \cM$}{$\cD, s,  b_x, b_y, r, R, T$}  
    \State Clip $\B X,\B y$ to $b_x,b_y$, respectively, as in Lemma~\ref{delta-1-lemma}. Take $\Delta$ as in Lemma~\ref{delta-1-lemma}. Form $\p_0$ in~\eqref{p0-dist}.
    \State Draw $a(\cD)\sim\p_0$
    \If {$a(\cD)\leq R$}
    \State \Return $\hat{S}_{a(\cD)}(\cD)$
    \Else 
    \State \Return $\cM_0(\cD,R,T)$
    \EndIf
    \EndProcedure
\Procedure{$\cM_0$}{$\cD, R,T$}
\For{$t\leq T$}
\State Draw $S\in\cO$ uniformly at random, independent of $\p_0$.
\If{$S\in\{\hat{S}_k(\cD),k>R\}$}
\State Break
\EndIf
\EndFor
\State \Return $S$

\EndProcedure

\end{algorithmic}
\end{algorithm}
Intuitively speaking, $\hat \cM$ replaces $\cR(\hat{S}_{k}(\cD),\cD)$ for $k\geq R$ with $\cR(\hat{S}_{R}(\cD),\cD)$ and then ``approximately'' samples from the exponential mechanism. To this end, let 
\begin{equation}\label{Rhat}
    \hat{\cR}(S,\cD)=\begin{cases}
        \cR(S,\cD) & \mbox{ if } S \in\{\hat{S}_1(\cD),\cdots,\hat{S}_R(\cD)\} \\
        \cR(\hat{S}_R(\cD),\cD) & \mbox { otherwise }
    \end{cases}
\end{equation}
where we substitute $\cR(\hat{S}_{k}(\cD),\cD)$ for $k\geq R$ with $\cR(\hat{S}_{R}(\cD),\cD)$. Suppose $\hat{\cA}_E$ is the exponential mechanism that uses the objective $\hat\cR$. 
If $a(\cD)\leq R$ in Algorithm~\ref{alg1}, we return $\hat{S}_{a(\cD)}(\cD)$. Note that $\p(\hat{\cA}_E(\cD)=\hat{S}_{a(\cD)}(\cD))=\p_0(a(\cD))$ in this case, showing $\hat \cM$ mimics the exponential mechanism $\hat{\cA}_E$. If $a(\cD)=R+1$, to mimic $\hat{\cA}_E$, we have to sample uniformly from the set  $\bS=\{\hat{S}_k(\cD),k>R\}$ as $\p(\hat{\cA}_E)$ is uniform on $\bS$, by the definition of $\hat{\cR}$ in~\eqref{Rhat}. However, $\bS$ is exponentially large in general. Therefore, we invoke $\cM_0$ that in the limit of $T\to\infty$, samples uniformly from $\bS$. 

Observe that, in the case where $R = {p \choose s}$, we have that the distribution $\p_0$ is the same as the exponential mechanism that uses objective $\cR$ as in \cite{dpbss}, which is $(\epsilon,0)$-DP by Lemma \ref{exp-lemma}. 
Below, we show this procedure satisfies pure DP for any $R \in \{2,...,{p \choose s}-1\}$ as well. We defer all proofs to the appendix.
\begin{theorem}[Privacy for top-$R$ method]\label{dp-thm}
    Suppose $T>1$, $1<R<{p\choose s}$, and that assumption \ref{data-bounded-asu} holds. The procedure $\hat \cM$ in Algorithm~\ref{alg1} is $(\varepsilon',0)$-DP where 
$$\varepsilon'=\log\left(e^{\varepsilon}+\frac{q^{T}}{\delta_0}\right) - \log\left(1-q^{T}\right), ~~\delta_0=\frac{\exp(-n\varepsilon b_y^2/(2\Delta))}{{{p\choose s}}},~~q=\frac{R}{{p \choose s}}.$$
    In particular, if $T=\infty$, the procedure $\hat \cM$ is $(\varepsilon,0)$-DP.
\end{theorem}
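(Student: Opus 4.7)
The plan is to reduce the analysis of $\hat\cM$ to an exponential mechanism applied to a truncated score, and then quantify the extra deviation introduced by the finite-$T$ rejection subroutine $\cM_0$. Concretely, I would first show that the truncated objective $\hat\cR$ in~\eqref{Rhat} still has global sensitivity at most $\Delta$, then apply Lemma~\ref{exp-lemma} to obtain $(\varepsilon,0)$-DP for the exponential mechanism $\hat\cA_E$ with score $\hat\cR$, and finally perform a pointwise comparison of $\p(\hat\cM(\cD) = S)$ with $\p(\hat\cA_E(\cD) = S)$ to absorb the finite-$T$ rejection error into the $\varepsilon'$ bound.

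The key technical step is the sensitivity of $\hat\cR$. I would rewrite $\hat\cR(S,\cD) = \min\{\cR(S,\cD),\, \cR(\hat S_R(\cD),\cD)\}$: when $S$ is among the top-$R$ supports for $\cD$ the first argument is the smaller one, and when $S$ lies outside the top-$R$ the second one is, so this agrees with~\eqref{Rhat}. Each of the two arguments is $\Delta$-sensitive to a neighboring change---the first by the definition of $\Delta$, and the second because the $R$-th order statistic of $\binom{p}{s}$ scalars is $1$-Lipschitz under coordinate-wise $\ell^\infty$ perturbations, each of which is bounded by $\Delta$. Since $\min$ is also $1$-Lipschitz, $\hat\cR(S,\cdot)$ is $\Delta$-sensitive and Lemma~\ref{exp-lemma} yields $(\varepsilon,0)$-DP for $\hat\cA_E$. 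A second observation, needed below, is that $\cR(S,\cD) \leq n b_y^2$ (take $\B\beta = 0$ and apply Assumption~\ref{data-bounded-asu}), so every normalized exponential-mechanism probability satisfies $\tilde P^\cD(S) := \p(\hat\cA_E(\cD) = S) \geq \exp(-\varepsilon n b_y^2/(2\Delta))/\binom{p}{s} = \delta_0$.

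Next, summing over the two branches of Algorithm~\ref{alg1} gives an explicit formula for $P^\cD(S) := \p(\hat\cM(\cD) = S)$: for $S$ in the top-$R$ for $\cD$, $P^\cD(S) = \tilde P^\cD(S) + \p_0(R+1)\, q^T/R$, while for the remaining $S$, $P^\cD(S) = \tilde P^\cD(S)(1-q^T)$, where $q = R/\binom{p}{s}$. The additive term $q^T/R$ captures the event that $\cM_0$ exhausts all $T$ uniform draws without falling in the complement of the top-$R$, after which the final draw is uniform on the top-$R$. A case analysis on whether $S$ lies inside or outside the top-$R$ separately for $\cD$ and $\cD'$ then yields the ratio bound. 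The tightest case is when $S$ is in the top-$R$ for $\cD$ but outside the top-$R$ for $\cD'$, where
\[
\frac{P^\cD(S)}{P^{\cD'}(S)} \;\leq\; \frac{\tilde P^\cD(S) + q^T/R}{\tilde P^{\cD'}(S)(1-q^T)} \;\leq\; \frac{e^\varepsilon + q^T/\delta_0}{1-q^T} \;=\; e^{\varepsilon'},
\]
using $\p_0(R+1) \leq 1$, the $\hat\cA_E$-DP bound $\tilde P^\cD(S)/\tilde P^{\cD'}(S) \leq e^\varepsilon$, and $\tilde P^{\cD'}(S) \geq \delta_0$. The three remaining cases each drop either the $(1-q^T)^{-1}$ factor or the $q^T/\delta_0$ term and are thus controlled by the same bound. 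For $T = \infty$, $q^T \to 0$ (since $q < 1$ whenever $R < \binom{p}{s}$), and $\varepsilon' \to \varepsilon$, recovering the standard exponential-mechanism guarantee.

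The main obstacle I anticipate is the sensitivity step: $\hat\cR$ is not a fixed score on $\cO \times \cZ^n$ because the top-$R$ set itself depends on the dataset, so naive sensitivity analysis fails. The $\min$-of-score-and-order-statistic reformulation is what makes the Lipschitz argument go through, and I would want to carefully verify both that it is equivalent to~\eqref{Rhat} for every $S$ and that the $R$-th order statistic is indeed $\Delta$-Lipschitz in the dataset. Once the sensitivity claim is settled, the rest is rejection-sampling bookkeeping.
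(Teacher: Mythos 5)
Your proposal is correct and reaches the paper's exact constant $\varepsilon'=\log\bigl(e^{\varepsilon}+q^{T}/\delta_0\bigr)-\log\bigl(1-q^{T}\bigr)$, and its overall skeleton matches the paper's: establish $\Delta$-sensitivity of the truncated score $\hat\cR$, apply Lemma~\ref{exp-lemma} to get $(\varepsilon,0)$-DP for $\hat\cA_E$, lower-bound every $\p(\hat\cA_E(\cD)=S)$ by $\delta_0$ using $0\le\hat\cR\le nb_y^2$, and absorb the finite-$T$ slack $q^T$ multiplicatively. Where you genuinely diverge is the sensitivity step, which is also where the paper invests most of its effort: the paper proves $|\cR(\hat S_k(\cD),\cD)-\cR(\hat S_k(\cD'),\cD')|\le\Delta$ (Lemma~\ref{s_hat_lemma}) by a three-case combinatorial argument whose crux is exhibiting a pivot support $S_0$ with $\cR(S_0,\cD)\ge\cR(\hat S_k(\cD),\cD)$ and $\cR(S_0,\cD')\le\cR(\hat S_k(\cD'),\cD')$, and then handles $\hat\cR$ itself by four more cases (Lemma~\ref{rhatsensitive}). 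You instead rewrite $\hat\cR(S,\cD)=\min\{\cR(S,\cD),\,\cR(\hat S_R(\cD),\cD)\}$ and invoke two standard Lipschitz facts: order statistics are $1$-Lipschitz in $\ell^\infty$ (so the $R$-th smallest of the ${p\choose s}$ scores moves by at most $\Delta$), and $\min$ is $1$-Lipschitz. This is shorter, subsumes both of the paper's case-analysis lemmas at once, and has the side benefit of being manifestly tie-robust, since the $\min$ formula depends only on the $R$-th order-statistic value and not on how the argmin in~\eqref{Sk_definition} breaks ties. Your exact branch probabilities---$\p(\hat\cM(\cD)=S)=\p(\hat\cA_E(\cD)=S)+\p_0(R+1)\,q^T/R$ inside the top-$R$ and $\p(\hat\cA_E(\cD)=S)(1-q^T)$ outside---are a slight sharpening of the paper's two-sided sandwich in Lemma~\ref{lemma3} (upper bound $+\,q^T$ rather than $+\,\p_0(R+1)q^T/R$), though the sharpening is not exploited: your worst case drops $\p_0(R+1)\le 1$ and $1/R\le 1$, after which your four-case ratio analysis collapses to exactly the paper's chained inequality $\p(\hat\cM(\cD)=S)\le\bigl(e^{\varepsilon}+q^T/\delta_0\bigr)\p(\hat\cA_E(\cD')=S)\le\frac{1}{1-q^T}\bigl(e^{\varepsilon}+q^T/\delta_0\bigr)\p(\hat\cM(\cD')=S)$. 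Two small points to note in a write-up: the bound $\cR(S,\cD)\le nb_y^2$ via $\B\beta=\bm{0}$ is specific to the least-squares loss (the paper's final proof makes the same specialization, consistent with the $b_y^2$ appearing in $\delta_0$), and since the outcome space is finite, the pointwise ratio bound extends to all events by summation, which both you and the paper use implicitly.
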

   Theorem \ref{dp-thm} shows that, regardless of the choice of $R$, as $T \rightarrow \infty$, we have that $\epsilon' \rightarrow \epsilon$. However, we note that $\epsilon'$ \emph{increases} with $R$, so there is more privacy loss with increasing $R$, as we have that $$\frac{\partial \left(\log\left(e^{\varepsilon}+\frac{q^{T}}{\delta_0}\right) - \log\left(1-q^{T}\right) \right) }{\partial q} =
T\,q^{T-1}\!
\left[
   \frac{1}{\delta_{0}\bigl(e^{\varepsilon}+q^{T}/\delta_{0}\bigr)}
   +\frac{1}{1-q^{T}}
\right] > 0.$$
The privacy loss is in contrast to the effect on accuracy, as we note that a larger $R$ in Algorithm~\ref{alg1} should intuitively lead to better support recovery. We formalize this intuition in Lemma \ref{increasingR} in the appendix. For $k>R$, we underestimate $\cR(\hat{S}_k,\cD)$ with $\cR(\hat{S}_R,\cD)$, consequently increasing the probability mass given to supports $\hat{S}_k$ in the procedure $\hat\cA_E$. This reduces the probability mass for the best support $\hat{S}_1$. Therefore, in practice, we like to choose a larger $R$ to explore the objective landscape better, however, a very large $R$ can make the computation slower.

The sampling procedure in Algorithm~\ref{alg1} only requires sampling from $\p_0$ (which is supported on $R+1$ different values), and sampling sparse supports from a uniform distribution (in procedure $\cM_0$), which can be done efficiently. Therefore, this procedure circumvents the need to sample from a non-uniform distribution with exponentially large support. Importantly, Algorithm~\ref{alg1} satisfies pure $(\varepsilon',0)$-DP, with $\epsilon'=\epsilon$ as $T \rightarrow \infty$. To our knowledge, no such algorithm exists for BSS that can scale to problems with tens of thousands of variables.

\subsubsection{Mistakes Method}
\label{mistakes-method}
Our second proposed mechanism assigns probabilities based on the number of mistakes from the optimal solution. Namely, we define $\Tilde{S}_0(\cD) = \hat{S}_1(\cD) = \argmin_S \cR(S,\cD)$, and then we proceed to partition the ${p \choose s}-1$ supports based on the number of mistakes from $\Tilde{S}_0(\cD)$. We denote the partition $P_1(\cD), P_2(\cD),..., P_s(\cD)$. Let $P_0(\cD) = \{ \Tilde{S}_0(\cD) \}$. We then have that for $i \in [s]$ 
\begin{equation}
    \label{mistakes-opt}
    \Tilde{S}_i(\cD) = \argmin_{S \in P_i(\cD)}\cR(S,\cD).
\end{equation}
Our mistakes method, denoted $\Tilde \cM$, assigns probabilities according to the element of the partition that a support belongs to. Namely, if $S \in P_k(\cD)$ for $k\in \{0,1,...,s\}$, we have that $$\P[\Tilde \cM(\cD) = S] = \frac{\exp( \frac{-\epsilon \cR(\Tilde{S}_k(\cD),\cD)}{2\Delta})}{\sum_{i=0}^s |P_i(\cD)| \exp( \frac{-\epsilon \cR(\Tilde{S}_i(\cD),\cD)}{2\Delta}) } = \frac{\exp( \frac{-\epsilon \cR(\Tilde{S}_k(\cD),\cD)}{2\Delta})}{\sum_{i=0}^s {p-s \choose i} {s \choose i} \exp( \frac{-\epsilon \cR(\Tilde{S}_i(\cD),\cD)}{2\Delta}) }$$
For $S \in P_k(\cD)$, define $\Tilde \cR(S,\cD) = \cR(\Tilde{S}_k(\cD),\cD)$.

Below, we show this method is $(\epsilon,0)$-DP under a lower bound assumption on the gap in objective value between $\hat{S}_1(\cD)$ and  $\hat{S}_2(\cD)$.

\begin{theorem}[Privacy for mistakes method]
\label{privacy-mistakes-thm}
    Suppose assumption \ref{data-bounded-asu} holds and that  $\cR(\hat{S}_2(\cD),\cD) - \cR(\hat{S}_1(\cD),\cD) > 2\Delta $. Then, the mistakes method is $(\epsilon,0)$-differentially private.

\end{theorem}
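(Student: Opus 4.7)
The plan is to show that the mistakes method is essentially an exponential mechanism with respect to the objective $\Tilde \cR$, but to do so we must first verify that the \emph{partition itself} is stable under neighboring datasets—otherwise the analysis breaks because the sampling distribution is defined in terms of $\hat{S}_1(\cD)$, a data-dependent quantity.

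The first step is to use the separation assumption $\cR(\hat{S}_2(\cD),\cD) - \cR(\hat{S}_1(\cD),\cD) > 2\Delta$ to argue that $\hat{S}_1(\cD') = \hat{S}_1(\cD)$ for every dataset $\cD'$ neighboring $\cD$. Concretely, for any $S \neq \hat{S}_1(\cD)$, we have $\cR(S,\cD) \geq \cR(\hat{S}_2(\cD),\cD) > \cR(\hat{S}_1(\cD),\cD) + 2\Delta$. Applying global sensitivity $|\cR(S,\cD)-\cR(S,\cD')| \leq \Delta$ twice yields
$$\cR(S,\cD') \geq \cR(S,\cD) - \Delta > \cR(\hat{S}_1(\cD),\cD) + \Delta \geq \cR(\hat{S}_1(\cD),\cD'),$$
so $\hat{S}_1(\cD)$ is still the unique minimizer under $\cD'$. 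Since the partition $P_0,\dots,P_s$ depends on $\cD$ only through $\hat{S}_1(\cD)$ (via the Hamming distance to this reference support), we conclude $P_k(\cD')=P_k(\cD)$ for all $k$, and in particular a fixed $S$ lies in the same block $P_k$ under both $\cD$ and $\cD'$.

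Next I would bound the change in the per-block scores $\cR(\Tilde{S}_k(\cdot),\cdot)$. The key observation is that although $\Tilde{S}_k(\cD)$ and $\Tilde{S}_k(\cD')$ may differ (they are argmins over the same set $P_k$ but under different objectives), the optimal values can be compared using argmin optimality together with global sensitivity. Indeed,
$$\cR(\Tilde{S}_k(\cD'),\cD') \leq \cR(\Tilde{S}_k(\cD),\cD') \leq \cR(\Tilde{S}_k(\cD),\cD) + \Delta,$$
and the symmetric inequality gives $|\cR(\Tilde{S}_k(\cD),\cD)-\cR(\Tilde{S}_k(\cD'),\cD')| \leq \Delta$. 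Thus the \emph{score} $\Tilde \cR$ used by the mistakes mechanism still has global sensitivity at most $\Delta$, even though it is defined through a data-dependent argmin.

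Finally, I would apply the standard exponential-mechanism ratio argument. For any $S \in P_k$, the ratio $\P[\Tilde \cM(\cD)=S]/\P[\Tilde \cM(\cD')=S]$ factors as a numerator ratio
$$\exp\!\Bigl(\tfrac{\epsilon}{2\Delta}\bigl(\cR(\Tilde{S}_k(\cD'),\cD')-\cR(\Tilde{S}_k(\cD),\cD)\bigr)\Bigr) \leq e^{\epsilon/2}$$
and a normalizer ratio, where both normalizers share the identical combinatorial weights $\binom{p-s}{i}\binom{s}{i}$ (because the partition sizes coincide by Step 1) and each exponential term differs by a factor of at most $e^{\epsilon/2}$; hence the normalizer ratio is bounded term-by-term and overall by $e^{\epsilon/2}$. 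Multiplying gives $e^{\epsilon}$, which yields pure $(\epsilon,0)$-DP. The main obstacle is the first step: without the gap assumption, $\hat{S}_1$ could flip on a neighbor, causing the partitions (and therefore the entire meaning of $\Tilde \cR$) to reshuffle; the $2\Delta$ separation is precisely what buys partition stability and reduces the analysis to an exponential mechanism with sensitivity $\Delta$.
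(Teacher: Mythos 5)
Your proposal is correct and follows essentially the same route as the paper's proof: the $2\Delta$ separation forces $\hat{S}_1(\cD)=\hat{S}_1(\cD')$ (you argue this directly, the paper by contradiction), partition stability then reduces the problem to showing $\Tilde{\cR}$ has global sensitivity at most $\Delta$ via argmin optimality over the common block $P_k$, exactly as in the paper. The only cosmetic difference is that you re-derive the standard exponential-mechanism ratio bound inline, whereas the paper concludes by invoking Lemma~\ref{exp-lemma} directly.
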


We note that, unlike the privacy of our top-$R$ method in Theorem \ref{dp-thm}, which requires no additional assumptions aside from assumption \ref{data-bounded-asu}, Theorem \ref{privacy-mistakes-thm} requires stronger conditions for the privacy of the mistakes method. In Lemma \ref{2delta-lemma} in the appendix, we show that, under the sufficient condition that $\tau \gtrsim \frac{s \log p}{n}$, where $\tau$ is defined in the following section, and the additional assumptions \ref{beta-bounded-asu}-\ref{sparsity-asu}, we have that the inequality $\cR(\hat{S}_2(\cD),\cD) - \cR(\hat{S}_1(\cD),\cD) > 2\Delta $ holds with high probability.

\begin{remark}
While the privacy of our mistakes method relies on an additional assumption that occurs with high probability (as shown in Lemma \ref{2delta-lemma} in the appendix), providing a privacy guarantee with additional assumptions is not uncommon in the literature. For example, the privacy guarantees of \cite{dpbss}, which is the closest competitor to our method, depends on assumptions that hold with high probability. More specifically, the privacy proof of the Markov Chain Monte Carlo (MCMC) algorithm in \cite{dpbss} relies on the assumption that the mixing of the Markov Chain used for sampling with its stationary distribution has happened. However, this mixing can only be guaranteed with high probability, and under additional assumptions on the underlying model—see Theorem 4.3 of \cite{dpbss} for more details. In contrast, our top-R method is always private (assuming $b_x,b_y$ are finite), and our mistakes method is private under assumptions that are similar to the ones in Theorem 4.3 of \cite{dpbss}.

As another example, \cite{stabilitylasso} uses the stability of Lasso, to present a DP method for support recovery in sparse linear regression. However, the stability of Lasso only holds under certain assumptions on the data, such as the boundedness of the noise and restricted strong convexity. Such assumptions might only hold with high probability in practice, resulting in privacy guarantees that hold with high probability. For more details, we refer to Theorem 8 of \cite{stabilitylasso}.

\end{remark}

\section{Statistical Theory}
\label{stat-theory}
For the theoretical results in this section, we focus on the setting of BSS. Consider the model $$\B y = \B X \B \beta^* + \B \epsilon$$
where $\{\epsilon_i \}_{i\in [n]}$ are i.i.d. zero-mean sub-Gaussian random variables with parameter $\sigma$, and the feature vector $\B \beta^*$ is unknown but is assumed to be $s$-sparse (i.e. its support size $|S^*| =|\{i:\beta_i^*\neq 0\}|= s \ll p$). In the remainder of this section, we provide sufficient conditions for our proposed methods to recover $S^*$ with high probability. We first state our additional assumptions.

\begin{asu}
\label{beta-bounded-asu}
There exists positive constant $M$ such that $\| \B \beta^* \|_2 \leq M$.
\end{asu}
\begin{asu}
\label{rip-asu}
There exist positive constants $\kappa_{-},\kappa_{+}$ such that, for all $S$ such that $|S| = s$, we have $$\kappa_{-} \leq \lambda_{\min}(\B X_S^\top \B X_S/n) \leq \lambda_{\max}(\B X_S^\top\B X_S/n) \leq \kappa_{+}.$$ 
\end{asu}
\begin{asu}
\label{sparsity-asu}
The sparsity level $s$ follows the inequality $s \leq n/\log p$, and $p \geq 3$.
\end{asu}
Assumption \ref{beta-bounded-asu} tells that the true parameter $\B \beta^*$ lies inside an $\ell_2$ ball. Similar boundedness assumptions
are fairly standard in the DP literature~\citep{wang2018revisiting, lei2018differentially,cai2021cost}. Assumption \ref{rip-asu} is the Sparse Riesz
Condition (SRC), which is a well-known
assumption in the high-dimensional statistics literature ~\citep{lassotheory2,huang2018constructive, meinshausen2009lasso}. Finally, Assumption \ref{sparsity-asu} essentially assumes that $s = o(n)$, i.e., sparsity grows slowly relative to sample size.

Define the set of supports that make $t \in [s]$ mistakes from the true support as $$\cA_t = \{S \subset [p] : |S| = s, |S \setminus S^*| = t \}.$$
Let $ \hat{\B \Sigma} = n^{-1} \B X^T \B X$ be the sample covariance and $\hat{\B \Sigma}_{S_1,S_2}$ be the submatrix of $\hat{\B \Sigma}$ with row indices in $S_1$ and column indices in $S_2$. Let $\B P_{X_{S}} = \B X_S(\B X_S^T \B X_S)^{-1} \B X_S^T$ denote the projection to the column space of $\B X_S$. Then we have that $$\B y = \B X_{S^*}\B\beta_{S^*}^* + \B \epsilon = \B P_{X_{S}}\B X_{S^*}\B\beta_{S^*}^* + (\B I_n -\B P_{X_{S}})\B X_{S^*}\B\beta_{S^*}^* + \B \epsilon$$ and $(\B I_n -\B P_{X_{S}})\B X_{S^*}\B\beta_{S^*}^*$ describes the part of the signal that cannot be linearly explained by $\B X_S$. Define also $$\hat{\B D}(S) = \hat{\B \Sigma}_{S^* \setminus S,S^* \setminus S}- \hat{\B \Sigma}_{S^* \setminus S,S} \hat{\B \Sigma}_{S,S}^{-1} \hat{\B \Sigma}_{S,S^* \setminus S}.$$
which is the covariance of the residuals of $\B X_{S^* \setminus S}$ after being regressed on $\B X_S$.
We have that $\frac{1}{n}\norm{(\B I_n -\B P_{X_{S}})\B X_{S^*\setminus S}\B\beta_{S^*\setminus S}^*}_2^2 = \B \beta_{S^*\setminus S}^{*\top} \hat{\B D}(S) \B \beta_{S^*\setminus S}^* $ which we can intuitively consider as the discrimination margin between $S$ and the true support $S^*$. The larger this quantity, the easier it is for BSS to discriminate between $S^*$ and any other candidate model $S$.

We now introduce the central quantity of interest in analyzing support recovery, the \emph{identifiability margin}, defined as $$\tau = \min_{S \in \cup_{t=1}^s \cA_t} \frac{\B \beta_{S^*\setminus S}^{*\top} \hat{\B D}(S) \B \beta_{S^*\setminus S}^*}{|S^* \setminus S|}.$$

We observe that, the more correlated the features, the closer $\tau$ is to 0, and harder it is for BSS to distinguish between the true model and any other candidate support, so exact support recovery is harder. If the features are more uncorrelated, $\tau$ increases so exact support recovery is easier.

This intuition is made rigorous in Theorem 2.1 of \cite{guo2020best}, where it is shown that $\tau \gtrsim \frac{\log p}{n}$ is a sufficient condition to have $$\{S^*\} = \text{arg}\min_{S \in \cO} \cR_{ols}(S,\cD),~~\text{where}~~ \cR_{ols}(S,\cD) = \min_{\B \beta \in \R^s}\|\B y - \B X_S\B\beta\|_2^2$$ with high probability, i.e. to have $S^*$ be the unique minimizer for the BSS problem (with unconstrained $\ell_2$ norm on $\B \beta$) with high probability. Such a theorem offers us support recovery guarantees for the non-private $\ell_0$-sparse ordinary least squares problem. 

We now transition to the private setting of the $\ell_2$-constrained version of BSS, and offer support guarantees for our proposed methods under this setting. In the following theorem, we provide sufficient conditions for our private top-$R$ method to recover the true support with high probability.

\begin{theorem}[Support recovery for top-$R$ method] \label{thmsupportrecovery-topR}
Suppose that assumptions \ref{data-bounded-asu}-\ref{sparsity-asu} hold.
Set $r \geq (\frac{\kappa_+}{\kappa_-})M + 4\frac{\sigma b_x}{\kappa_-}$. Set $\Delta = {2b_y^2  + 2b_x^2 r^2 s}$. Then, there exists universal constant $C > 0$ such that, whenever $$\tau \geq \max \{ C\sigma^2, \frac{8\Delta}{\epsilon}s \}\frac{ \log p}{n},  $$ we have that $$\p(\hat \cM(\cD)=S^*) \geq \frac{1-10sp^{-2}}{1+p^{-s}}.$$
\end{theorem}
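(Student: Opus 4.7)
The plan is to decompose the analysis into a deterministic piece, conditional on a good event $E$ on which the OLS residuals $\cR(\cdot,\cD)$ exhibit a uniform gap above $\cR(S^*,\cD)$, and a randomized piece that lower-bounds the chance the sampling distribution $\p_0$ in Algorithm~\ref{alg1} outputs index $1$. When the latter happens, $\hat\cM$ returns $\hat S_1=S^*$ without ever invoking the fallback $\cM_0$, so on $E$ one has $\P(\hat\cM(\cD)=S^*\mid E)\ge \p_0(1)$; I will show $\P(E)\ge 1-10sp^{-2}$ and $\p_0(1)\ge 1/(1+p^{-s})$ on $E$, and then combine by the law of total probability.

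For the good event, I first check that the stated $r\ge(\kappa_+/\kappa_-)M+4\sigma b_x/\kappa_-$ makes the ridge constraint in~\eqref{rsd-def} inactive uniformly over $|S|=s$: bounding the signal part of the unconstrained OLS estimator $(\B X_S^\top\B X_S)^{-1}\B X_S^\top\B y$ via Assumption~\ref{rip-asu} and the noise part via a sub-Gaussian maximal inequality on $\B X^\top\B\epsilon/n$ (using Assumptions~\ref{data-bounded-asu} and~\ref{sparsity-asu}) shows $\|\hat{\B\beta}_S\|_2\le r$ for every such $S$ with high probability. Thus $\cR(S,\cD)=\|(\B I-\B P_{X_S})\B y\|_2^2$, and
\begin{equation*}
\cR(S,\cD)-\cR(S^*,\cD)=\|(\B I-\B P_{X_S})\B X_{S^*\setminus S}\B\beta^*_{S^*\setminus S}\|_2^2+2\langle(\B I-\B P_{X_S})\B X_{S^*\setminus S}\B\beta^*_{S^*\setminus S},\B\epsilon\rangle+\B\epsilon^\top(\B P_{X_{S^*}}-\B P_{X_S})\B\epsilon.
\end{equation*}
The leading term is at least $n\tau\,|S^*\setminus S|$ by the definition of $\tau$; a sub-Gaussian maximal inequality for the cross term and a Hanson--Wright bound for the quadratic term, union-bounded over $\bigcup_{t=1}^s\cA_t$ using $|\cA_t|\le(ep)^{2t}$, absorb both noise contributions into a fraction of the signal whenever $\tau\ge C\sigma^2(\log p)/n$ with $C$ a large enough absolute constant. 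This is the $\ell_2$-constrained restatement of Theorem~2.1 of~\cite{guo2020best}, and yields an event $E$ with $\P(E)\ge 1-10sp^{-2}$ on which $\cR(S,\cD)-\cR(S^*,\cD)\ge n\tau\,|S^*\setminus S|$ for every $S\ne S^*$ simultaneously. In particular $\hat S_1(\cD)=S^*$ on $E$.

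Plugging the gap into $\p_0$, write $\Delta_k:=\cR(\hat S_k,\cD)-\cR(S^*,\cD)$. Any $k\ge 2$ has $\hat S_k\ne S^*$ and therefore $\Delta_k\ge n\tau$, while the hypothesis $\tau\ge 8\Delta s(\log p)/(n\epsilon)$ gives $\epsilon n\tau/(2\Delta)\ge 4s\log p$ and hence $e^{-\epsilon\Delta_k/(2\Delta)}\le p^{-4s}$ for every $k\ge 2$. Substituting into~\eqref{p0-dist},
\begin{equation*}
\p_0(1)=\frac{1}{1+\sum_{k=2}^{R}e^{-\epsilon\Delta_k/(2\Delta)}+\bigl(\binom{p}{s}-R\bigr)e^{-\epsilon\Delta_R/(2\Delta)}}\ge\frac{1}{1+\binom{p}{s}\cdot p^{-4s}}\ge\frac{1}{1+p^{-3s}}\ge\frac{1}{1+p^{-s}},
\end{equation*}
using $\binom{p}{s}\le p^s$ and $p\ge 3$. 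Combining with $\P(E)\ge 1-10sp^{-2}$ via the law of total probability yields $\P(\hat\cM(\cD)=S^*)\ge(1-10sp^{-2})/(1+p^{-s})$.

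The main obstacle is Step~2: one must track the signal/noise balance uniformly over $\bigcup_t\cA_t$ with tight enough constants that the final gap is bounded below by the full $n\tau\,|S^*\setminus S|$ (rather than a fractional $cn\tau\,|S^*\setminus S|$ with $c<1$), because this clean factor of $1$ is what makes the subsequent exponential-mechanism tail bound in Step~3 collapse to $p^{-4s}$ under exactly the stated constant $8$ in $\tau\ge 8\Delta s\log p/(n\epsilon)$. A subtler but routine issue in Step~1 is that a single $r$ must dominate $\|\hat{\B\beta}_S\|_2$ uniformly across all $\binom{p}{s}$ supports at once, which is what forces the sparsity condition $s\le n/\log p$ in Assumption~\ref{sparsity-asu} through the corresponding maximal inequality on $\B X^\top\B\epsilon$.
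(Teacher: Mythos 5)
Your overall architecture is the paper's: a good event of probability $1-10sp^{-2}$ assembled from ridge-constraint inactivity (the paper's Lemma~\ref{unconstrainedlemma}, probability $1-2p^{-7}$) and a uniform objective gap over $\bigcup_t\cA_t$ (Lemma~\ref{guo-lemma}, via Theorem~2.1 of \cite{guo2020best}, probability $1-8sp^{-2}$), followed by an exponential-mechanism tail bound and the law of total probability. But your Step~2 asserts something false: an event of probability $1-10sp^{-2}$ on which $\cR(S,\cD)-\cR(S^*,\cD)\ge n\tau\,|S^*\setminus S|$, with the \emph{full} margin constant $1$. In your own decomposition the gap equals the margin plus a mean-zero cross term plus a quadratic term; at a support attaining the minimum in the definition of $\tau$ the margin is exactly $nt\tau$, and the fluctuating noise terms push the gap below $nt\tau$ with probability bounded away from zero, so no concentration argument can deliver the full-factor event at the claimed level. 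This is visible in Theorem~2.1 of \cite{guo2020best} itself, which only gives gap $\ge\eta\cdot(\text{margin})$ for $\eta<1$, with constant $C=\bigl(4\xi/(1-\eta)\bigr)^2$ diverging as $\eta\to 1$; the paper instantiates $\eta=\tfrac12$, yielding the event $\cR(S,\cD)-\cR(S^*,\cD)\ge\tfrac12 nt\tau$. Your write-up is even internally inconsistent on this point: you first say the noise is ``absorbed into a fraction of the signal'' (which forces a gap strictly below the margin) and then claim the factor-$1$ bound. Moreover, your closing ``main obstacle'' diagnosis --- that the clean factor $1$ is what the constant $8$ requires --- is a red herring: with the half factor, $\tau\ge\frac{8\Delta}{\epsilon}s\frac{\log p}{n}$ gives $e^{-\epsilon G/(2\Delta)}\le e^{-\epsilon n\tau/(4\Delta)}\le p^{-2s}$, and $\binom{p}{s}\le p^s$ bounds the total spurious mass by $p^{-s}$, which is exactly the stated $1/(1+p^{-s})$. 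So your proof becomes correct, with the same constants, once $n\tau t$ is replaced by $\tfrac12 n\tau t$ throughout; insisting on the factor $1$ would instead sink the probability estimate.

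Where you genuinely depart from the paper, the change is sound and mildly advantageous. You lower-bound $\P(\hat\cM(\cD)=S^*\mid E)$ directly by $\p_0(1)$, observing that drawing $a(\cD)=1$ makes $\hat\cM$ return $\hat S_1(\cD)=S^*$ without invoking $\cM_0$; the paper instead analyzes the idealized mechanism $\hat\cA_E$, first at $R=2$, extends to general $R$ through the monotonicity argument of Lemma~\ref{increasingR}, and transfers back to $\hat\cM$ via Lemma~\ref{lemma3} in the limit $T\to\infty$. Your route dispenses with all of that machinery and, as a bonus, yields the stated bound for every finite $T$, not only $T=\infty$.
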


\textbf{Comparison with previous work:} Theorem \ref{thmsupportrecovery-topR} shows that, using the appropriate global sensitivity bound and lower bound on $r$, a sufficient condition for recovering the true support with high probability is $\tau \gtrsim \max \{\frac{ \log p}{n}, \frac{s^2 \log p}{n \epsilon}\} $, compared to $\tau \gtrsim \max \{\frac{ \log p}{n}, \frac{s \log p}{n \epsilon}\} $ for the exponential mechanism applied to $\cR(S,\cD)$ as in Theorem 3.5 of \cite{dpbss}. Observe that, in a low privacy regime, the $\frac{\log p}{n}$ term dominates, aligning with the \cite{guo2020best} sufficient condition in the non-private setting. The extra factor of $s$ in the second term of our condition is expected, as we are not making any additional assumptions on the choice of $R$ or on the number of mistakes of the enumerated supports.

In the following theorem, we provide \emph{weaker} sufficient conditions for our private mistakes method to recover the true support with high probability.

\begin{theorem}[Support recovery for mistakes method]
    \label{thmsupportrecovery-mistakes}
Suppose that assumptions \ref{data-bounded-asu}-\ref{sparsity-asu} hold.
Set $r \geq (\frac{\kappa_+}{\kappa_-})M + 4\frac{\sigma b_x}{\kappa_-}$. Set $\Delta = {2b_y^2  + 2b_x^2 r^2 s}$. Then, there exists a universal constant $C > 0$ such that, whenever $$\tau \geq \max \{ C\sigma^2 s, \frac{16\Delta}{\epsilon} \}\frac{ \log p}{n},  $$ we have that $$\p(\Tilde \cM(\cD)=S^*) \geq \frac{1-18sp^{-2}}{1+2 p^{-2}}.$$
\end{theorem}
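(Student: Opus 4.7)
My plan mirrors the structure of Theorem~\ref{thmsupportrecovery-topR}: first establish a high-probability good event $\cE$ on which (a) the non-private BSS oracle satisfies $\hat{S}_1(\cD) = S^*$, so that $\Tilde{S}_0(\cD) = S^*$ and the partition collapses to $P_t(\cD) = \cA_t$; (b) the $\ell_2$ constraint is slack at the OLS fit on every size-$s$ support $S$, yielding $\cR(S,\cD) = \|(\B I_n - \B P_{X_S})\B y\|_2^2$; and (c) uniformly over $t\in[s]$, $\min_{S \in \cA_t}\cR(S,\cD) - \cR(S^*,\cD) \geq \tfrac{1}{2}n\tau t$. Once $\cE$ is in hand, the mistakes distribution can be evaluated deterministically, and the claimed probability drops out.

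For (b), the decomposition $\hat{\B\beta}_S = (\B X_S^{\top}\B X_S)^{-1}\B X_S^{\top}(\B X_{S^*}\B\beta^* + \B\epsilon)$ together with Assumption~\ref{rip-asu} and sub-Gaussian tails on $\B X_S^{\top}\B\epsilon$ yields $\|\hat{\B\beta}_S\|_2 \leq (\kappa_+/\kappa_-)M + 4\sigma b_x/\kappa_-$, matching the hypothesized lower bound on $r$. For (c), I would expand
\[
\cR(S,\cD) - \cR(S^*,\cD) = \|(\B I_n - \B P_{X_S})\B X_{S^*}\B\beta^*\|_2^2 + 2(\B X_{S^*}\B\beta^*)^{\top}(\B I_n - \B P_{X_S})\B\epsilon + \B\epsilon^{\top}(\B P_{X_{S^*}} - \B P_{X_S})\B\epsilon,
\]
bound the first term below by $n\tau t$ via the definition of $\tau$, control the linear cross term by sub-Gaussian concentration, and control the quadratic term by Hanson--Wright; a union bound over $|\cA_t| \leq \binom{p-s}{t}\binom{s}{t} \leq p^{2t}$ supports followed by a sum over $t \in [s]$ gives $\P(\cE^c) \leq 18 s p^{-2}$ under $\tau \geq C\sigma^2 s(\log p)/n$. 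The extra factor of $s$ in the identifiability condition is precisely what is needed to absorb the $\sigma^2\sqrt{s\log p}$-sized quadratic fluctuation coming from the trace-$s$ projections.

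On $\cE$, dividing through by the mass on $S^*$ and invoking $\tau \geq 16\Delta(\log p)/(\epsilon n)$ with the gap in (c) yields $\epsilon(\cR(\Tilde{S}_t,\cD) - \cR(S^*,\cD))/(2\Delta) \geq 4t \log p$, so
\[
\P[\Tilde{\cM}(\cD) = S^* \mid \cE] = \frac{1}{1 + \sum_{t=1}^{s}\binom{p-s}{t}\binom{s}{t}\exp\!\left(-\dfrac{\epsilon(\cR(\Tilde{S}_t,\cD) - \cR(S^*,\cD))}{2\Delta}\right)} \geq \frac{1}{1 + \sum_{t=1}^{s} p^{2t}\cdot p^{-4t}} \geq \frac{1}{1 + 2 p^{-2}}
\]
for $p \geq 3$. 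Combining $\P(\cE) \geq 1-18sp^{-2}$ with the conditional bound yields the claim.

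I expect the main obstacle to be the uniform gap statement (c). The linear cross term is routine via its sub-Gaussian tail, but the quadratic term $\B\epsilon^{\top}(\B P_{X_{S^*}} - \B P_{X_S})\B\epsilon$ has mean-zero fluctuations of order $\sigma^2\sqrt{s\log p}$ once one union-bounds against $\binom{p}{s}$ supports. This is what forces the extra factor of $s$ in the identifiability requirement, compared to the $\sigma^2(\log p)/n$ scaling that suffices for the noise-free oracle analysis of \cite{guo2020best}. The balance between the $p^{2t}$ combinatorial weight of $\cA_t$ and the $p^{-4t}$ decay inherited from the mistakes partition is what ultimately converts an $s$-dependent identifiability margin into the $s$-free denominator $1 + 2p^{-2}$.
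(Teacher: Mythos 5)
Your proposal is correct and follows essentially the same route as the paper's proof: a good event on which the constrained and unconstrained OLS fits coincide on every size-$s$ support (the paper's Lemma~\ref{unconstrainedlemma}), a uniform gap $\cR(S,\cD)-\cR(S^*,\cD)\geq \tfrac{1}{2}nt\tau$ over $\cA_t$ (the paper's Lemma~\ref{guo-lemma}, obtained via exactly your signal/cross-term/quadratic-term decomposition with sub-Gaussian and Hanson--Wright-type control), then the identical conditional computation $\sum_{t=1}^s \binom{p-s}{t}\binom{s}{t}\exp(-\epsilon n t\tau/(4\Delta)) \leq \sum_{t=1}^s p^{2t}p^{-4t}\leq 2p^{-2}$ for $p\geq 3$, and the law of total probability.

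Two bookkeeping points where you diverge, neither fatal. First, the paper's event $\mathcal{E}$ additionally intersects the $2\Delta$-separation event $\{\forall S\in\cup_t\cA_t,\ \cR_{ols}(S,\cD)-\cR_{ols}(S^*,\cD)>2\Delta\}$ from Lemma~\ref{2delta-lemma}, which (via Theorem~\ref{privacy-mistakes-thm}) certifies on the same event that $\Tilde\cR$ has sensitivity $\Delta$; this event costs $8sp^{-2}$, and the paper's budget is $8sp^{-2}+2p^{-7}+8sp^{-2}\leq 18sp^{-2}$. Since the mistakes distribution is defined with the fixed $\Delta=2b_y^2+2b_x^2r^2s$ whether or not the separation holds, omitting this event is legitimate for the utility claim, and your events alone fail with probability at most $10sp^{-2}\leq 18sp^{-2}$, so your stated bound still follows (in fact with room to spare). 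Second, your diagnosis of where the factor $s$ in $C\sigma^2 s$ comes from is misattributed: with the stratified union bound over $|\cA_t|\leq p^{2t}$ that you yourself describe, the quadratic term for the rank-$O(t)$ projection difference fluctuates at order $\sigma^2 t\log p$ against a margin of $nt\tau/4$, so the gap event needs only $\tau\gtrsim\sigma^2(\log p)/n$ --- precisely what Lemma~\ref{guo-lemma} requires, with no $s$ (a flat union over all $\binom{p}{s}$ supports, which your final paragraph slips into, is what would artificially create the $\sqrt{s\log p}$-scale fluctuation). In the paper the $s$ enters through Lemma~\ref{2delta-lemma}, where the gap must additionally exceed $2\Delta\propto s$; since your argument never needs that, you are proving the theorem under a hypothesis stronger than your proof uses, which is sound, but your stated ``main obstacle'' is not where the $s$ actually originates.
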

\textbf{Comparison with previous work:} Theorem \ref{thmsupportrecovery-mistakes} shows that, using the appropriate global sensitivity bound and lower bound on $r$, a sufficient condition for recovering the true support with high probability is $\tau \gtrsim \max \{\frac{s \log p}{n}, \frac{s \log p}{n \epsilon}\} $, compared to $\tau \gtrsim \max \{\frac{ \log p}{n}, \frac{s \log p}{n \epsilon}\} $ for the exponential mechanism applied to $\cR(S,\cD)$ as in Theorem 3.5 of \cite{dpbss}. Our condition thus matches that in their paper for high privacy regimes. The strength in our result lies in noting that, unlike the exponential mechanism in \cite{dpbss}, which requires access to $\cR(S,\cD)$ for all ${p \choose s}$ supports $S \subset [p]$ such that $|S|=s$, our method only requires access to the $s+1$ supports that solve $\min_{S \in P_i(\cD)}\cR(S,\cD)$ for all $i \in \{ 0,1,...,s\}$.



\section{Optimization Algorithms}
\label{opt-algos}
In this section, we discuss how the top $R$ supports, $\hat{S}_1(\cD),\cdots,\hat{S}_R(\cD)$, which are solutions to the problems in \eqref{Sk_definition}, can be obtained by solving a series of MIPs. 

For clarity, we present the case of least squares objective. However, it is worth noting that a key benefit of our MIP approach is its generalizability across different loss functions. Specifically, the only property that we need to have for the loss function $\ell$ is that it is a convex function of $\beta$. We discuss any needed modifications for the case of hinge loss in Appendix \ref{hingelossappendix}, and our method works with Huber or quantile loss as well, and the MIP algorithm would remain unchanged.

To obtain $\hat{S}_k(\cD)$ for $k\in[R]$ consider:
\begin{align}\label{bss-mip}
(\B z^{(k)},\B\beta^{(k)},\B\theta^{(k)})\in\argmin_{\B z,\B\beta,\B\theta}~~ & ~~ \|\B y-\B X\B\beta\|_2^2 \\
    \text{s.t.} ~~ &~~ \B\beta,\B\theta\in\R^p, \B z\in\{0,1\}^p,~\B\theta\geq0,~\sum_{i=1}^p z_i= s,~\sum_{i=1}^p \theta_i\leq r^2,~ \nonumber \\ & ~~\beta_i^2 \leq \theta_i z_i~~\forall i\in[p], \sum_{i\in\hat{S}_j(\cD)} z_i \leq s-\frac{1}{2},~~ j \in [k-1].\nonumber
\end{align}

In the following proposition, we show that Problems~\eqref{Sk_definition} and \eqref{bss-mip} are equivalent. 
\begin{proposition}
    For $k\geq 1$, $\{i: z^{(k)}_i\neq 0\}= \hat{S}_k(\cD)$.
\end{proposition}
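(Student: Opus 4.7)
The plan is to prove the proposition by exhibiting a value-preserving bijection between the feasible sets of \eqref{bss-mip} and \eqref{Sk_definition}. Since the former is a joint optimization over $(\B z,\B\beta,\B\theta)$ while the latter is a two-level minimization (first over supports of size $s$ avoiding the previously found ones, then over the restricted coefficients inside the $\ell_2$ ball), the whole argument reduces to unpacking what each MIP constraint encodes.

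First I would interpret the constraints on $(\B z,\B\beta,\B\theta)$. The constraint $\B z\in\{0,1\}^p$ with $\sum_i z_i = s$ identifies a size-$s$ support $S(\B z)=\{i: z_i=1\}$. The perspective-style constraint $\beta_i^2 \leq \theta_i z_i$, combined with $\theta_i \geq 0$, does the following case analysis: when $z_i = 0$ it forces $\beta_i = 0$, and when $z_i = 1$ it reduces to $\theta_i \geq \beta_i^2$. Hence for any feasible $\B\beta$ we can set $\theta_i = \beta_i^2$ on $S(\B z)$ and $\theta_i = 0$ off of it, at which point the budget constraint $\sum_i \theta_i \leq r^2$ collapses to $\|\B\beta\|_2^2 \leq r^2$ with $\B\beta$ supported on $S(\B z)$. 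This shows the MIP is equivalent to jointly minimizing $\|\B y - \B X\B\beta\|_2^2$ over supports $S$ of size $s$ and coefficients $\B\beta$ supported on $S$ with $\|\B\beta\|_2^2\leq r^2$, which is exactly $\min_S \cR(S,\cD)$ with $\cR$ from \eqref{rsd-def} under the squared loss.

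Next I would handle the "no-good" cuts $\sum_{i\in\hat{S}_j(\cD)} z_i \leq s-\tfrac{1}{2}$ for $j\in[k-1]$. Because $\B z$ is binary and $|\hat{S}_j(\cD)|=s$, the left-hand side is an integer in $\{0,\dots,s\}$, so the constraint is equivalent to $\sum_{i\in\hat{S}_j(\cD)} z_i \leq s-1$. Together with $|S(\B z)|=|\hat{S}_j(\cD)|=s$, this is in turn equivalent to $S(\B z)\neq \hat{S}_j(\cD)$, matching the exclusion constraint in \eqref{Sk_definition}.

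Finally, I would close the loop by arguing both directions of the bijection and the objective match. Given feasible $(\B z,\B\beta,\B\theta)$ in \eqref{bss-mip}, the pair $(S(\B z),\B\beta_{S(\B z)})$ is feasible in \eqref{Sk_definition} with the same objective value; conversely, given any feasible $S$ in \eqref{Sk_definition} with inner minimizer $\B\beta^\star$, lifting to $z_i=\mathbb{1}[i\in S]$, $\beta_i=\beta^\star_i$ on $S$ (zero off $S$), $\theta_i=\beta_i^2$ on $S$ (zero off $S$) produces a feasible MIP point with matching objective. Hence the optima coincide, and $\{i:z^{(k)}_i\neq 0\}=\hat{S}_k(\cD)$. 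The only mildly delicate point is the product term $\theta_i z_i$ with binary $z_i$, which I would handle cleanly by the two-case argument above; otherwise the proof is routine unpacking of a standard BSS MIP encoding and I do not anticipate a real obstacle.
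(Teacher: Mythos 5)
Your proposal is correct and follows essentially the same route as the paper's proof: interpret $\B z$ as a size-$s$ support indicator, note that $\beta_i^2\leq\theta_i z_i$ together with $\sum_i\theta_i\leq r^2$ encodes $\operatorname{supp}(\B\beta)\subseteq\{i:z_i=1\}$ and $\|\B\beta\|_2^2\leq r^2$, and observe that the cuts $\sum_{i\in\hat{S}_j(\cD)}z_i\leq s-\tfrac{1}{2}$ with $|\hat{S}_j(\cD)|=s$ exclude exactly the previously found supports. Your write-up is slightly more explicit than the paper's (spelling out the integrality step $s-\tfrac12\Leftrightarrow s-1$ and both directions of the value-preserving correspondence via the minimal choice $\theta_i=\beta_i^2$), but these are refinements of the same argument rather than a different approach.
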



Problem~\eqref{bss-mip} can be solved to optimality using off-the-shelf solvers like Gurobi for moderately-sized datasets. In order to run our DP methods in even higher dimensions, where $p= 10{,}000$, we present a more tailored algorithm for solving the MIPs in the remainder of this section. This adds to the line of work on developing specialized discrete optimization algorithms for solving sparse regression problems and relatives in the non-private setting---see for eg~\cite{hazimeh2022sparse,hazimeh2020fast,hazimeh2023grouped,bertsimasoa,behdin2021sparse}.

We first add a ridge penalty term to the objective, which makes it strongly convex and a function of $\B z$. To obtain $\hat{S}_k(\cD)$ for $k\in [R]$, we define $$c( \B z) = \min_{\| \B \beta \|_2^2 \leq r^2} \frac{1}{2n}\|\B y-\B X \B \beta  \|_2^2 + \frac{\lambda}{2n} \sum_{i=1}^p \frac{\beta_i^2}{z_i}$$
and we seek to solve
\begin{align} \label{mipreformulation}
    \min_{
    \B z}~~ &c( \B z)  \\
    \text{subject to}~ \: &\B z \in \{0,1 \}^p,~ \sum_{i=1}^p z_i = s, \sum_{i\in\hat{S}_j(\cD)} z_i \leq s-\frac{1}{2} \: \: \forall j\in [k-1].  \nonumber
\end{align} 

For any $\B z \in \{0,1 \}^p$, let  $\hat{z}_i = z_i$ if $z_i = 1$ and $\hat{z}_i = z_i+ \text{Unif}[a,b]$ if $z_i = 0$, where $a>0$ and $b<1$. Let $$\hat{\B \beta} \in \text{arg}\min_{\| \B \beta \|_2^2 \leq r^2} \frac{1}{2n}\|\B y-\B X \B \beta  \|_2^2 + \frac{\lambda}{2n} \sum_{i=1}^p \frac{\beta_i^2}{\hat z_i}.$$
By Danskin's theorem \citep{bertsekas2016nonlinear}, we then have $$(\nabla c( \hat{\B z}))_i = -\frac{\lambda}{2n} \frac{(\hat \beta_i)^2}{ \hat{ z}_i^2}.$$

Taking $\hat{\B z}_0, \hat{\B z}_1, ..., \hat{\B z}_t \in (0,1]^p$, we have by convexity of $c$ that for all $\B x \in (0,1]^p$ and for all $k \in \{0,...,t \}$, 
$$c(\B x) \geq c(\hat{\B z}_k) + \nabla c(\hat{\B z}_k)^T (\B x - \hat{\B z}_k).$$
So then the map $$c_t(\B x) = \max\{c(\hat{\B z}_0) + \nabla c(\hat{\B z}_0)^T (\B x - \hat{\B z}_0),...,c(\hat{\B z}_t) + \nabla c(\hat{\B z}_t)^T (\B x - \hat{\B z}_t)  \}$$
is a lower bound on the map $c$. We can now present our outer approximation algorithm for solving Problem~\ref{mipreformulation}, based on \cite{duran1986outer, bertsimasoa}.



\begin{algorithm}[H]
\caption{Outer approximation for $\hat{S}_k(\cD)$}
\label{oaalg}
\begin{algorithmic}[1]
\Procedure{$\cA$}{$\cD, \lambda, r, s, a, b, \text{tol}$}
    \State Initialize $\B z_0 \in \{0,1\}^p$ s.t. $\sum_{i=1}^p z_i \leq s$, $\eta_0 \leftarrow 0$, $t \leftarrow 0$
    \State $\hat{\B z}_0 \leftarrow \texttt{add\_noise}(\B z_0 )$
    \While{$\frac{|\eta_t - c(\B z_t)|}{c(\B z_t)} > \text{tol}$}
        \State $\B z_{t+1}, \eta_{t+1} \leftarrow \argmin_{\B z \in \{0,1\}^p, \eta} \left\{ 
            \begin{aligned}
            \eta  \\
            \text{s.t.} & \: \sum_{i=1}^p z_i \leq s, \sum_{i \in \hat{S}_j(\cD)} z_i \leq s - \frac{1}{2} \: \: \: \forall j \in [k\!-\!1],\\
            & \eta \geq c(\hat{\B z}_k) + \nabla c(\hat{\B z}_k)^T (\B z - \hat{\B z}_k) \quad
            \forall k \in [t]
            \end{aligned}
        \right\}$
        \State $\hat{\B z}_{t+1} \leftarrow \texttt{add\_noise}(\B z_{t+1})$
        \State $t \leftarrow t+1$
    \EndWhile
    \Return $\B z_t$
\EndProcedure
\end{algorithmic}
\end{algorithm}

Intuitively, this approach seeks to solve Problem~\ref{mipreformulation} by constructing a sequence of MIP approximations based on cutting planes. At each iteration, the cutting plane $\eta \geq c(\hat{\B z}_k)+ \nabla c(\hat{\B z}_k)^T (\B z-\hat{\B z}_k)$ is added, cutting off $\hat{\B z}_t$, the current noisy version of the binary solution $\B z_t$, unless $\B z_t$ happened to be optimal as defined by our stopping criterion, which is $\frac{| \eta_t - c(\B z_t)|}{c(\B z_t)} \leq \text{tol}$. As the algorithm progresses, the outer approximation function $c_t(\B z) = \max_{i\in [t]}c(\hat{\B z}_i)+\nabla c(\hat{\B z}_i)^\top (\B z - \hat{\B z}_i)$ becomes an increasibly better approximation to the loss function $c$, making our lower bound $\eta_t$ converge to the upper bound obtained by evaluating $c(\B z_t)$. Please refer to Appendix \ref{algo-details} for additional algorithmic details for the setting of BSS, and refer to Appendix \ref{hingelossappendix} for the necessary modifications in the case of hinge loss.

\begin{remark}
    The approach for obtaining $\Tilde{S}_k(\cD)$, where $k \in [s]$, is very analogous to Algorithm \ref{oaalg}, except instead of having the constraints $\sum_{i\in\hat{S}_j(\cD)} z_i \leq s-\frac{1}{2} \: \: \forall j\in [l-1]$, where $l \in [R]$, we have the constraint $\sum_{i\in\Tilde{S}_0(\cD)} z_i \leq s-(k-1)-\frac{1}{2}$. 
\end{remark}

\section{Numerical Experiments}\label{sec:numerical}
In our experiments, we draw the data points as $y_i=\B{x}_i^T\B\beta^*+\epsilon_i$ for $i\in[n]$, where $\B x_1,\cdots,\B x_n\overset{\text{iid}}{\sim}\cN(\B 0,\B \Sigma)\in\R^p$ and the independent noise follows $\B\epsilon\sim\cN(\B 0,\sigma^2\B I_n)$ where $\B I_n$ is the identity matrix of size $n$. Moreover, for $i,j\in[p]$, we set $\Sigma_{i,j}=\rho^{|i-j|}$ and set nonzero coordinates of $\B\beta^*$ to take value $1/\sqrt{s}$ at indices $\{1,3,\cdots,2s-1\}$. We define the Signal to Noise Ratio as $\text{SNR}=\|\B X\B\beta^*\|_2^2/\|\B\epsilon\|_2^2$. In Algorithm~\ref{alg1}, we set $R=2+(p-s)s$, $b_x=b_y=0.5, r=1.1$ and $T=\infty$ for all our experiments in this paper. In Algorithm \ref{oaalg}, we set $a=0.001, b=0.005, r=1.1$ and $ \text{tol} = 0.005$, and consider various values of the other parameters. 
 
In Figure~\ref{p10000fig}, we plot the average proportion of draws from 10 independent trials that recovered the right support for our top-$R$ and mistakes methods using least squares as our objective for $p=10,000$. We compare with the MCMC algorithm from \cite{dpbss} and the Samp-Agg algorithm in \cite{kifer2012resample}, wherein we use Lasso for the $\cA_\text{supp}$ subroutine. In Figure~\ref{p10000fig-hinge}, we show the analogous results for hinge loss, comparing with Lasso Samp-Agg. More experimental results with varying values of SNR, $p, s, \rho,$  and $\epsilon$ for least squares and hinge loss, as well as results on prediction accuracy, utility loss, and ablation studies are provided in Appendix \ref{supplementary-exp}. For each trial, we drew 50 times from the distribution corresponding to each algorithm and gathered the proportion of correct supports. For MCMC, we similarly used 50 independent Markov chains from random initialization and gathered the proportion of correct supports after a number of iterations that was chosen to make the runtime comparable to our methods.

We have that in all settings, both of our methods outperform other algorithms for large enough $n$. The proportion of draws that recover the right support increases with $n$, since a larger sample size reduces the threshold required for the identifiability margin $\tau$ (discussed in Section \ref{stat-theory}) to have enough separation between the true support and other supports. Furthermore, performance improves at much larger values of $n$ when $p$ or $s$ is greater or $\epsilon$ is lower, since the lower bound on $\tau$ is harder to satisfy in those settings. Moreover, we observe that, keeping $s,n,p, $ and $\epsilon$ fixed, smaller values of SNR and larger values of $\rho$ make recovery harder, as $\tau$ decreases with lower signal strength or more correlation between features. Furthermore, the mistakes method numerically outperforms top-R, aligning with Section \ref{stat-theory}, which shows it succeeds under a milder identifiability condition.
\begin{figure}[H]
    \centering

   \begin{subfigure}[b]
   {0.49\textwidth}
        \includegraphics[width=\linewidth]{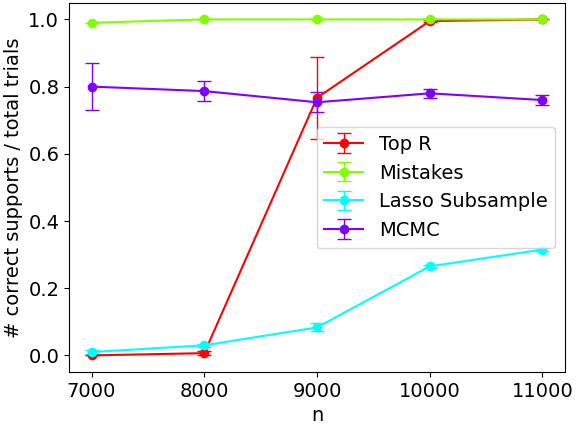}
        \caption{Least squares}
           \label{p10000fig}
    \end{subfigure}
    \begin{subfigure}[b]{0.49\textwidth}
        \includegraphics[width=\linewidth]{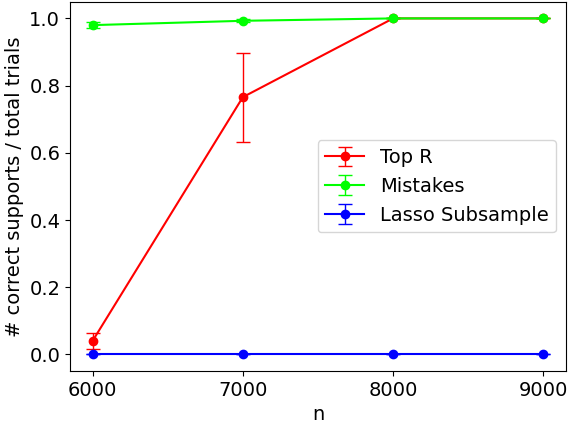}
        \caption{Hinge loss}
        \label{p10000fig-hinge}
    \end{subfigure}
   
    \caption{Simulations for $p=10{,}000$, $s=5$, SNR=5, $\rho=0.1$, and $\epsilon=1$ for least squares and hinge loss. The penalty parameter $\lambda$ in Algorithm \ref{oaalg} was set to $600$ and $170$ for figures \ref{p10000fig} and \ref{p10000fig-hinge}, respectively, and the number of MCMC iterations was set to $100{,}000$ for \ref{p10000fig}. On the $x$-axis, we vary the value of $n$ and plot the average proportion of draws across 10 independent trials that recovered the right support for each corresponding algorithm. Error bars denote the mean standard error.}
    \label{mainsimulations}
\end{figure}

\textbf{Computational resources and license information:} All experiments were conducted on a computing cluster using 20 cores and 64 GB RAM. The Gurobi Optimizer is used under the Gurobi End User License Agreement. CVXPY is distributed under the Apache License, Version 2.0. ABESS package is distributed under GNU General Public License, Version 3.


\section{Conclusion}
\label{conclusion}
In this paper, we introduced two scalable pure DP estimators for variable selection in sparse high-dimensional settings. While we provide utility guarantees specific to the BSS setting, we demonstrate how our methods can be applied more broadly, yielding favorable support recovery in the additional setting of sparse classification with hinge loss. Our contributions enhance privacy-preserving practices, enabling safer use of sensitive datasets in critical areas such as medicine, public health, finance, and personalized recommendation systems. 

One limitation of our work is that Theorem \ref{privacy-mistakes-thm} requires an additional assumption that holds with high probability for $\tau$ large enough, and it remains an open question whether a lighter assumption can be made to yield privacy guarantees for the mistakes method. Furthermore, our theoretical support recovery results yield sufficient conditions for support recovery, but an interesting direction of research may be to find necessary conditions as well, to see if our bounds on $\tau$ are tight.


\section{Acknowledgements}
We thank the authors of~\cite{dpbss}
for sharing their code with us.  
This research is supported in part by grants from the Office of Naval 
Research (N000142512504, N000142212665).
A shorter workshop version of the paper appeared in~\cite{behdin2024differentially}. The research started when Kayhan Behdin was a PhD student at MIT. 

\newpage

\bibliography{ref}
\bibliographystyle{plainnat}

\newpage

\appendix
\numberwithin{equation}{section}
\numberwithin{lemma}{section}
\numberwithin{theorem}{section}
\numberwithin{proposition}{section}
\numberwithin{figure}{section}
\numberwithin{table}{section}


\section{Proofs of Main Results}

\subsection{Lemma~\ref{delta-1-lemma}}
\begin{lemma}\label{delta-1-lemma}
    Suppose Assumption \ref{data-bounded-asu} holds, with \begin{equation}\label{rsd-def}
    \cR(S,\cD) = \min_{\B\beta\in\R^{|S|}} \|\B y - \B X_S\B\beta\|_2^2~~\text{s.t.}~~\|\B\beta\|_2^2\leq r^2.
\end{equation}Then, 
    $$\Delta \leq {2b_y^2  + 2b_x^2 r^2 s}.  $$
\end{lemma}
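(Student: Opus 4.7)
The plan is to bound the one-sided difference $\cR(S,\cD)-\cR(S,\cD')$ for an arbitrary support $S$ of size $s$ and an arbitrary pair of neighboring datasets $\cD,\cD'$; by the symmetry in the definition of $\Delta$ in~\eqref{delta-orig}, this automatically gives the claimed two-sided bound. Let $\cD=\{(\B x_i,y_i)\}_{i=1}^n$ and $\cD'=\{(\B x_i',y_i')\}_{i=1}^n$ differ only in index $k$, and denote by $\B\beta^{\star}$ and $\tilde{\B\beta}$ the minimizers of $\cR(S,\cdot)$ on $\cD$ and $\cD'$ respectively. Both minimizers live in the same feasible set $\{\|\B\beta\|_2^2\le r^2\}$, which does not depend on the dataset.

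The first step is to exploit optimality: since $\B\beta^{\star}$ minimizes the empirical squared loss on $\cD$, we have $\cR(S,\cD)\le \sum_{i=1}^n(y_i-(\B x_i)_S^{T}\tilde{\B\beta})^2$, so
\begin{align*}
\cR(S,\cD)-\cR(S,\cD') &\le \sum_{i=1}^n\bigl(y_i-(\B x_i)_S^{T}\tilde{\B\beta}\bigr)^2 - \sum_{i=1}^n\bigl(y_i'-(\B x_i')_S^{T}\tilde{\B\beta}\bigr)^2 \\
&= \bigl(y_k-(\B x_k)_S^{T}\tilde{\B\beta}\bigr)^2 - \bigl(y_k'-(\B x_k')_S^{T}\tilde{\B\beta}\bigr)^2,
\end{align*}
where the last equality uses that $\cD$ and $\cD'$ agree on every index except $k$. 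Dropping the non-negative subtracted term gives $\cR(S,\cD)-\cR(S,\cD')\le (y_k-(\B x_k)_S^{T}\tilde{\B\beta})^2$.

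The second step is to bound any single residual of the form $(y-\B x_S^{T}\B\beta)^2$ with $|y|\le b_y$, $\|\B x\|_\infty\le b_x$, and $\|\B\beta\|_2\le r$. Using $(a-b)^2\le 2a^2+2b^2$, then Cauchy--Schwarz, and finally $\|\B x_S\|_2^2\le s\|\B x\|_\infty^2 \le s b_x^2$ since $|S|=s$, we obtain
\[
(y-\B x_S^{T}\B\beta)^2 \le 2y^2 + 2(\B x_S^{T}\B\beta)^2 \le 2b_y^2 + 2\|\B x_S\|_2^2\|\B\beta\|_2^2 \le 2b_y^2 + 2b_x^2 r^2 s.
\]
Applying this to $(y_k,\B x_k,\tilde{\B\beta})$ (which satisfies the three boundedness conditions by Assumption~\ref{data-bounded-asu} and feasibility of $\tilde{\B\beta}$) yields $\cR(S,\cD)-\cR(S,\cD')\le 2b_y^2+2b_x^2 r^2 s$. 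Since the roles of $\cD,\cD'$ are symmetric in~\eqref{delta-orig} and $S$ was arbitrary, taking the maximum over $S$ and over neighboring pairs gives the bound on $\Delta$.

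There is no real obstacle here: the only non-cosmetic step is realizing one should plug the $\cD'$-minimizer $\tilde{\B\beta}$ into the $\cD$-objective (rather than comparing two distinct minimizers directly), so that all but one summand cancel and one is left with a single pointwise residual to bound. The rest is an elementary application of AM--GM and Cauchy--Schwarz, with the factor $s$ arising precisely from the restriction to coordinates in $S$.
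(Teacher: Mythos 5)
Your proposal is correct and follows essentially the same route as the paper: both plug the $\cD'$-minimizer into the $\cD$-objective so that all summands except the differing one cancel, then bound the single residual via $(a-b)^2 \le 2a^2+2b^2$ and Cauchy--Schwarz, with the factor $s$ coming from $\|(\B x)_S\|_2^2 \le s\,b_x^2$. Your explicit remark that symmetry over ordered neighboring pairs in~\eqref{delta-orig} reduces the claim to a one-sided bound is a small clarification the paper leaves implicit, but the argument is otherwise identical.
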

\begin{proof}
Suppose $\cD,\cD'$ are two neighboring datasets. Fix a support $S\in\cO$ and suppose 
      $$\hat{\B\beta} \in \argmin_{\B\beta\in\R^s} \|\B y'-\B X'_S\B\beta\|_2^2~~\text{s.t.}~~\|\B\beta\|_2^2\leq r^2. $$
      
      Then,
       $${\cR}(S,\cD)-{\cR}(S,\cD') \leq  \|\B y-\B X_S \hat{\B \beta}\|_2^2 - \|\B y'-\B X_S'\hat{\B \beta}\|_2^2.$$ 
       Let us assume without loss of generality that $\cD,\cD'$ differ in the $n$-th observations. Hence, we have that 
     \begin{align*}
\|\B y-\B X_S \hat{\B \beta}\|_2^2 - \|\B y'-\B X_S'\hat{\B \beta}\|_2^2 & =  \sum_{i=1}^{n-1} [(y_i-(\B x_i)_S^T \hat{\B \beta})^2 - (y_i-(\B x_i)_S^T \hat{\B \beta})^2] \\
&\quad + (y_n-(\B x_n)_S^T \hat{\B \beta})^2 - (y_n'-(\B x_n)_S'^T \hat{\B \beta})^2 \\
& \leq (y_n-(\B x_n)_S^T \hat{\B \beta})^2 \\
& \leq 2y_n^2 +2((\B x_n)_S^T \hat{\B \beta})^2 \\
& \leq 2b_y^2 + 2b_x^2 r^2 s
     \end{align*}
     where the last step uses the Cauchy-Schwartz inequality and the fact that $|S|=s$.
\end{proof}

\subsection{Proof of Theorem~\ref{dp-thm}}
First, we prove some technical results that will be used in the proof of Theorem~\ref{dp-thm}. Define 
\begin{equation}
    \hat{\cR}(S,\cD)=\begin{cases}
        \cR(S,\cD) & \mbox{ if } S \in\{\hat{S}_1(\cD),\cdots,\hat{S}_R(\cD)\} \\
        \cR(\hat{S}_R(\cD),\cD) & \mbox { otherwise. }
    \end{cases}
\end{equation}

\begin{lemma}\label{s_hat_lemma}
Let $\Delta$ to be taken as in~\eqref{delta-orig}. Then,
    $$\max_{k\geq 1}\max_{\substack{\cD,\cD'\in\cZ^n\\ \cD,\cD' \text{ are neighbors }}} {\cR}(\hat{S}_k(\cD),\cD)-{\cR}(\hat{S}_k(\cD'),\cD')\leq \Delta.$$
\end{lemma}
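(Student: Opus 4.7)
The plan is to exploit the fact that $\hat{S}_k(\cD)$ is characterized as the support attaining the $k$-th smallest value of $\cR(\cdot,\cD)$ over $\cO$, and then to transfer a threshold bound from $\cD'$ to $\cD$ via global sensitivity. The key observation is that the set $T := \{\hat{S}_1(\cD'),\ldots,\hat{S}_k(\cD')\}$ consists of $k$ distinct supports all having $\cR(\cdot,\cD')$-value at most $\cR(\hat{S}_k(\cD'),\cD')$. Applying~\eqref{delta-orig} coordinate-wise to each element of $T$ gives $k$ distinct supports whose $\cR(\cdot,\cD)$ values sit below $\cR(\hat{S}_k(\cD'),\cD')+\Delta$, which forces the $k$-th order statistic on the $\cD$ side to respect the same bound.

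Concretely, first I would observe from the recursive definition in~\eqref{Sk_definition} that $\cR(\hat{S}_1(\cD),\cD)\leq\cR(\hat{S}_2(\cD),\cD)\leq\cdots$, so $\cR(\hat{S}_k(\cD),\cD)$ equals the $k$-th smallest element of the multiset $\{\cR(S,\cD):S\in\cO\}$ (for any deterministic tiebreaking used in the $\argmin$). Next, fix neighboring datasets $\cD,\cD'$ and index $k\geq 1$. For every $S\in T$, the definition of global sensitivity in~\eqref{delta-orig} gives
\[
\cR(S,\cD)\;\leq\;\cR(S,\cD')+\Delta\;\leq\;\cR(\hat{S}_k(\cD'),\cD')+\Delta,
\]
where the second inequality uses that $S$ is among the $k$ smallest $\cR(\cdot,\cD')$ values. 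Thus there exist $k$ distinct supports (namely the elements of $T$) with $\cR(\cdot,\cD)$-value at most $\cR(\hat{S}_k(\cD'),\cD')+\Delta$. Since the $k$-th smallest element of $\{\cR(S,\cD):S\in\cO\}$ cannot exceed any threshold that is met by at least $k$ distinct supports, we obtain
\[
\cR(\hat{S}_k(\cD),\cD)\;\leq\;\cR(\hat{S}_k(\cD'),\cD')+\Delta,
\]
which is the asserted inequality. Taking the maximum over $k$ and neighboring pairs finishes the lemma.

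I do not foresee a substantive obstacle: the only mildly subtle point is that $\hat{S}_k(\cD)$ is defined by an $\argmin$ that may admit ties, so one should fix a tiebreaking rule (e.g., lexicographic) to make $\hat{S}_k(\cD)$ well-defined; the $k$-th order statistic interpretation is unaffected by this choice, and the ``at least $k$ distinct elements below threshold'' argument goes through verbatim.
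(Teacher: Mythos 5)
Your proof is correct, and it takes a genuinely different route from the paper's. The paper argues by a three-way case analysis on how $\cR(\hat{S}_k(\cD),\cdot)$ compares with $\cR(\hat{S}_k(\cD'),\cdot)$: the first two cases follow immediately from the sensitivity bound, while the third requires a pigeonhole construction---forming $\bS_1=\{\hat S_1(\cD),\dots,\hat S_{k-1}(\cD)\}$, $\bS_2=\{\hat S_{k+1}(\cD),\dots\}$, and $\bS'=\{\hat S_1(\cD'),\dots,\hat S_{k-1}(\cD')\}$ and showing $|\bS'\cap\bS_2|\geq 1$---to extract a single pivot support $S_0$ satisfying $\cR(S_0,\cD)\geq\cR(\hat S_k(\cD),\cD)$ and $\cR(S_0,\cD')\leq\cR(\hat S_k(\cD'),\cD')$, to which the sensitivity bound is then applied. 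You instead prove, in one uniform step, that the $k$-th order statistic of the score multiset is Lipschitz under $\ell_\infty$ perturbation of the scores: the $k$ distinct supports $\hat S_1(\cD'),\dots,\hat S_k(\cD')$ all have $\cR(\cdot,\cD)$-value at most $\cR(\hat S_k(\cD'),\cD')+\Delta$, which forces the $k$-th smallest value on the $\cD$ side below that same threshold. Your argument avoids the case split entirely and isolates the reusable general fact (order statistics inherit the pointwise sensitivity of the score function), which is arguably cleaner; the paper's version buys a concrete witness $S_0$ in the hard case, but nothing downstream uses that witness, so nothing is lost by your route. Your remark on tie-breaking is also apt and handles the only real subtlety: $\cR(\hat S_k(\cD),\cD)$ equals the $k$-th smallest entry of the multiset $\{\cR(S,\cD):S\in\cO\}$ under any deterministic tie-breaking in the $\argmin$ of~\eqref{Sk_definition}, which is all the order-statistic argument needs.
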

\begin{proof}
    Fix $k\geq 1$ and let us consider the following cases:\\
    \noindent \textbf{Case 1:} $\cR(\hat{S}_k(\cD),\cD')\leq \cR(\hat{S}_k(\cD'),\cD')$. Then, by the definition of $\Delta$,
    \begin{align}
        \cR(\hat{S}_k(\cD),\cD)-\cR(\hat{S}_k(\cD'),\cD') & \leq \cR(\hat{S}_k(\cD),\cD')-\cR(\hat{S}_k(\cD'),\cD') + \Delta \nonumber\\
        & \leq \Delta.
    \end{align}
    \noindent \textbf{Case 2:} $\cR(\hat{S}_k(\cD),\cD)\leq \cR(\hat{S}_k(\cD'),\cD)$. Then,
        \begin{align}
        \cR(\hat{S}_k(\cD),\cD)-\cR(\hat{S}_k(\cD'),\cD') & \leq \cR(\hat{S}_k(\cD),\cD)-\cR(\hat{S}_k(\cD'),\cD) + \Delta \nonumber\\
        & \leq \Delta.
    \end{align}

    \noindent \textbf{Case 3:} $\cR(\hat{S}_k(\cD),\cD)> \cR(\hat{S}_k(\cD'),\cD)$ and $\cR(\hat{S}_k(\cD),\cD')> \cR(\hat{S}_k(\cD'),\cD')$. Trivially, in this case we must have ${p\choose s}>k\geq 2$. Then, there must exist $S_0\subseteq [p]$, $|S_0|=s$ such that
    $$\cR(S_0,\cD)\geq \cR(\hat{S}_k(\cD),\cD), \cR(S_0,\cD')\leq \cR(\hat{S}_k(\cD'),\cD').$$
    To this end, define
    $$\bS_1 = \{\hat{S}_1(\cD),\cdots,\hat{S}_{k-1}(\cD)\},~\bS_2 = \{\hat{S}_{k+1}(\cD),\cdots\},~\bS' = \{\hat{S}_1(\cD'),\cdots,\hat{S}_{k-1}(\cD')\}.$$
    As $\cR(\hat{S}_k(\cD),\cD')> \cR(\hat{S}_k(\cD'),\cD')$, we have that $\hat{S}_k(\cD)\notin \bS'$ so $\bS'\subset \bS_1\cup \bS_2$. On the other hand, as $\cR(\hat{S}_k(\cD),\cD)> \cR(\hat{S}_k(\cD'),\cD)$, $\hat{S}_k(\cD')\in\bS_1$ and as $\hat{S}_k(\cD')\notin\bS'$, we have $|\bS'\cap \bS_1|\leq k-2$. As $|\bS'|=k-1$, we must have $|\bS'\cap\bS_2|\geq 1$ which proves the existence of $S_0$.
    Next, note that 
    \begin{align}
         \cR(\hat{S}_k(\cD),\cD)-\cR(\hat{S}_k(\cD'),\cD') & \leq \cR(S_0,\cD)-\cR(S_0,\cD') \leq \Delta.
    \end{align}
\end{proof}

\begin{lemma}\label{rhatsensitive}
Let $\Delta$ to be taken as in~\eqref{delta-orig}. Then,
        $$\max_{\substack{S\subseteq [p] \\ |S|=s}}\max_{\substack{\cD,\cD'\in\cZ^n\\ \cD,\cD' \text{ are neighbors }}}  \hat{\cR}(S,\cD)-\hat{\cR}(S,\cD')\leq \Delta.$$
\end{lemma}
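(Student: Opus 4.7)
The plan is to prove this sensitivity bound via a case analysis on whether $S$ belongs to the top-$R$ supports of $\cD$ and of $\cD'$, using two tools: the original sensitivity bound $\cR(S, \cD) - \cR(S, \cD') \leq \Delta$ from~\eqref{delta-orig}, and Lemma~\ref{s_hat_lemma}, which says that $\cR(\hat{S}_k(\cD), \cD)$ is itself $\Delta$-sensitive for every $k$. The key bridge is a ranking property that I would isolate first: by the very definition of $\hat{S}_R(\cD)$, we have $\cR(S, \cD) \leq \cR(\hat{S}_R(\cD), \cD)$ whenever $S \in \{\hat{S}_1(\cD), \ldots, \hat{S}_R(\cD)\}$, and $\cR(S, \cD) \geq \cR(\hat{S}_R(\cD), \cD)$ otherwise.

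I would then split into four cases according to whether $S$ lies in the top-$R$ set of $\cD$ and of $\cD'$. When $S$ is in the top-$R$ of both datasets, $\hat{\cR}(S, \cdot)$ coincides with $\cR(S, \cdot)$ on both sides, so the bound follows immediately from~\eqref{delta-orig}. When $S$ is in the top-$R$ of neither, $\hat{\cR}(S, \cD) - \hat{\cR}(S, \cD') = \cR(\hat{S}_R(\cD), \cD) - \cR(\hat{S}_R(\cD'), \cD')$, which is at most $\Delta$ by Lemma~\ref{s_hat_lemma} applied with $k = R$.

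The two mixed cases are where the actual work happens. If $S$ is in the top-$R$ of $\cD$ but not of $\cD'$, the ranking property gives $\cR(S, \cD) \leq \cR(\hat{S}_R(\cD), \cD)$, so $\hat{\cR}(S, \cD) - \hat{\cR}(S, \cD') = \cR(S, \cD) - \cR(\hat{S}_R(\cD'), \cD') \leq \cR(\hat{S}_R(\cD), \cD) - \cR(\hat{S}_R(\cD'), \cD') \leq \Delta$, where the last step again invokes Lemma~\ref{s_hat_lemma}. Symmetrically, if $S$ is in the top-$R$ of $\cD'$ only, I use the ranking property for $\cD$ to write $\cR(\hat{S}_R(\cD), \cD) \leq \cR(S, \cD)$, and then chain $\cR(S, \cD) \leq \cR(S, \cD') + \Delta$ via~\eqref{delta-orig} to conclude.

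The main obstacle I anticipate is the asymmetry in the mixed cases: the definition of $\hat{\cR}(S, \cdot)$ is not symmetric in the two datasets once $S$ crosses the top-$R$ threshold, and a naive application of Lemma~\ref{s_hat_lemma} can chain inequalities in the wrong direction. Isolating the ranking property as a separate ingredient, and pairing it in each mixed case with whichever of~\eqref{delta-orig} or Lemma~\ref{s_hat_lemma} aligns correctly, is the step that requires care; once it is set up the rest of the argument is essentially mechanical.
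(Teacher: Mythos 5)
Your proof is correct and takes essentially the same route as the paper's: the identical four-case split on whether $S$ lies among $\{\hat{S}_1(\cdot),\ldots,\hat{S}_R(\cdot)\}$ for each of $\cD$ and $\cD'$, with the ranking inequality $\cR(S,\cD)\leq \cR(\hat{S}_R(\cD),\cD)$ (or its reverse) combined in each mixed case with whichever of~\eqref{delta-orig} or Lemma~\ref{s_hat_lemma} (applied at $k=R$) points the right way. The only difference is presentational—you isolate the ranking property as an explicit ingredient, which the paper invokes inline—and note that the boundary case $S=\hat{S}_R(\cdot)$ is harmless since both branches of the definition of $\hat{\cR}$ agree there.
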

\begin{proof}
    Suppose $S=\hat{S}_{k_1}(\cD)=\hat{S}_{k_2}(\cD')$. Let us consider the following cases:

    \noindent \textbf{Case 1:} $k_1,k_2\geq R$: Then, we have $\hat \cR(S,\cD)=\cR(\hat{S}_R(\cD),\cD)$ and $\hat \cR(S,\cD')=\cR(\hat{S}_R(\cD'),\cD')$. Therefore, 
    \begin{align}
        \hat{\cR}(S,\cD)-\hat{\cR}(S,\cD') & = \cR(\hat{S}_R(\cD),\cD) - \cR(\hat{S}_R(\cD'),\cD') \leq \Delta
    \end{align}
    by Lemma~\ref{s_hat_lemma}.

    \noindent \textbf{Case 2:} $k_1<R,k_2\geq R$: Then, we have $\hat \cR(S,\cD)=\cR(S,\cD)\leq \cR(\hat{S}_R(\cD),\cD)$ and $\hat \cR(S,\cD')=\cR(\hat{S}_R(\cD'),\cD')$. Then, 
    \begin{align}
        \hat{\cR}(S,\cD)-\hat{\cR}(S,\cD') & \leq  \cR(\hat{S}_R(\cD),\cD) - \cR(\hat{S}_R(\cD'),\cD') \leq \Delta.
    \end{align}

        \noindent \textbf{Case 3:} $k_1\geq R,k_2< R$: Then, we have $\hat \cR(S,\cD)= \cR(\hat{S}_R(\cD),\cD)\leq \cR(S,\cD)$ and $\hat \cR(S,\cD')=\cR(S,\cD')$. Then, 
    \begin{align}
        \hat{\cR}(S,\cD)-\hat{\cR}(S,\cD') & \leq  \cR(S,\cD) - \cR(S,\cD') \leq \Delta.
    \end{align}

 \noindent \textbf{Case 4:} $k_1,k_2<R$: Then, we have $\hat \cR(S,\cD)= \cR(S,\cD)$ and $\hat \cR(S,\cD')=\cR(S,\cD')$. The result follows.
\end{proof}
\begin{lemma}\label{lemma3}
    Suppose $\hat \cM$ is as defined in Algorithm~\ref{alg1}, and $\hat{\cA}_E$ is an exponential mechanism with the objective $\hat{\cR}$,
        \begin{equation}\label{exp-mech-prob-hat}
        \p(\hat\cA_E(\cD)=S)\propto \exp\left(-\frac{\varepsilon \hat\cR(S,\cD)}{2\Delta}\right),~~\forall S\in\cO.
    \end{equation}
    Then, for $S\in\cO$,
    \begin{equation}
   (1-q^{T})  \p(\hat\cA_E(\cD)=S) \leq \p(\hat \cM(\cD)=S) \leq \p(\hat\cA_E(\cD)=S) + q^T
\end{equation}
where 
$$q = \frac{R}{{p\choose s}}.$$
\end{lemma}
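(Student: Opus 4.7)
The plan is to compute $\p(\hat\cM(\cD)=S)$ directly for each $S\in\cO$ by conditioning on the draw $a(\cD)\sim\p_0$, and then to compare term-by-term with $\p(\hat\cA_E(\cD)=S)$. A key initial observation is that the normalizing constant of $\p_0$ in~\eqref{p0-dist} coincides with the normalizing constant $Z:=\sum_{S\in\cO}\exp(-\varepsilon\hat\cR(S,\cD)/(2\Delta))$ of $\hat\cA_E$: since $\hat\cR(S,\cD)=\cR(\hat S_R(\cD),\cD)$ for each of the ${p\choose s}-R$ supports $S$ outside $\{\hat S_1(\cD),\dots,\hat S_R(\cD)\}$, grouping those terms gives
\[
Z=\sum_{k=1}^R e^{-\varepsilon\cR(\hat S_k(\cD),\cD)/(2\Delta)}+\Bigl({p\choose s}-R\Bigr)e^{-\varepsilon\cR(\hat S_R(\cD),\cD)/(2\Delta)},
\]
which matches the denominator implicit in $\p_0$. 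Consequently $\p(\hat\cA_E(\cD)=\hat S_k(\cD))=\p_0(k)$ for $k\le R$, and $\p(\hat\cA_E(\cD)=S)=\p_0(R+1)/({p\choose s}-R)$ for every $S\in\bS:=\{\hat S_k(\cD):k>R\}$.

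I would then split into two cases by where $S$ lies. \textbf{Case A:} $S=\hat S_k(\cD)$ with $k\le R$. Two disjoint paths produce $S$: either $a(\cD)=k$, contributing $\p_0(k)=\p(\hat\cA_E(\cD)=S)$, or $a(\cD)=R+1$ and $\cM_0$ exhausts all $T$ iterations without ever landing in $\bS$ while its final draw happens to equal $S$. Writing $q=R/{p\choose s}$, the latter event has conditional probability $q^{T-1}\cdot 1/{p\choose s}=q^T/R$, hence
\[
\p(\hat\cM(\cD)=S)=\p(\hat\cA_E(\cD)=S)+\p_0(R+1)\,q^T/R,
\]
and the additive correction is nonnegative and bounded by $q^T$ (using $\p_0(R+1)\le 1$ and $R\ge 1$), yielding both inequalities in this case.

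\textbf{Case B:} $S\in\bS$. The only way to output $S$ is $a(\cD)=R+1$ followed by $\cM_0(\cD,R,T)=S$. Decomposing by the first iteration $t\in[T]$ at which $S$ is drawn, and using that the rejection-sampler draws are i.i.d.\ uniform on $\cO$, a geometric sum gives
\[
\p(\cM_0(\cD,R,T)=S)=\frac{1}{{p\choose s}}\sum_{t=1}^T q^{t-1}=\frac{1-q^T}{{p\choose s}-R}.
\]
Multiplying by $\p_0(R+1)$ cancels the factor ${p\choose s}-R$, leaving $\p(\hat\cM(\cD)=S)=(1-q^T)\,\p(\hat\cA_E(\cD)=S)$, which attains the lower bound with equality and satisfies the upper bound trivially.

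The main bookkeeping subtlety I anticipate is the behavior of $\cM_0$ when all $T$ uniform samples miss $\bS$: the procedure still returns the last (necessarily non-$\bS$) sample rather than failing, so in Case~A the supports $\hat S_k$ with $k\le R$ pick up a small extra probability mass through this secondary channel. Correctly identifying that contribution is exactly what produces the additive $q^T$ slack in the upper bound and the multiplicative $(1-q^T)$ factor in the lower bound; once the event decomposition and geometric sum are set up, the rest is a direct computation.
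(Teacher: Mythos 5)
Your proof is correct and follows essentially the same route as the paper's: the same case split on whether $S=\hat S_k(\cD)$ has $k\le R$ or $k>R$, the same identification of the normalizing constant of $\p_0$ with that of $\hat\cA_E$, and the same geometric-sum computation $\sum_{t=1}^T q^{t-1}/{p\choose s}$ giving the exact factor $(1-q^T)$ in the $k>R$ case. The only (harmless) refinement is that for $k\le R$ you compute the secondary-channel mass $\p_0(R+1)\,q^T/R$ exactly, where the paper simply sandwiches $\p(\hat\cM(\cD)=S)$ between $\p(a(\cD)=k)$ and $\p(a(\cD)=k)+\p(\cM_0(\cD)\in\bS_R)=\p(a(\cD)=k)+q^T$.
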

\begin{proof}
    Fix $S\in\cO$ and suppose $S=\hat{S}_k(\cD)$. Moreover, let $\bS_R=\{\hat{S}_k(\cD):k\leq R\}$. Consider the following cases:
    \\
    \noindent \textbf{Case 1: $k\leq R$:} Then, based on Algorithm~\ref{alg1}, 
    $$\p(\hat \cM(\cD)=S)= \p\left(\{a(\cD)=k\}\cup\{a(\cD)=R+1,\cM_0(\cD)=S\}\right).$$
    Therefore, 
    \begin{align}
    \p(a(\cD)=k) & \leq \p(\hat \cM(\cD)=S)  = \p\left(\{a(\cD)=k\}\cup\{a(\cD)=R+1,\cM_0(\cD)=S\}\right) \nonumber\\
    & \stackrel{(a)}{\leq} \p(a(\cD)=k) +  \p(\cM_0(\cD)\in\bS_R) \nonumber \\
    & \stackrel{(b)}{\leq} \p(a(\cD)=k) + q^T 
\end{align}
where $(a)$ is true as $S\in\bS_R$, and $(b)$ is true as $\cM_0$ return a support in $\bS_R$ if it selects some support from $\bS_R$ for all $T$ iterations, showing $\p(\cM_0(\cD)\in\bS_R)=q^T$. Note that for $k\leq R$, $\p(a(\cD)=k)=\p(\hat{\cA}_E(\cD)=S)$, therefore, 
\begin{equation}\label{lemma-hleper3-1}
    \p(\hat\cA_E(\cD)=S) \leq \p(\hat \cM(\cD)=S) \leq \p(\hat\cA_E(\cD)=S) + q^T.
\end{equation}

\noindent \textbf{Case 2: $k>R$:} Then, from~\eqref{p0-dist},
\begin{align}
    \p(a(\cD)=R+1)& =\frac{\left({p\choose s}- R\right)\exp\left( - \frac{\varepsilon}{2\Delta}\cR(\hat{S}_R(\cD),\cD)\right)}{\sum_{k=1}^R \exp\left(- \frac{\varepsilon}{2\Delta} \cR(\hat{S}_k(\cD),\cD)\right)+\left({p\choose s}- R\right)\exp\left(- \frac{\varepsilon}{2\Delta} \cR(\hat{S}_R(\cD),\cD)\right)} \nonumber\\
    & = \frac{\left({p\choose s}- R\right)\exp\left(- \frac{\varepsilon}{2\Delta} \hat{\cR}(S,\cD)\right)}{\sum_{k=1}^R \exp\left(-\frac{\varepsilon}{2\Delta} \hat{\cR}(\hat{S}_k(\cD),\cD)\right)+\sum_{k\geq R+1}\exp\left(- \frac{\varepsilon}{2\Delta} \hat{\cR}(\hat{S}_k(\cD),\cD)\right)} \nonumber \\
    & = \left({p\choose s}- R\right)\p(\hat\cA_E(\cD)=S).
\end{align}
Hence, one can write
\begin{align}
    \p(\hat \cM(\cD)=S)& = \p\left(\{a(\cD)=R+1\}\cap\{\cM_0(\cD)=S\}\right) \nonumber \\
    & \stackrel{(a)}{=} \p(a(\cD)=R+1)\left(\sum_{i=1}^T\frac{q^{i-1}}{{p\choose s}}\right) \nonumber \\
    & = \frac{1}{{p\choose s}}\left[{p\choose s}-R\right]\p(\hat\cA_E(\cD)=S)\frac{1-q^{T}}{1-q}\nonumber \\
    & = (1-q^{T})\p(\hat\cA_E(\cD)=S)\label{lemma-hleper3-2}
\end{align}
where $(a)$ is true as 
\begin{align*}
    \p(\cM_0(\cD)=S) = \sum_{i=1}^T \p(\cM_0(\cD)=S, \cM_0 \text{ stops after } i \text{ iterations }) = \sum_{i=1}^T\frac{q^{i-1}}{{p\choose s}}.
\end{align*}
The proof is complete by~\eqref{lemma-hleper3-1} and~\eqref{lemma-hleper3-2}.
\end{proof}

Next, let us prove an important intermediate result on Algorithm~\ref{alg1}.
\begin{theorem}\label{dp-thm-inter}
    Suppose $T>1$, $1<R<{p\choose s}$. The procedure $\hat \cM$ in Algorithm~\ref{alg1} is $(\varepsilon',\delta)$-DP where 
    $$\varepsilon'=\varepsilon - \log\left(1-\left[R/{p\choose s}\right]^{T}\right),~~\delta=\left[R/{p\choose s}\right]^T.$$
\end{theorem}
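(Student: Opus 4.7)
The plan is to leverage the technical lemmas already proved in the excerpt — specifically Lemma~\ref{rhatsensitive} and Lemma~\ref{lemma3} — to reduce the DP analysis of $\hat\cM$ to the DP guarantee of the auxiliary exponential mechanism $\hat\cA_E$ built on the surrogate objective $\hat\cR$, then convert the gap between $\hat\cM$ and $\hat\cA_E$ into an additive $\delta = q^T$ and a mild multiplicative distortion of $\varepsilon$. Since $\cO$ is finite, we only need to bound probabilities for arbitrary subsets $K \subseteq \cO$ by summing pointwise inequalities.

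First I would observe that $\hat\cA_E$ itself is $(\varepsilon,0)$-DP: Lemma~\ref{rhatsensitive} bounds the global sensitivity of $\hat\cR$ by $\Delta$, so applying Lemma~\ref{exp-lemma} (the standard exponential mechanism guarantee) with this sensitivity yields $\p(\hat\cA_E(\cD)\in K) \le e^\varepsilon \p(\hat\cA_E(\cD')\in K)$ for every neighboring pair $\cD,\cD'$ and every event $K$.

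Next I would show that for any $K\subseteq\cO$,
$$\p(\hat\cM(\cD)\in K) \;\le\; \p(\hat\cA_E(\cD)\in K) + q^T.$$
The naive approach of summing the pointwise inequality $\p(\hat\cM(\cD)=S)\le\p(\hat\cA_E(\cD)=S)+q^T$ from Lemma~\ref{lemma3} gives a slack of $|K|q^T$, which is too loose. Instead I would split $K$ into $K\cap\bS_R$ and $K\setminus\bS_R$ with $\bS_R=\{\hat S_k(\cD):k\le R\}$, and treat the two parts separately. For $S\in K\setminus\bS_R$, Case 2 of the proof of Lemma~\ref{lemma3} gives the equality $\p(\hat\cM(\cD)=S)=(1-q^T)\p(\hat\cA_E(\cD)=S)$, so this part is bounded by $\p(\hat\cA_E(\cD)\in K\setminus\bS_R)$. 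For $S\in K\cap\bS_R$, the excess over $\p(\hat\cA_E(\cD)=S)$ comes entirely from the $a(\cD)=R+1$ branch of Algorithm~\ref{alg1}, in which $\cM_0$ outputs an element of $\bS_R$ only when its rejection loop fails for all $T$ iterations. Since this failure event has probability $q^T$ and the resulting element is uniform on $\bS_R$ (which has cardinality exactly $R$), the excess mass per $S\in\bS_R$ is at most $q^T/R$, and summing over at most $R$ such elements yields a total excess of at most $q^T$. Together these bounds give the displayed inequality.

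Finally I would chain: apply the $(\varepsilon,0)$-DP of $\hat\cA_E$ to get $\p(\hat\cA_E(\cD)\in K)\le e^\varepsilon\p(\hat\cA_E(\cD')\in K)$, then invoke the lower bound in Lemma~\ref{lemma3} summed over $S\in K$ to obtain $\p(\hat\cA_E(\cD')\in K)\le \p(\hat\cM(\cD')\in K)/(1-q^T)$. Combining yields
$$\p(\hat\cM(\cD)\in K) \;\le\; \frac{e^\varepsilon}{1-q^T}\,\p(\hat\cM(\cD')\in K) + q^T,$$
from which $\varepsilon' = \varepsilon - \log(1-q^T)$ and $\delta = q^T$ can be read off. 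The main obstacle is the refinement in the upper-bound step: one must unpack $\cM_0$ internally rather than summing the pointwise bound of Lemma~\ref{lemma3}, so that the additive slack depends on $|\bS_R|=R$ rather than on $|K|$, which can be exponentially large.
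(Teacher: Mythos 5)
Your proof is correct, and it follows the same skeleton as the paper's: bound the sensitivity of $\hat\cR$ via Lemma~\ref{rhatsensitive} so that $\hat\cA_E$ is $(\varepsilon,0)$-DP by Lemma~\ref{exp-lemma}, then chain the upper bound $\hat\cM \to \hat\cA_E$, the DP guarantee $\hat\cA_E(\cD) \to \hat\cA_E(\cD')$, and the lower bound $\hat\cA_E \to \hat\cM$ from Lemma~\ref{lemma3}. Where you genuinely depart from the paper is in the granularity of the first step: the paper carries out the entire chain pointwise, proving only $\p(\hat\cM(\cD)=S) \leq \frac{e^{\varepsilon}}{1-q^T}\,\p(\hat\cM(\cD')=S) + q^T$ for each single outcome $S$, and stops there. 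As you correctly observe, Definition~\ref{dp-def} quantifies over all events $K$, and summing the paper's singleton inequality over $K$ inflates the additive term to $|K|\,q^T$, which can be exponentially large. Your refinement closes this gap: splitting $K$ into $K\cap\bS_R$ and $K\setminus\bS_R$, using the exact identity $\p(\hat\cM(\cD)=S)=(1-q^T)\,\p(\hat\cA_E(\cD)=S)$ for $S\notin\bS_R$ (so no slack there), and noting that the excess mass on $\bS_R$ is exactly $\p(a(\cD)=R+1)\cdot\p(\cM_0(\cD)\in\bS_R)\leq q^T$ — since $\cM_0$ lands in $\bS_R$ only if all $T$ uniform draws do, an event of probability $q^T$ whose output is uniform over the $R$ elements of $\bS_R$, contributing at most $q^T/R$ per outcome — yields the event-level bound $\p(\hat\cM(\cD)\in K)\leq \p(\hat\cA_E(\cD)\in K)+q^T$ with slack independent of $|K|$. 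The multiplicative lower bound of Lemma~\ref{lemma3} sums harmlessly over $K$, so the rest of your chain goes through verbatim. In short: your argument buys a fully rigorous event-level proof of the stated $(\varepsilon',\delta)$ guarantee, whereas the paper's proof, read literally, establishes only the pointwise version and leaves the passage to arbitrary events implicit; your decomposition is precisely the missing step that justifies it.
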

\begin{proof}
 From Lemma~\ref{rhatsensitive} and Lemma~\ref{exp-lemma}, we know that $\hat\cA_E$ is an $(\varepsilon,0)$-DP procedure. Suppose $\cD,\cD'$ are neighboring datasets. Then, from Lemma~\ref{lemma3},
 \begin{align}
     \p(\hat \cM(\cD)=S) & \leq \p(\hat\cA_E(\cD)=S) + q^T \nonumber \\
     & \leq e^{\varepsilon}\p(\hat\cA_E(\cD')=S) + q^T \nonumber \\
     & \leq \frac{1}{1-q^{T}}e^{\varepsilon}\p(\hat \cM(\cD')=S) + q^T\label{thm-inter-helper}
 \end{align}
 where the first and last inequality use Lemma~\ref{lemma3}.
\end{proof}

\begin{proof}[\textbf{Proof of Theorem~\ref{dp-thm}}]
Note that by definition, for $S\in\cO$, we have $0\leq \cR(S,\cD)\leq \|\B y\|_2^2\leq n b_y^2$. Then,
\begin{align}
    \p(\hat\cA_E(\cD)=S)& =\frac{\exp\left(-\frac{\varepsilon}{2\Delta} \hat{\cR}(S,\cD)\right)}{\sum_{k=1}^R \exp\left(-\frac{\varepsilon}{2\Delta} \hat{\cR}(\hat{S}_k(\cD),\cD)\right)+\sum_{k\geq R+1}\exp\left(-\frac{\varepsilon}{2\Delta} \hat{\cR}(\hat{S}_k(\cD),\cD)\right)} \nonumber \\
    & \geq \frac{\exp(-n\varepsilon b_y^2/(2\Delta))}{{{p\choose s}}}:= \delta_0.\label{thm1-helper-2.5}
\end{align}
    Then, from~\eqref{thm-inter-helper},
    \begin{align}
        \p(\hat \cM(\cD)=S) & \leq e^{\varepsilon}\p(\hat\cA_E(\cD')=S) + q^T\nonumber \\
        & = \left(e^{\varepsilon}+\frac{q^T}{\delta_0}\right)\p(\hat\cA_E(\cD')=S) -\frac{q^T}{\delta_0}\p(\hat\cA_E(\cD')=S) + q^T\nonumber \\
        & \stackrel{(a)}{\leq}  \left(e^{\varepsilon}+\frac{q^T}{\delta_0}\right)\p(\hat\cA_E(\cD')=S)\nonumber \\
        & \stackrel{(b)}{\leq}  \frac{1}{1-q^T}\left(e^{\varepsilon}+\frac{q^T}{\delta_0}\right)\p(\hat \cM(\cD')=S)
    \end{align}
where $(a)$ is by \eqref{thm1-helper-2.5} and $(b)$ is by Lemma~\ref{lemma3}.
\end{proof}

\subsection{Effect of choice of $R$ on support recovery for top-$R$ method}

In this section, we formalize the intuition discussed in Section \ref{topR-method} regarding the impact that the choice of $R$ has on the top-$R$ recovering the optimal BSS support $\hat{S}_1(\cD)$. We show that, as $R$ increases, the probability of top-$R$ outputting $\hat{S}_1(\cD)$ can only improve.

\begin{lemma}
\label{increasingR}
    Take $T = \infty$ in Algorithm \ref{alg1}. Denote $\hat{\cM}_1$ and $\hat{\cM}_2$ as two instances of the top-$R$ method, using $R_1$ and $R_2$ enumerated supports, respectively, where $R_1 < R_2$. Then, 
    we have that $$\P[\hat{\cM}_1(\cD) = \hat{S}_1(\cD)] \leq \P[\hat{\cM}_2(\cD) = \hat{S}_1(\cD)].$$
\end{lemma}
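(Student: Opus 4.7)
The plan is to exploit the explicit closed-form distribution that $\hat{\cM}$ reduces to when $T=\infty$, then argue monotonicity directly from the ordering of the objective values $\cR(\hat{S}_1(\cD),\cD) \leq \cR(\hat{S}_2(\cD),\cD) \leq \cdots$. Throughout, fix $\cD$ and abbreviate $w_k = \exp(-\varepsilon \cR(\hat{S}_k(\cD),\cD)/(2\Delta))$, so that by definition of $\hat{S}_k(\cD)$ the sequence $(w_k)_{k\geq 1}$ is nonincreasing.

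The first step is to invoke Lemma~\ref{lemma3} with $T=\infty$: since $q^T = 0$, the two bounds collapse and $\hat{\cM}$ coincides in distribution with the exponential mechanism $\hat{\cA}_E$ that uses the truncated objective $\hat{\cR}$. Writing out $\hat{\cA}_E$ at $\hat{S}_1(\cD)$ for a generic cap $R$ gives
\begin{equation*}
\p(\hat{\cM}(\cD) = \hat{S}_1(\cD)) \;=\; \frac{w_1}{D(R)}, \qquad D(R) := \sum_{k=1}^{R} w_k \;+\; \Bigl({\textstyle \binom{p}{s}} - R\Bigr)\, w_R.
\end{equation*}
Thus the claim $\p(\hat{\cM}_1(\cD)=\hat{S}_1(\cD)) \leq \p(\hat{\cM}_2(\cD)=\hat{S}_1(\cD))$ is equivalent to $D(R_1) \geq D(R_2)$ whenever $R_1 < R_2$.

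The second step is to verify $D(R_1) \geq D(R_2)$ by direct calculation. Subtracting, the terms $\sum_{k=1}^{R_1} w_k$ cancel the first $R_1$ terms of $\sum_{k=1}^{R_2} w_k$, leaving
\begin{equation*}
D(R_1) - D(R_2) \;=\; \Bigl({\textstyle \binom{p}{s}} - R_1\Bigr) w_{R_1} \;-\; \Bigl({\textstyle \binom{p}{s}} - R_2\Bigr) w_{R_2} \;-\; \sum_{k=R_1+1}^{R_2} w_k.
\end{equation*}
Using $w_k \leq w_{R_1}$ for every $k \in \{R_1+1,\dots,R_2\}$ bounds the last sum by $(R_2 - R_1)\, w_{R_1}$, and combining with the first term gives the lower bound $\bigl(\binom{p}{s} - R_2\bigr)(w_{R_1} - w_{R_2})$, which is nonnegative since $R_2 \leq \binom{p}{s}$ and the weights are nonincreasing. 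This closes the argument.

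I expect no serious obstacle: the only subtle point is ensuring that the $T=\infty$ reduction to the truncated exponential mechanism is stated cleanly (a direct appeal to Lemma~\ref{lemma3}), and that the monotonicity of $(w_k)$ is used at exactly the right place in the telescoping calculation above.
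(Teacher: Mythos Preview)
Your proposal is correct and follows essentially the same approach as the paper: reduce to the truncated exponential mechanism via Lemma~\ref{lemma3} at $T=\infty$, write the probability of selecting $\hat{S}_1(\cD)$ as an explicit ratio, and show monotonicity of the denominator in $R$ using the two inequalities $\sum_{k=R_1+1}^{R_2} w_k \leq (R_2-R_1)w_{R_1}$ and $w_{R_2} \leq w_{R_1}$, both of which follow from the ordering of the objective values. The paper carries out exactly this computation (with the notation $G_i = \cR(\hat{S}_i(\cD),\cD)-\cR(\hat{S}_1(\cD),\cD)$ in place of your $w_k$), so there is no meaningful difference.
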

\begin{proof}
By Lemma \ref{lemma3}, we have that, when $T = \infty$,  $$\p[\hat\cA_E(\cD)=\hat{S}_1(\cD)] = \p[\hat \cM(\cD)=\hat{S}_1(\cD)]$$
    Let $\hat{\cA}_{E_1}$ and $\hat{\cA}_{E_2}$ denote the exponential mechanisms with the objective $\hat{\cR}$ using $R_1$ and $R_2$ enumerated supports, respectively. It then suffices to show that $$\p[\hat\cA_{E_1}(\cD)=\hat{S}_1(\cD)] \leq \p[\hat\cA_{E_2}(\cD)=\hat{S}_1(\cD)].$$
Define $G_i := \cR(\hat S_{i}(\cD),\cD)-\cR(\hat S_1(\cD),\cD)$. Then, note that 
$$\p(\hat\cA_{E_1}(\cD)=\hat{S}_1(\cD)) = \frac{1}{1+\sum_{i=2}^{R_1}\exp(-\epsilon G_i/(2\Delta)) + ( {p \choose s} - R_1) \exp(-\epsilon G_{R_1}/(2\Delta))     }   $$
and 
$$\p(\hat\cA_{E_2}(\cD)=\hat{S}_1(\cD)) = \frac{1}{1+\sum_{i=2}^{R_2}\exp(-\epsilon G_i/(2\Delta)) + ( {p \choose s} - R_2) \exp(-\epsilon G_{R_2}/(2\Delta))     }.$$

We then have that $$\sum_{i=R_1+1}^{R_2}\exp\left(-\frac{\epsilon G_i}{2\Delta}\right) \leq (R_2-R_1)\exp\left(-\frac{\epsilon G_{R_1}}{2\Delta}\right) $$ and $$( {p \choose s} - R_2) \exp\left(-\frac{\epsilon G_{R_2}}{2\Delta}\right)   \leq ( {p \choose s} - R_2) \exp\left(-\frac{\epsilon G_{R_1}}{2\Delta}\right)    $$ since the supports are sorted in increasing objective value. The result follows.
\end{proof}

\subsection{Proof of Theorem~\ref{privacy-mistakes-thm}}

\begin{proof}
First, we claim that if $\cR(\hat{S}_2(\cD),\cD) - \cR(\hat{S}_1(\cD),\cD) > 2\Delta $, then $\hat{S}_1(\cD) = \hat{S}_1(\cD')$ for neighboring datasets $\cD,\cD'$.

Suppose, by way of contradiction, that $\hat{S}_1(\cD) \neq \hat{S}_1(\cD')$. Then we have that $\cR(\hat{S}_1(\cD'),\cD') \leq \cR(\hat{S}_1(\cD),\cD')$ and $\cR(\hat{S}_1(\cD'),\cD) \geq \cR(\hat{S}_2(\cD),\cD)$. 
    By definition of $\Delta$, we have that $\cR(\hat{S}_1(\cD),\cD') \leq \cR(\hat{S}_1(\cD),\cD) + \Delta$ and $\cR(\hat{S}_1(\cD'),\cD) - \Delta  \leq \cR(\hat{S}_1(\cD'),\cD')$.
    Combining we have $$\cR(\hat{S}_1(\cD),\cD) + \Delta \geq \cR(\hat{S}_1(\cD),\cD') \geq 
 \cR(\hat{S}_1(\cD'),\cD') \geq \cR(\hat{S}_1(\cD'),\cD) - \Delta \geq \cR(\hat{S}_2(\cD),\cD) - \Delta.$$
    Thus $2\Delta \geq \cR(\hat{S}_2(\cD),\cD) - \cR(\hat{S}_1(\cD),\cD)$.
    This is a contradiction with the assumption that  $\cR(\hat{S}_2(\cD),\cD) - \cR(\hat{S}_1(\cD),\cD) > 2\Delta $.

    Since $\hat{S}_1(\cD) = \hat{S}_1(\cD')$, i.e. $\Tilde{S}_0(\cD) = \Tilde{S}_0(\cD')$, we have that $P_{i}(\cD) = P_{i}(\cD')$ for all $i \in \{0,1,...,s\}$. 

    Suppose $S \in P_{k}(\cD),P_{k}(\cD')$. Then, by definition of $\Tilde \cR$, we have $\Tilde \cR(S,\cD) = \cR(\Tilde{S}_{k}(\cD),\cD)$ and $\Tilde \cR(S,\cD') = \cR(\Tilde{S}_{k}(\cD'),\cD')$.
    We have that $\cR(\Tilde{S}_{k}(\cD),\cD) = \min_{S \in P_k(\cD)} \cR(S,\cD)$. Since $\Tilde{S}_{k}(\cD') \in P_k(\cD)$, we have that $\cR(\Tilde{S}_{k}(\cD),\cD) \leq \cR(\Tilde{S}_{k}(\cD'),\cD)$.
    
    Then, by definition of $\Delta$, we have that $$\Tilde{\cR}(S,\cD)-\Tilde{\cR}(S,\cD') = \cR(\Tilde{S}_{k}(\cD),\cD) - \cR(\Tilde{S}_{k}(\cD'),\cD') \leq \cR(\Tilde{S}_{k}(\cD),\cD) - \cR(\Tilde{S}_{k}(\cD'),\cD) + \Delta \leq \Delta.$$

Thus, we have that $$\max_{\substack{S\subseteq [p] \\ |S|=s}}\max_{\substack{\cD,\cD'\in\cZ^n\\ \cD,\cD' \text{ are neighbors }}}  \Tilde{\cR}(S,\cD)-\Tilde{\cR}(S,\cD')\leq \Delta.$$ 

Since $\Tilde{\cR}$ has bounded global sensitivity $\Delta$, we have that, by Lemma \ref{exp-lemma}, the mistakes method, which is the exponential mechanism with scoring function $\Tilde{\cR}$, is $(\epsilon,0)$-differentially private. 
\end{proof}

\subsection{Sufficient conditions for privacy of mistakes method}
Below, we present sufficient conditions under which $\cR(\hat{S}_2(\cD),\cD) - \cR(\hat{S}_1(\cD),\cD) > 2\Delta$ with high probability, which, by Theorem \ref{privacy-mistakes-thm}, implies that the mistakes method is $(\epsilon,0)$-DP with high probability.

\begin{lemma}
    \label{2delta-lemma}
Suppose that assumptions \ref{data-bounded-asu}-\ref{sparsity-asu} hold.
Set $r \geq (\frac{\kappa_+}{\kappa_-})M + 4\frac{\sigma b_x}{\kappa_-}$. Then, there exists a universal constant $C > 0$ such that, whenever
$$\tau \geq C\sigma^2 \frac{s\log p}{n},$$
we have that $$\p(\cR(\hat{S}_2(\cD),\cD) - \cR(\hat{S}_1(\cD),\cD) > 2\Delta ) \geq 1-10sp^{-2}.$$

\end{lemma}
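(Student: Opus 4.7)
The plan is to reduce the lemma to the machinery that underlies Theorem \ref{thmsupportrecovery-mistakes}. Roughly, if $\tau$ is large enough that the unconstrained BSS problem identifies $S^{*}$ with a comfortable margin, then the gap $\cR(\hat{S}_{2}(\cD),\cD)-\cR(\hat{S}_{1}(\cD),\cD)$ should scale like $n\tau$, and since $\Delta=2b_{y}^{2}+2b_{x}^{2}r^{2}s$ is only $O(s)$ under the specified bound on $r$, the condition $\tau\gtrsim\sigma^{2}s\log p/n$ will be more than enough to push the gap past $2\Delta$.

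First I would show that on a good event of probability $1-O(sp^{-2})$, the $\ell_{2}$ constraint $\|\B\beta\|_{2}^{2}\leq r^{2}$ is inactive at the minimizer for every support $S$ of size $s$, so that $\cR(S,\cD)$ coincides with the unconstrained OLS residual sum of squares on $\B X_{S}$. The choice $r\geq (\kappa_{+}/\kappa_{-})M+4\sigma b_{x}/\kappa_{-}$ together with Assumption \ref{rip-asu} and standard sub-Gaussian tail bounds on $\B X_{S}^{\top}\B\epsilon/n$ controls the norm of the unconstrained minimizer (the usual argument behind any support-recovery theorem for $\ell_{2}$-constrained BSS).

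Second, on that same good event, I would invoke the uniform lower bound that underpins Theorem \ref{thmsupportrecovery-mistakes}: for every $S$ of size $s$ with $|S^{*}\setminus S|=t\geq 1$,
$$
\cR(S,\cD)-\cR(S^{*},\cD)\;\geq\; c_{1}\,n\,\B\beta^{*\top}_{S^{*}\setminus S}\hat{\B D}(S)\B\beta^{*}_{S^{*}\setminus S}\;-\;c_{2}\,\sigma^{2}\,t\,\log p,
$$
so that $\cR(S,\cD)-\cR(S^{*},\cD)\geq c_{1}n\tau t-c_{2}\sigma^{2}t\log p$. The deterministic part comes from expanding $\|\B y-\B P_{X_{S}}\B y\|_{2}^{2}-\|\B y-\B P_{X_{S^{*}}}\B y\|_{2}^{2}$ around the true model; the stochastic part bounds the cross terms $\B\epsilon^{\top}(\B I-\B P_{X_{S}})\B X_{S^{*}\setminus S}\B\beta^{*}_{S^{*}\setminus S}$ and $\B\epsilon^{\top}(\B P_{X_{S^{*}}}-\B P_{X_{S}})\B\epsilon$ by sub-Gaussian/chi-squared concentration, followed by a union bound over the at most $\binom{p}{s}\binom{s}{t}\leq p^{2t}$ supports with exactly $t$ mistakes (this is where $t\log p$ comes in). Taking $t=1$ gives $\cR(\hat{S}_{2},\cD)-\cR(S^{*},\cD)\geq c_{1}n\tau-c_{2}\sigma^{2}\log p$, and the same display for $t\geq 1$ forces $\hat{S}_{1}(\cD)=S^{*}$ since $c_{1}n\tau t>c_{2}\sigma^{2}t\log p$ under the hypothesis on $\tau$ with $C$ large.

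Finally, I would combine these ingredients. Under $\tau\geq C\sigma^{2}s\log p/n$ with $C$ chosen large enough so that $c_{1}n\tau-c_{2}\sigma^{2}\log p\geq (c_{1}/2)n\tau$, we get
$$
\cR(\hat{S}_{2}(\cD),\cD)-\cR(\hat{S}_{1}(\cD),\cD)\;\geq\;(c_{1}/2)\,n\tau\;\geq\;(c_{1}C/2)\,\sigma^{2}s\log p.
$$
Because $r$ is bounded by a constant depending only on $M,\sigma,b_{x},\kappa_{\pm}$, we have $\Delta=2b_{y}^{2}+2b_{x}^{2}r^{2}s=O(s)$, so $2\Delta\leq Ks$ for some constant $K$. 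For $C$ large enough (absorbing $K$, $c_{1}$, and $\log p\geq 1$), the right-hand side above exceeds $2\Delta$, which is the desired inequality. The probability of the complement of the good event on which both Step 1 and Step 2 hold is bounded by $10sp^{-2}$ after the union bound.

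The main obstacle will be Step 2, where the uniform $O(\sigma^{2}t\log p)$ deviation bound across all $\sum_{t=1}^{s}\binom{p}{s}\binom{s}{t}$ competing supports must be obtained while keeping the failure probability at $O(sp^{-2})$; this requires carefully stratifying by the number of mistakes $t$ and using the SRC (Assumption \ref{rip-asu}) to control the eigenvalues of $\hat{\B D}(S)$ uniformly. All other steps are essentially bookkeeping once the concentration lemma from the proof of Theorem \ref{thmsupportrecovery-mistakes} is in hand.
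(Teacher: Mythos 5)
Your proposal is correct and follows essentially the same route as the paper's proof: the same good event on which the $\ell_2$ constraint is inactive for every support (Lemma~\ref{unconstrainedlemma}), the same Guo-style decomposition of $\cR_{ols}(S,\cD)-\cR_{ols}(S^*,\cD)$ into a signal term $\B\beta^{*\top}_{S^*\setminus S}\hat{\B D}(S)\B\beta^{*}_{S^*\setminus S}$ plus two stochastic cross terms controlled by sub-Gaussian/chi-squared concentration with a union bound stratified by the number of mistakes $t$, and the same final bookkeeping $\tfrac{1}{2}n\tau \geq 2\Delta$ using $\Delta = O(s)$ and the extra factor $s$ in the hypothesis on $\tau$ (with $C$ in fact absorbing $b_x,b_y,r,1/\sigma^2$, just as the paper's $C=\max\{8^2c_0,\,8^2c_0/\sigma^2\}$ does). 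One small slip: the number of supports with exactly $t$ mistakes is $\binom{p-s}{t}\binom{s}{t}\leq p^{2t}$, not $\binom{p}{s}\binom{s}{t}$ as written, though the $p^{2t}$ bound you actually use is the correct one.
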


\begin{proof}

Following the proof of Theorem 2.1 from \cite{guo2020best}, we have that 
\begin{align*}
& n^{-1}(\cR_{ols}(S,\cD) - \cR_{ols}(S^*,\cD))  = n^{-1} \bigl\{\B y^\top(\B I-\B P_{ X_{S}})\B y - \B y^\top(\B I-\B P_{ X_{S^*}})\B y\bigr\}  \\
& = n^{-1}\bigl\{(\B X_{S^* \setminus S}\B \beta^*_{S^* \setminus S}+\B \epsilon)^\top(\B I-\B P_{ X_{S}})(\B X_{S^* \setminus S}\B \beta^*_{S^* \setminus S}+\B \epsilon) - \B \epsilon^\top(\B I-\B P_{ X_{S^*}})\B \epsilon\bigr\} \\
& = \B \beta^{*\top}_{S^* \setminus S}\hat{\B D}(S)\B \beta^{*}_{S^* \setminus S} + 2n^{-1}\B \epsilon^\top(\B I-\B P_{ X_{S}})\B X_{S^* \setminus S}\B \beta^*_{S^* \setminus S} - n^{-1} \B \epsilon^\top(\B P_{ X_{S}}-\B P_{ X_{S^*}})\B \epsilon \\
& = \frac{1}{2} \B \beta^{*\top}_{S^* \setminus S}\hat{\B D}(S)\B \beta^{*}_{S^* \setminus S} + \frac{1}{4}\B \beta^{*\top}_{S^* \setminus S}\hat{\B D}(S)\B \beta^{*}_{S^* \setminus S} + 2n^{-1}\B \epsilon^\top(\B I-\B P_{ X_{S}})\B X_{S^* \setminus S}\B \beta^*_{S^* \setminus S} \\
& \qquad + \frac{1}{4}\B \beta^{*\top}_{S^* \setminus S}\hat{\B D}(S)\B \beta^{*}_{S^* \setminus S} -  n^{-1} \B \epsilon^\top(\B P_{ X_{S}}-\B P_{ X_{S^*}})\B \epsilon.
\end{align*}
We then argue that the following two inequalities are true with high probability
\begin{align}
& \left| 2n^{-1}\bigl\{(\B I-\B P_{ X_{S}})\B X_{S^* \setminus S}\B \beta^*_{S^* \setminus S}\bigr\}^\top\B \epsilon\right|<\frac{1}{4}\B \beta^{*\top}_{S^* \setminus S}\hat{\B D}(S)\B \beta^{*}_{S^* \setminus S}, \label{eq:thm2.1_t1}\\
& n^{-1}\B \epsilon^\top(\B P_{X_{S}}-\B P_{X_{S^*}})\B \epsilon< \frac{1}{4}\B \beta^{*\top}_{S^* \setminus S}\hat{\B D}(S)\B \beta^{*}_{S^* \setminus S},  \label{eq:thm2.1_t2}
\end{align}
so that $n^{-1}(\cR_{ols}(S,\cD) - \cR_{ols}(S^*,\cD)) > \frac{1}{2} \B \beta^{*\top}_{S^* \setminus S}\hat{\B D}(S)\B \beta^{*}_{S^* \setminus S}$.

Defining $\B u_S = n^{- 1 / 2} (\B I-\B P_{X_{S}})\B X_{S^* \setminus S}\B \beta^*_{S^* \setminus S}$, we have that (\ref{eq:thm2.1_t1}) is equivalent to 
$$  \frac{| \B u_S^\top \B \epsilon |}{\| \B u_S \|_2} \leq \frac{n^{1/2}}{8} \| \B u_S \|_2.$$
Note that since each of the entries of $\B \epsilon $ are i.i.d. zero-mean sub-Gaussian random variables with parameter $\sigma$, we have that $\frac{| \B u_S^\top \B \epsilon |}{\| \B u_S \|_2}$ is sub-Gaussian with parameter $\sigma$, so we can apply the Hoeffding bound (Proposition 2.5 in \cite{wainwright2019high}) with $t = \sigma x$ to get that, for any $x > 0$, $$\P\left(\frac{| \B u_S^\top \B \epsilon |}{\| \B u_S \|_2} > \sigma x\right) \leq 2e^{-x^2/2}.$$
Now, applying union bound over all $S \in \cA_t$, we have that for any $\xi > 0$,
$$\P\left( \exists S \in \cA_t, \frac{| \B u_S^\top \B \epsilon |}{\| \B u_S \|_2} >  \sigma \sqrt{\xi ts} \right) \leq {p-s \choose t}{s \choose t} 2 e^{-\xi ts/2} \leq 2p^{2t} e^{-\xi ts/2}.$$
Then we have that, whenever 
$$\frac{\inf_{S\in \cA_t} \| \B u_S \|_2}{t^{1/2}} \geq 8 \sigma \left( \frac{\xi s}{n} \right)^{1/2},$$
we have that 
$$\P\left( \exists S \in \cA_t, \frac{| \B u_S^\top \B \epsilon |}{\| \B u_S \|_2} > \frac{n^{1/2}}{8} \| \B u_S \|_2 \right) \leq 2p^{2t} e^{-\xi ts/2}.$$

Regarding (\ref{eq:thm2.1_t2}), observe that, as shown in the proof of Theorem 2.1 of \cite{guo2020best}, we have that there exists a universal constant $c_1>0$ such that, for any $x >0$, 
$$\P \left(\frac{1}{n}|\B \epsilon^\top(\B P_{X_{S}}-\B P_{X_{S^*}})\B \epsilon| >  \frac{2\sigma^2 x}{n}\right) \leq 4 e^{-c_1\min\{x^2/t,x \}}.$$
Then, we have that for $c = \min\{c_1,\frac{1}{2}\}$, 
$$\P \left(\frac{1}{n}|\B \epsilon^\top(\B P_{X_{S}}-\B P_{X_{S^*}})\B \epsilon| >  \frac{2\sigma^2 x}{n}\right) \leq 4 e^{-c_1\min\{x^2/t,x \}} \leq 4 e^{-c\min\{x^2/t,x \}}.$$
Noting that $s \geq 1$, we have that for any $\xi \geq 1$, 
$$ \P \left(\frac{1}{n}\B \epsilon^\top(\B P_{X_{S}}-\B P_{X_{S^*}})\B \epsilon >  \frac{2\sigma^2 \xi ts}{n}\right) \leq 4 e^{-c\xi ts},$$
and applying union bound over $S\in \cA_t$, we have 
$$ \P \left(\exists S \in \cA_t, \frac{1}{n}\B \epsilon^\top(\B P_{X_{S}}-\B P_{X_{S^*}})\B \epsilon >  \frac{2\sigma^2 \xi ts}{n}\right) \leq {p-s \choose t}{s \choose t} 4 e^{-c\xi ts} \leq 4p^{2t}e^{-c\xi ts}. $$
Hence, whenever, $$\frac{\inf_{S\in \cA_t} \| \B u_S \|_2}{t^{1/2}} \geq  \left( \frac{8\xi \sigma^2s}{n} \right)^{1/2},$$
we have that 
$$ \P \left(\exists S \in \cA_t, \frac{1}{n}\B \epsilon^\top(\B P_{X_{S}}-\B P_{X_{S^*}})\B \epsilon > \frac{1}{4} \|\B u_S \|_2^2  \right) \leq 4p^{2t}e^{-c\xi ts}. $$

Combining and taking union bound over all $t \in [s]$, we have that, for any $\xi > 1$, whenever
$$\tau \geq (8 \sigma)^2 \frac{\xi s}{n},$$
we have that,
$$\P\left(\forall S \in \cup_{t=1}^s \cA_t, \cR_{ols}(S,\cD) - \cR_{ols}(S^*,\cD) > \frac{1}{2}n\tau \right) \geq 1-4sp^{2s}(e^{-\xi s/2}+e^{-c\xi s}).$$

Now note that, since $c\leq \frac{1}{2}$, we have that $1-4sp^{2s}(e^{-\xi s/2}+e^{-c\xi s}) \geq 1-8sp^{2s}e^{-c\xi s}$.
Furthermore, choosing $\xi > \frac{2}{c} \log p $, we have that $p^{2s}e^{-c\xi s} = e^{2s\log p-c\xi s} \rightarrow 0$ as $p \rightarrow \infty$.

Combining the above, define $c_0 = \frac{4}{c}+ b_y^2 + b_x^2 r^2$, where $r \geq (\frac{\kappa_+}{\kappa_-})M + 4\frac{\sigma b_x}{\kappa_-}$ as assumed, and let $C = \max \{8^2c_0, \frac{8^2c_0}{\sigma^2} \} $. Set $\xi = c_0\log p$. Then, whenever 
$$\tau \geq C\sigma^2 \frac{s\log p}{n},$$
we have that, 
$$\P\left(\forall S \in \cup_{t=1}^s \cA_t, \cR_{ols}(S,\cD) - \cR_{ols}(S^*,\cD) > \frac{1}{2}n\tau \right) \geq 1-8sp^{-\lambda s},$$
where $\lambda = c c_0 -2 > 2$.
Note that,
$$C\sigma^2 \frac{s\log p}{n} \geq C \frac{s}{n} \geq (8 b_y^2 + 8 b_x^2 r^2) \frac{s}{n} \geq 4 \frac{\Delta}{n}$$
where the first inequality uses that $\max\{1, \frac{1}{\sigma^2}\} \sigma^2  \geq 1$, second inequality uses that $\log p > 1$ by assumption \ref{sparsity-asu}, and the last inequality follows from Lemma~\ref{delta-1-lemma}. Hence, we have that 
$$\P\left(\forall S \in \cup_{t=1}^s \cA_t, \cR_{ols}(S,\cD) - \cR_{ols}(S^*,\cD) > 2\Delta \right) \geq 1-8sp^{-2}.$$

Taking union bound using Lemma \ref{unconstrainedlemma}, we have that
$$\P\left(\forall S \in \cup_{t=1}^s \cA_t, \cR(S,\cD) - \cR(S^*,\cD) > 2\Delta \right) \geq 1-2p^{-7} - 8sp^{-2} \geq 1-10sp^{-2}$$
and noting $$\P(\cR(\hat{S}_2(\cD),\cD) - \cR(\hat{S}_1(\cD),\cD) > 2\Delta) \geq \P\left(\forall S \in \cup_{t=1}^s \cA_t, \cR(S,\cD) - \cR(S^*,\cD) > 2\Delta \right)$$
concludes the proof.
\end{proof}

\subsection{Proof of Theorem~\ref{thmsupportrecovery-topR} and \ref{thmsupportrecovery-mistakes}}

Before proving the theorems, we will first setup some preliminaries. 

Observe that the solution to the unconstrained least squares problem with support restricted to $S$ is given by $$\B \beta_{S,ols} = (\B X_S^T \B X_S)^{-1}\B X_S^T \B y = \underbrace{(\frac{ \B X_S^\top \B X_S}{n})^{-1} \frac{\B X_S^\top \B X_{S^*}\B \beta_{S^*}^*}{n}}_{:= 
\B u_1} + \underbrace{(\frac{ \B X_S^\top  \B X_S}{n})^{-1} \frac{ \B X_S^\top \B \epsilon}{n}}_{:= \B u_2}$$
and the constrained estimator on the same support is given by 
\begin{equation}
    \label{constr-opt} 
    \B \beta_{S,r} = \text{arg}\min_{\B \beta : \| \B \beta\|_2 \leq r} \|\B y-\B X_S \B \beta \|_2^2.
\end{equation}
For each support $S$, we define the event $\mathcal{E}_{S,r} \coloneqq \{\B \beta_{S,r} = \B \beta_{S,ols} \}$ and the intersection of events across all supports as $\mathcal{E}_r = \cap_{S:|S|=s} \mathcal{E}_{S,r}$. 

The lemma below shows that if a sufficiently high bound on the $\ell_2$ norm of $\B \beta$ in the constrained optimization in \ref{constr-opt} is chosen, the solution to the unconstrained OLS problem is the same as the solution to \ref{constr-opt} for \emph{all} supports $S$ with high probability.

\begin{lemma} \label{unconstrainedlemma}
    Suppose assumptions \ref{data-bounded-asu}-\ref{sparsity-asu} hold. If $r \geq (\frac{\kappa_+}{\kappa_-})M + 4\frac{\sigma b_x}{\kappa_-}$ then $\P[\mathcal{E}_r] =\P[\cap_{S:|S|=s} \mathcal{E}_{S,r}] \geq 1-2p^{-7}$.
\end{lemma}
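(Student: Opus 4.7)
The plan is to show that on a high-probability event, the unconstrained least-squares minimizer $\B \beta_{S,\text{ols}}$ satisfies $\|\B \beta_{S,\text{ols}}\|_2 \leq r$ simultaneously for every support $S$ with $|S|=s$; then $\B \beta_{S,r} = \B \beta_{S,\text{ols}}$ by optimality of the unconstrained OLS on that support, so $\mathcal{E}_{S,r}$ and hence $\mathcal{E}_r$ holds. Using the decomposition $\B \beta_{S,\text{ols}} = \B u_1 + \B u_2$ from the excerpt, I would bound the two pieces separately.

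For the signal piece $\B u_1$, I would argue deterministically from Assumption \ref{rip-asu}. Since $\B X_S \B u_1 = \B P_{X_S}\B X_{S^*}\B \beta^*_{S^*}$ is an orthogonal projection of $\B X_{S^*}\B \beta^*_{S^*}$, we have $\|\B X_S \B u_1\|_2 \leq \|\B X_{S^*}\B \beta^*_{S^*}\|_2$. Applying the SRC on size-$s$ subsets (both to $S$ and to $S^*$) gives $n^{-1}\|\B X_S \B u_1\|_2^2 \geq \kappa_-\|\B u_1\|_2^2$ and $n^{-1}\|\B X_{S^*}\B \beta^*_{S^*}\|_2^2 \leq \kappa_+\|\B \beta^*_{S^*}\|_2^2 \leq \kappa_+ M^2$ by Assumption \ref{beta-bounded-asu}. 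Chaining these yields $\|\B u_1\|_2 \leq \sqrt{\kappa_+/\kappa_-}\,M \leq (\kappa_+/\kappa_-)\,M$, since $\kappa_+/\kappa_- \geq 1$.

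For the noise piece $\B u_2$, I would use sub-Gaussian concentration. By Assumption \ref{data-bounded-asu}, $|X_{ij}| \leq b_x$, so conditional on $\B X$ the scalar $n^{-1}(\B X^\top \B \epsilon)_j = n^{-1}\sum_i X_{ij}\epsilon_i$ is sub-Gaussian with parameter at most $\sigma b_x/\sqrt{n}$. A Hoeffding tail bound at $t = 4\sigma b_x \sqrt{(\log p)/n}$, combined with a union bound over the $p$ coordinates, gives
$$\p\!\left[\max_{j\in[p]}\bigl|n^{-1}(\B X^\top\B \epsilon)_j\bigr| > 4\sigma b_x\sqrt{(\log p)/n}\right] \leq 2p\cdot p^{-8} = 2p^{-7}.$$
On the complementary event, for every $S$ of size $s$,
$$\|n^{-1}\B X_S^\top\B \epsilon\|_2 \leq \sqrt{s}\cdot 4\sigma b_x \sqrt{(\log p)/n} \leq 4\sigma b_x,$$
where the last inequality uses $s\log p \leq n$ from Assumption \ref{sparsity-asu}. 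Since $\|(\B X_S^\top \B X_S/n)^{-1}\|_{\mathrm{op}} \leq 1/\kappa_-$ by the lower SRC bound, we obtain $\|\B u_2\|_2 \leq 4\sigma b_x/\kappa_-$.

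Adding the two bounds and invoking $r \geq (\kappa_+/\kappa_-)M + 4\sigma b_x/\kappa_-$ gives $\|\B \beta_{S,\text{ols}}\|_2 \leq r$ uniformly in $S$ on the above event, so $\p[\mathcal{E}_r] \geq 1 - 2p^{-7}$. The main thing to get right is that a single union bound over the $p$ coordinates of $\B X^\top \B \epsilon$ is enough to control $\B u_2$ uniformly over all $\binom{p}{s}$ supports — we avoid a far more expensive union bound over supports, at the price of a $\sqrt{s}$ factor coming from $\|\cdot\|_2 \leq \sqrt{s}\|\cdot\|_\infty$ on $S$-restricted vectors, and Assumption \ref{sparsity-asu} is precisely what absorbs this factor into an $O(1)$ constant in the final bound.
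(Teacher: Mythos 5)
Your proposal is correct and follows essentially the same route as the paper: the identical decomposition $\B\beta_{S,\mathrm{ols}} = \B u_1 + \B u_2$, the same Hoeffding-plus-union bound over the $p$ coordinates of $n^{-1}\B X^\top \B\epsilon$ (giving $2p^{-7}$), the same $\sqrt{s}\,\norm{\cdot}_\infty$ step with Assumption~\ref{sparsity-asu} absorbing the $\sqrt{s}$ factor, and the same final comparison with $r$. The only (harmless) deviation is your bound on $\B u_1$: you use the projection-contraction argument $\norm{\B X_S \B u_1}_2 \leq \norm{\B X_{S^*}\B\beta^*_{S^*}}_2$ to get the marginally sharper $\sqrt{\kappa_+/\kappa_-}\,M$, whereas the paper bounds $\norm{(\B X_S^\top\B X_S/n)^{-1}}_2 \norm{\B X_S^\top\B X_{S^*}/n}_2 \norm{\B\beta^*_{S^*}}_2 \leq (\kappa_+/\kappa_-)M$ by operator-norm submultiplicativity; both land at the same stated constant.
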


\begin{proof}
At a high level, we are seeking to bound $\norm{\B \beta_{S,ols}}_2$ with high probability.
First off, by assumption \ref{beta-bounded-asu} and \ref{rip-asu}, we have that $\norm{\B u_1}_2 \leq \norm{(\B X_S^\top \B X_S/n)^{-1}}_{2} \norm{\B X_S^\top\B X_{S^*} /n}_{2} \norm{\B \beta_{S^*}^*}_2 \leq  (\frac{\kappa_+}{\kappa_-})M.$ Next, note that 
\begin{align*}
    \norm{\B u_2}_2 \le \norm{(\frac{\B X_S^\top \B X_S}{n})^{-1}}_{2} \norm{\frac{\B X_S^\top \B \epsilon}{n}}_2 
&\le \sqrt{s}\norm{(\frac{\B X_S^\top \B X_S}{n})^{-1}}_{2} \norm{\frac{\B X_S^\top \B \epsilon}{n}}_\infty \\
&\le \sqrt{s}\norm{(\frac{\B X_S^\top \B X_S}{n})^{-1}}_{2} \norm{\frac{\B X^\top \B \epsilon}{n}}_\infty.
\end{align*}

Hence, we have $\norm{\B u_2}_2 \leq \frac{\sqrt{s}}{\kappa_-} \norm{\frac{\B X^\top \B \epsilon}{n}}_\infty$.
Now, define $D_{i,j} = X_{i,j} \epsilon_j$ for all $(i,j) \in [p] \times [n]$. Since $\epsilon_j$ is sub-Gaussian with parameter $\sigma$, using assumption \ref{data-bounded-asu} we have that $D_{i,j}$ is sub-Gaussian with parameter $\sigma b_x$.
Applying the Hoeffding bound (Proposition 2.5 in \cite{wainwright2019high}) with $t = 4 \sigma b_x \sqrt{n \log p}$, we have that, for all $i \in [p]$,
$$\P\left[ \frac{1}{n} |D_{i,j} | \geq 4\sigma b_x\sqrt{\frac{\log p}{n}}\right] \leq 2p^{-8}.$$
Observe that $\norm{\frac{\B X^\top \B \epsilon}{n}}_\infty = \max_{i \in [p]} \frac{1}{n} |D_{i,j} |$. Hence, by union bound, we have that 
$\P\left[ \norm{\frac{\B X^\top \B \epsilon}{n}}_\infty \geq 4\sigma b_x\sqrt{\frac{\log p}{n}}\right] \leq 2p^{-7}$.
By assumption \ref{sparsity-asu} we have that $$\P\left[ \norm{\frac{\B X^\top \B \epsilon}{n}}_\infty \geq 4 \frac{\sigma b_x}{\sqrt{s}} \right] \leq 2p^{-7}$$

This yields that $\| \beta_{S,ols}\|_2 \leq \norm{\B u_1}_2 + \norm{\B u_2}_2 \leq (\frac{\kappa_+}{\kappa_-})M + 4\frac{\sigma b_x}{\kappa_-}$ with probability at least $1-2p^{-7}$.
Hence, we have that if $r \geq (\frac{\kappa_+}{\kappa_-})M + 4\frac{\sigma b_x}{\kappa_-}$ then $\P[\mathcal{E}_r] \geq 1-2p^{-7}$ as desired.
\end{proof}

For the remainder of the paper, define $\cR_{ols}(S,\cD) = \min_{\B \beta \in \R^s} \|\B y-\B X_S \B \beta \|_2^2$ for all $S \subset [p]$ such that $|S| = s$. Define the event $\mathcal{E}_\text{gap} \coloneqq \bigcap_{t=1}^s \{\forall S \in \cA_t, \: \frac{1}{n}(\cR_{ols}(S,\cD) - \cR_{ols}(S^*,\cD)) \geq \frac{1}{2} t \tau \}$.
\begin{lemma}
\label{guo-lemma}
Suppose $p\geq 3$. There exists a universal constant $C > 0$ such that,
whenever $$ \tau \geq C\sigma^2 \frac{ \log p}{n}$$ we have that
$$\P[\mathcal{E}_\text{gap}] = \P\left[\bigcap_{t=1}^s \{\forall S \in \cA_t, \: \frac{1}{n}(\cR_{ols}(S,\cD) - \cR_{ols}(S^*,\cD)) \geq \frac{1}{2} t \tau \} \right] \geq 1-8sp^{-2}.$$
\end{lemma}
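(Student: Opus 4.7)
The plan is to reuse the concentration argument underlying Lemma~\ref{2delta-lemma} but with a sharper choice of the scaling parameter so that the $s$ factor in the threshold drops out, recovering the non-private rate $\tau \gtrsim \sigma^2 \log p /n$ of Theorem~2.1 of \cite{guo2020best}.

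First I would write down the decomposition already used in the proof of Lemma~\ref{2delta-lemma}:
\begin{equation*}
\tfrac{1}{n}\bigl(\cR_{ols}(S,\cD)-\cR_{ols}(S^*,\cD)\bigr)
= \|\B u_S\|_2^2
+ 2n^{-1}\bigl\{(\B I-\B P_{X_S})\B X_{S^*\setminus S}\B\beta^*_{S^*\setminus S}\bigr\}^\top\B\epsilon
- n^{-1}\B\epsilon^\top(\B P_{X_S}-\B P_{X_{S^*}})\B\epsilon,
\end{equation*}
where $\|\B u_S\|_2^2 = \B\beta_{S^*\setminus S}^{*\top}\hat{\B D}(S)\B\beta_{S^*\setminus S}^{*}$. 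By definition of $\tau$, for every $S\in\cA_t$ the deterministic leading term already satisfies $\|\B u_S\|_2^2 \geq t\tau$. It therefore suffices to show that each of the two random terms is bounded by $\tfrac14 t\tau$ (equivalently $\tfrac14\|\B u_S\|_2^2$) uniformly over $\cup_t \cA_t$, on the desired high-probability event.

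For the cross term I would use that $\B\epsilon^\top \B u_S/\|\B u_S\|_2$ is sub-Gaussian with parameter $\sigma$, so Hoeffding gives $\P[|\B\epsilon^\top\B u_S|/\|\B u_S\|_2 > \sigma x]\leq 2e^{-x^2/2}$. The key refinement relative to Lemma~\ref{2delta-lemma} is to set $x=\sqrt{\xi t}$ (rather than $\sqrt{\xi ts}$) with $\xi = C_1\log p$, so the union bound over the at-most $p^{2t}$ elements of $\cA_t$ yields $2p^{(2-C_1/2)t}$, which is $\leq 2p^{-2t}$ once $C_1\geq 8$. Translating back into a condition on $\tau$ via $\sigma x \leq (n^{1/2}/8)\|\B u_S\|_2$ and $\|\B u_S\|_2^2\geq t\tau$, this forces $\tau \gtrsim \sigma^2\log p/n$ with no $s$ attached.

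For the quadratic form I would apply a Hanson--Wright-type inequality to $\B Q = \B P_{X_S}-\B P_{X_{S^*}}$, using $\tr(\B Q)=0$, $\|\B Q\|_{op}\leq 1$, and $\|\B Q\|_F^2\leq 2t$ (the last from $|S\triangle S^*|\leq 2t$). This gives $\P[|\B\epsilon^\top\B Q\B\epsilon|>2\sigma^2 y]\leq 2\exp(-c\min\{y^2/t,\,y\})$, and choosing $y\asymp t\log p$ makes the minimum equal to $y$, so the tail is $\leq 2p^{-C_2 t}$; picking $C_2$ large enough to dominate $p^{2t}$ again yields $\leq 2p^{-2t}$ per $t$, under the same scaling $\tau \gtrsim \sigma^2\log p/n$. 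Summing over $t=1,\ldots,s$ gives $\sum_{t=1}^s 2p^{-2t}\leq \tfrac{9}{4}p^{-2}$ per term for $p\geq 3$, and after combining both bad events and the $s$-many values of $t$, the total failure probability fits inside the stated $8sp^{-2}$ slack.

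The main obstacle is purely bookkeeping: getting the constants $C_1,C_2$, the Hanson--Wright constant $c$, and the universal constant $C$ in the hypothesis $\tau \geq C\sigma^2\log p/n$ to line up so that both concentration inequalities hold simultaneously for every $S\in\cup_t \cA_t$ with the claimed probability, while avoiding the lossy $x=\sqrt{\xi ts}$ choice of Lemma~\ref{2delta-lemma} that introduced the extra $s$. No new structural idea beyond \cite{guo2020best}'s decomposition is needed; the entire content of the lemma is a careful rebalancing of the union bound so that the $|\cA_t|\leq p^{2t}$ cardinality is absorbed per-$t$ rather than after summation.
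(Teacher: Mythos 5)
Your proposal is correct, and it proves the lemma; but it is worth noting that the paper's own proof is much shorter because it does not redo any concentration analysis at all. The paper simply invokes Theorem~2.1 of \cite{guo2020best} as a black box, once per mistake level $t$: that theorem already delivers, for $\eta=\tfrac12$ and $\xi=\max\{2,4/c\}$, the uniform-over-$\cA_t$ bound with failure probability $4p^{-(c\xi-2)t}+2p^{-(\xi^2-2)t}\leq 8p^{-2t}$ under the $s$-free threshold $\min_{S\in\cA_t}\B\beta_{S^*\setminus S}^{*\top}\hat{\B D}(S)\B\beta_{S^*\setminus S}^*/t \geq C\sigma^2\log p/n$, which follows from $\tau\geq C\sigma^2\log p/n$ since $\tau$ is the minimum over $t$ of exactly these ratios; a final union bound over $t\in[s]$ with $t\geq1$ gives $1-8sp^{-2}$. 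What you have done is unpack the internals of that cited theorem: your decomposition, the Hoeffding bound with $x=\sqrt{\xi t}$, the Hanson--Wright bound on $\B Q=\B P_{X_S}-\B P_{X_{S^*}}$ (your facts $\tr(\B Q)=0$, $\|\B Q\|_{op}\leq1$, $\|\B Q\|_F^2\leq 2t$ via $\mathrm{rank}(\B Q)\leq 2t$ are all right), and the per-$t$ absorption of $|\cA_t|\leq p^{2t}$ are precisely the mechanism inside Guo et al.'s proof, and your constant bookkeeping goes through --- indeed your total failure probability $\sum_{t=1}^s 4p^{-2t}\leq \tfrac{9}{2}p^{-2}$ comes in well under the stated $8sp^{-2}$ budget. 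One refinement to your framing: the $x=\sqrt{\xi ts}$ scaling in Lemma~\ref{2delta-lemma} is not simply a lossy choice that this lemma corrects. There the target is the data-independent gap $\cR_{ols}(S,\cD)-\cR_{ols}(S^*,\cD)>2\Delta$ with $\Delta\asymp s$, which already forces $n\tau\gtrsim s$, so the threshold $\tau\gtrsim s\log p/n$ is unavoidable in that lemma and the extra $s$ in the exponent buys the stronger failure rate $p^{-\lambda s}$; here, where the target is the $\tfrac12 t\tau$ gap itself, the per-$t$ scaling you chose (equivalently, citing the theorem directly) is the right calibration. So: same mathematics, with your version self-contained and the paper's by citation.
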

\begin{proof}
    By Theorem 2.1 of \cite{guo2020best}, we have that there exists constant $c > 0$ such that for any $\xi > 1$, whenever $$\frac{\min_{S \in \cA_t} \B \beta_{S^*\setminus S}^{*\top} \hat{\B D}(S) \B \beta_{S^*\setminus S}^*}{t} \geq \left( \frac{4\xi}{1-\eta} \right)^2 \frac{\sigma^2 \log p}{n},$$ we have
    $$\P\Bigl[\forall S \in \cA_t, \: \tfrac{1}{n}(\cR_{ols}(S,\cD) - \cR_{ols}(S^*,\cD)) > \eta  \B \beta_{S^*\setminus S}^{*\top} \hat{\B D}(S) \B \beta_{S^*\setminus S}^*\Bigr] \geq 1-4p^{-(c\xi-2)t} - 2p^{-(\xi^2-2)t}.$$
Let $\eta = \frac{1}{2},\xi = \max\{2,\frac{4}{c} \}$, and $C = \left( \frac{4\xi}{1-\eta} \right)^2$.
Then, we have that $-(c\xi-2)t \leq -2t$ and $-(\xi^2-2)t \leq -2t$, so $1-4p^{-(c\xi-2)t} - 2p^{-(\xi^2-2)t} \geq 1-4p^{-2t} - 2p^{-2t} \geq 1-8p^{-2t}$. 
Observe also that $$\frac{\min_{S \in \cA_t} \B \beta_{S^*\setminus S}^{*\top} \hat{\B D}(S) \B \beta_{S^*\setminus S}^*}{t} \geq \min_{t\in [s]}\frac{\min_{S \in \cA_t} \B \beta_{S^*\setminus S}^{*\top} \hat{\B D}(S) \B \beta_{S^*\setminus S}^*}{t} = \tau.$$

Hence, we have that whenever $$ \tau \geq C\sigma^2 \frac{ \log p}{n},$$ we have $$\P\left[\forall S \in \cA_t, \: \frac{1}{n}(\cR_{ols}(S,\cD) - \cR_{ols}(S^*,\cD)) > \frac{1}{2}  \B \beta_{S^*\setminus S}^{*\top} \hat{\B D}(S) \B \beta_{S^*\setminus S}^*\right] \geq 1-8p^{-2t}.$$
Observe now that $\B \beta_{S^*\setminus S}^{*\top} \hat{\B D}(S) \B \beta_{S^*\setminus S}^* \geq t\tau$ hence we have 
$$\P\left[\forall S \in \cA_t, \frac{1}{n}(\cR_{ols}(S,\cD) - \cR_{ols}(S^*,\cD)) \geq \frac{1}{2} t\tau \right] \geq 1-8p^{-2t}.$$
Applying the union bound and the fact that $t \geq 1$, we get $\P[\mathcal{E}_\text{gap}] \geq 1-8sp^{-2}$, which concludes the proof.

\end{proof}


We now proceed with the proof of the theorems.
\begin{proof}[\textbf{Proof of Theorem~\ref{thmsupportrecovery-topR}}]

We will now show that the exponential mechanism with scoring function $\hat \cR$ and $R=2$, denoted as $\hat\cA_{E_2}$, recovers the true support with high probability.

We first define the event $$\mathcal{E} \coloneqq \bigcap_{t=1}^s \{ \forall S \in \cA_t, \cR(S,\cD) - \cR(S^*,\cD) \geq \frac{1}{2}nt\tau \}.$$ Observe that $\mathcal{E}_r \cap \mathcal{E}_\text{gap} \subset \mathcal{E}$. 
By Lemmas \ref{unconstrainedlemma} and \ref{guo-lemma}, if we apply union bound, we have that $\P[\mathcal{E}] \geq \P[\mathcal{E}_r \cap \mathcal{E}_\text{gap}] \geq 1-8sp^{-2} - 2p^{-7} \geq 1-10sp^{-2}$. Furthermore, if we condition on $\mathcal{E}$, we have that $\cR(\hat{S}_1(\cD),\cD) = \cR(S^*,\cD)$.

Then, note that
$$\p(\hat\cA_{E_2}(\cD)=S^* | \mathcal{E}) = \frac{1}{1 + ( {p \choose s} - 1) \exp(-\frac{\epsilon}{2\Delta}(\cR(\hat S_2(\cD),\cD)-\cR(\hat S_1(\cD),\cD)))})     . $$

Then, we have that, if we assume $$\tau \geq \max \{ C\sigma^2, \frac{8\Delta}{\epsilon} s \}\frac{ \log p}{n},  $$ we have the following:
\begin{align*}
    ( {p \choose s} - 1) \exp(-\frac{\epsilon}{2\Delta} \underbrace{(\cR(\hat S_2(\cD),\cD)-\cR(\hat S_1(\cD),\cD))}_{:=G}) &\leq 
    p^s \exp(-\frac{\epsilon G}{2\Delta}) \\
    &\leq p^s \exp(-\frac{\epsilon n\tau}{4\Delta}) \\
    &\leq p^s \exp(-2s\log p) \\
    &= p^{-s}
\end{align*}
where the second inequality uses the fact that $t \geq 1$ for all $S \in \cup_{t=1}^s \cA_t$.
Thus, we have $$\p(\hat\cA_{E_2}(\cD)=S^*| \mathcal{E}) \geq \frac{1}{1+p^{-s}}.$$ 


Now consider any exponential mechanism with scoring function $\hat \cR$ and $R > 2$, denoted $\hat\cA_E(\cD)$. 
By the same argument used in Lemma \ref{increasingR}, we have that 


$$\p(\hat\cA_{E_2}(\cD)=S^*| \mathcal{E}) \leq \p(\hat\cA_E(\cD)=S^*| \mathcal{E}).$$

Applying the law of total probability, we have that
\begin{align*}
    \p(\hat\cA_E(\cD)=S^*) &= \P(\mathcal{E}) \p(\hat\cA_E(\cD)=S^*| \mathcal{E})+ \P(\mathcal{E}^c) \p(\hat\cA_E(\cD)=S^*| \mathcal{E}^c) \\
    &\geq \P(\mathcal{E}) \p(\hat\cA_E(\cD)=S^*| \mathcal{E}) \\
    &\geq \frac{1-10sp^{-2}}{1+p^{-s}}
\end{align*}
Now we apply Lemma~\ref{lemma3}. Let $q = \frac{R}{{p\choose s}}$. Then, we have that $(1-q^{T})  \p(\hat\cA_E(\cD)=S^*) \leq \p(\hat \cM(\cD)=S^*)$, and as $T \to \infty$, we have that $\p(\hat\cA_E(\cD)=S^*) \leq \p(\hat \cM(\cD)=S^*)$, and we conclude that $$\p(\hat \cM(\cD)=S^*) \geq \frac{1-10sp^{-2}}{1+p^{-s}}.$$
\end{proof}

\begin{proof}[\textbf{Proof of Theorem~\ref{thmsupportrecovery-mistakes}}]
Let $\Delta$ be the bounded global sensitivity of $\cR$ as in Lemma \ref{delta-1-lemma}.
First, define the event $\mathcal{E}_{2\Delta} = \{\forall S \in \cup_{t=1}^s \cA_t, \cR_{ols}(S,\cD) - \cR_{ols}(S^*,\cD) > 2\Delta \}$.
In the proof of Lemma \ref{2delta-lemma}, we show that, given assumptions \ref{data-bounded-asu}-\ref{sparsity-asu}, whenever $$\tau \geq C\sigma^2 \frac{s\log p}{n},$$
we have that
$$\P\left(\forall S \in \cup_{t=1}^s \cA_t, \cR_{ols}(S,\cD) - \cR_{ols}(S^*,\cD) > 2\Delta \right) \geq 1-8sp^{-2}.$$
Now consider the event
\begin{align*}
\mathcal{E}
:= {} & \Biggl(
   \bigcap_{t=1}^s 
   \Bigl\{ \forall S \in \mathcal{A}_t,\;
     \mathcal{R}(S,\mathcal{D}) - \mathcal{R}(S^*,\mathcal{D})
     \ge \frac{1}{2} n t \tau
   \Bigr\}
  \Biggr) \\
 & \quad \cap
   \Bigl\{
     \max_{\substack{S \subseteq [p]\\ |S|=s}}
     \;\max_{\substack{\mathcal{D},\mathcal{D}'\in\cZ^n\\
     \mathcal{D},\mathcal{D}'\text{ neighbors}}}
     \tilde{\mathcal{R}}(S,\mathcal{D})-
           \tilde{\mathcal{R}}(S,\mathcal{D}')
     \le \Delta
   \Bigr\}.
\end{align*}
By Theorem \ref{privacy-mistakes-thm}, we have that if $\cR(S,\cD) - \cR(S^*,\cD) > 2\Delta$ for all $S \in \cup_{t=1}^s \cA_t$, which implies that $\cR(\hat{S}_2(\cD),\cD) - \cR(\hat{S}_1(\cD),\cD) > 2\Delta$, then $\Tilde{\cR}$ has the same bound on global sensitivity as $\cR$ in Lemma \ref{delta-1-lemma}. Hence, we have that 
$\mathcal{E}_r \cap \mathcal{E}_\text{gap} \cap  \mathcal{E}_{2\Delta} \subset \mathcal{E}$. Then, by Lemmas \ref{unconstrainedlemma} and \ref{guo-lemma}, if we apply union bound, we have that $\P[\mathcal{E}] \geq \P[\mathcal{E}_r \cap \mathcal{E}_\text{gap} \cap  \mathcal{E}_{2\Delta}] \geq 1-8sp^{-2} - 2p^{-7} - 8sp^{-2} \geq 1-18sp^{-2}$. Furthermore, if we condition on $\mathcal{E}$, we have that $\cR(\Tilde{S}_0(\cD),\cD) = \cR(S^*,\cD)$.

Then, note that 
$$\P[\Tilde \cM(\cD) = S^* | \mathcal{E}]  = \frac{1}{1+ \sum_{t=1}^s {p-s \choose t} {s \choose t} \exp( \frac{-\epsilon (\cR(\Tilde{S}_t(\cD),\cD) - \cR(\Tilde{S}_0(\cD),\cD))}{2\Delta}) }.$$

Then, we have that, if we assume $$\tau \geq \max \{ C\sigma^2s, \frac{16\Delta}{\epsilon} \}\frac{ \log p}{n},  $$ we have the following:

\begin{align*}
    \sum_{t=1}^s {p-s \choose t} {s \choose t} \exp( \frac{-\epsilon (\cR(\Tilde{S}_t(\cD),\cD) - \cR(\Tilde{S}_0(\cD),\cD))}{2\Delta}) &\leq \sum_{t=1}^s p^{2t} \exp( \frac{-\epsilon nt\tau}{4\Delta}) \\
    &\leq \sum_{t=1}^s p^{2t} p^{-4t} \\
    &\leq \sum_{t=1}^s p^{-2t} \leq 2p^{-2}
\end{align*}
where we use assumption \ref{sparsity-asu} in the last inequality. Thus, we have that $$\P[\Tilde \cM(\cD) = S^* | \mathcal{E}] \geq \frac{1}{1+2 p^{-2}}. $$

Thus, we conclude that 
\begin{align*}
    \p(\Tilde \cM(\cD)=S^*) &= \P(\mathcal{E}) \p(\Tilde \cM(\cD)=S^*| \mathcal{E})+ \P(\mathcal{E}^c) \p(\Tilde \cM(\cD)=S^*| \mathcal{E}^c) \\
    &\geq \P(\mathcal{E}) \p(\Tilde \cM(\cD)=S^*| \mathcal{E}) \\
    &\geq \frac{1-18sp^{-2}}{1+2 p^{-2}}.
\end{align*}

\end{proof}

\subsection{Proof of Proposition 1}
\begin{proof}
    Take $k\geq 1$. Let $S$ and $(\B z,\B\beta,\B\theta)$ be feasible for Problems \ref{Sk_definition} and \ref{bss-mip} equivalently. Observe that, since $\B z\in\{0,1\}^p$ and $\sum_{i=1}^p z_i= s$, we have $|\{i: z_i\neq 0\}| = s$. Furthermore, the constraint that $\sum_{i\in\hat{S}_j(\cD)} z_i \leq s-\frac{1}{2} < s \: \: \forall j\in [k-1]$ implies that $\{i: z_i\neq 0\} \neq \hat{S}_i(\cD) \: \: \forall i\in [k-1] $. Finally, observe that, from the constraint $\beta_i^2 \leq \theta_i z_i~~\forall i\in[p] $ it follows that $\B \beta$ can be nonzero only on indices $i$ for which $z_i = 1$, and combined with the constraint $\sum_{i=1}^p \theta_i\leq r^2$, we have that $\| \B \beta \|_2^2\leq r^2$. We then have that Problem \ref{bss-mip} solves the problem of minimizing $\|\B y-\B X\B\beta\|_2^2$ among all $\B \beta$ such that $\| \B \beta \|_2^2\leq r^2$, and such that $\text{supp}(\B \beta) \subset \{i: z_i\neq 0\}$. This then becomes analogous to restricting $\B X$ to the columns indexed by $\{i: z_i\neq 0\}$ and we have that the optimization formulations in Problems \ref{Sk_definition} and \ref{bss-mip} are exactly equivalent, with $S = \{i: z_i\neq 0\}$.
\end{proof}
\section{BSS algorithmic details}
\label{algo-details}

Experimentally, we make a number of adjustments to Algorithm \ref{oaalg} to facilitate a faster process of obtaining $\hat{S}_k(\cD)$ for $k \in [R]$. We find that, very commonly in simulated experiments, the enumerated supports $\hat{S}_2(\cD),...,\hat{S}_{1+(p-s)s}(\cD)$ are the $(p-s)s$ supports that make 1 mistake from $\hat{S}_1(\cD)$, and that the largest gaps in objective value across two consecutive enumerated supports often occur when the consecutive supports belong to different elements of the partition $P_0(\cD),P_1(\cD), P_2(\cD),..., P_s(\cD)$. Thus, in order to make the best choice of $R$ to explore the objective landscape better but without significantly increasing computational costs, we pursued the following strategy:
\begin{enumerate}
    \item We use Algorithm \ref{oaalg} to obtain $\hat{S}_1(\cD)=\Tilde{S}_0(\cD)$.
    \item We solve $c(\B z_k) = \min_{\| \B \beta \|_2^2 \leq r^2} \frac{1}{2n}\|\B y-\B X \B \beta  \|_2^2 + \frac{\lambda}{2n} \sum_{i=1}^p \frac{\beta_i^2}{(z_k)_i}$ for all $(p-s)s$ binary vectors $\B z_k$ corresponding to the supports in $P_1(\cD)$.
    \item We use Algorithm \ref{oaalg} to obtain the optimal support that makes at least 2 mistakes from $\Tilde{S}_0(\cD)$, with corresponding binary vector $\B \Tilde{\B z}_2$.
    \item We check that $\max_{k \in [(p-s)s]} c(\B z_k) \leq c({\Tilde{\B z}}_2).$
    \item We run Algorithm \ref{alg1} with (a) the optimal support, (b) all the supports with 1 mistake, and (c) the optimal with 2 mistakes, i.e. $R=2 + (p-s)s $.
\end{enumerate}

Furthermore, to reduce the number of iterations of outer approximation needed in step 3 above, we added additional cuts corresponding to some of the 1-mistake vectors $\B z_k$ prior to the start of the while loop in Algorithm \ref{oaalg}. We selected cuts by first sorting the values of $\B X^\top \B y$ in absolute value, then taking the top $m\%$ of the features from this sorting, and using those entries to make two swaps to the binary vector corresponding to $\hat{S}_1(\cD)$, thus generating 1-mistake vectors that are then used for cuts.

\subsection{Solving $c(\hat{\B z})$ using PGD}
At each iteration of outer approximation in Algorithm \ref{oaalg}, we use projected gradient descent (PGD) to solve $$c( \hat{\B z}) = \min_{\| \B \beta \|_2^2 \leq r^2} 
\underbrace{\frac{1}{2n}\|\B y-\B X \B \beta  \|_2^2 + \frac{\lambda}{2n} \sum_{i=1}^p \frac{\beta_i^2}{\hat{z}_i}}_{g(\B \beta)}.$$

We have that $$\nabla g(\B \beta) = \frac{1}{n}\B X^T(\B X\B \beta-\B y) + \frac{\lambda}{n}\frac{\B \beta}{\hat{\B z}} = \frac{1}{n} (\B X^\top(\B X\B \beta-\B y) + \lambda \text{Diag}(\frac{1}{\hat{z}_1},...,\frac{1}{\hat{z}_p})\B \beta)$$ where $\frac{\B v}{\B u}$ for vectors $\B v,\B u\in \mathbb{R}^p$ denotes element-wise division.
Then note that 
\begin{align*}
    & ||\frac{1}{n} (\B X^\top(\B X\B \beta-\B y) + \lambda \text{Diag}(\frac{1}{\hat{z}_1},...,\frac{1}{\hat{z}_p})\B \beta) - \frac{1}{n} (\B X^\top(\B X\B \beta'-\B y) + \lambda \text{Diag}(\frac{1}{\hat{z}_1},...,\frac{1}{\hat{z}_p})\B \beta')||_2 \\
    &= ||\frac{1}{n} (\B X^\top \B X + \lambda \text{Diag}(\frac{1}{\hat{z}_1},...,\frac{1}{\hat{z}_p}))(\B \beta-\B \beta')||_2 \\
    &\leq \frac{1}{n} \lambda_{\max} (\B X^\top \B X + \lambda \text{Diag}(\frac{1}{\hat{z}_1},...,\frac{1}{\hat{z}_p}))\|\B \beta-\B \beta' \|_2
\end{align*}

Then setting $L = \frac{1}{n} \lambda_{\max} (\B X^\top \B X + \lambda \text{Diag}(\frac{1}{\hat{z}_1},...,\frac{1}{\hat{z}_p}))$, the PGD update is then $$\B \beta_{t+1} = \frac{r(\B \beta_t - \frac{1}{L} \nabla g(\B \beta_t))}{\max\{r,||\B \beta_t - \frac{1}{L} \nabla g(\B \beta_t)||_2\}}.$$

\subsection{Heuristic to kickstart outer approximation}

In order to provide a good initialization for $\B z_0$ in Algorithm \ref{oaalg}, we use the heuristic taken from Algorithm 1 in \cite{modernlens}. Specifically, we consider the problem 
$$\min_{\B \beta \in \R^p:||\B \beta||_0 \leq s} \underbrace{\frac{1}{2n}( \|\B y - \B X \B \beta \| + \lambda||\B \beta||_2^2)}_{h(\B \beta)}.$$
Note that $\nabla h(\B\beta) = \frac{1}{n}(\B X^\top(\B X\B\beta-\B y) + \lambda\B \beta)$ and that 
\begin{align*}
    ||\frac{1}{n}(\B X^\top(\B X\B \beta- \B y) + \lambda\B \beta) - \frac{1}{n}(\B X^\top(\B X\B \beta'-\B y) + \lambda\B \beta')||_2 &= ||\frac{1}{n}(\B X^\top \B X + \lambda \B I)(\B \beta-\B \beta') ||_2 \\
    &\leq \frac{1}{n}\lambda_{\max}(\B X^\top \B X + \lambda \B I)||\B \beta-\B \beta' ||_2
\end{align*}
Then, letting $L = \frac{1}{n}\lambda_{\max}(\B X^\top \B X + \lambda \B I)$, we run the following heuristic: 
\begin{enumerate}
    \item Initialize $\B \beta_1 \in \R^p$ such that $\| \B \beta_1 \|_0 \leq s$.
    \item For $t\geq 1$:
    \begin{enumerate}
        \item Sort the entries of $\B \beta_t-\frac{1}{L}\nabla h(\B \beta_t)$ in order of decreasing absolute value, let $I$ denote the index set of the $s$ largest entries.
        \item Set $(\beta_{t+1})_i = (\beta_t-\frac{1}{L}\nabla h(\beta_t))_i$ if $i\in I$, and $(\beta_{t+1})_i =0$ otherwise.
    \end{enumerate} 
\end{enumerate}


    
\section{Modifications pertaining to hinge loss}
\label{hingelossappendix}
Similarly to BSS, we now consider the following sparse classification problem:

\begin{equation}\label{hinge}
    \min_{\B\beta\in\R^p} \frac{1}{n} \sum_{i=1}^n \max\{0, 1-y_i (\B x_i^T \B \beta)\}~~\text{s.t.}~~\|\B\beta\|_0\leq s, \|\B\beta\|_2\leq r
\end{equation}

Our objective function then becomes 
$$\cR(S,\cD) = \min_{\B\beta\in\R^{|S|}} \frac{1}{n} \sum_{i=1}^n \max\{0, 1-y_i ( (\B x_{i})_S^T \B \beta)\} ~~\text{s.t.}~~ \|\B\beta\|_2\leq r$$
where $\cD=(\B X,\B y)$ as before, and $ (\B x_{i})_S \in \R^s$ is the $i$-th row of $\B X$ with columns indexed by $S$.

We first present a result analogous to Lemma \ref{delta-1-lemma} in order to bound the global sensitivity for the case of hinge loss.

\begin{lemma}
    Suppose that $|X_{i,j}|\leq b_x$ for $i\in[n],j\in[p]$. Then, 
    $$\Delta \leq \frac{1}{n} (1+rb_x\sqrt{s}) $$
\end{lemma}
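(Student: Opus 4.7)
The plan is to mirror the proof of Lemma \ref{delta-1-lemma}, swapping the least-squares single-point bound for a tight bound on the hinge loss. The structure is essentially the same: fix a support $S \in \cO$ and neighboring datasets $\cD,\cD'$ that (without loss of generality) differ only in the $n$-th observation. Let $\hat{\B\beta}$ achieve the minimum in the definition of $\cR(S,\cD')$. Since $\hat{\B\beta}$ satisfies $\|\hat{\B\beta}\|_2 \leq r$, it is feasible for $\cR(S,\cD)$, so bounding $\cR(S,\cD)$ by the value of its objective at $\hat{\B\beta}$ gives
$$\cR(S,\cD) - \cR(S,\cD') \leq \frac{1}{n}\sum_{i=1}^n \max\{0, 1-y_i((\B x_i)_S^T \hat{\B\beta})\} - \frac{1}{n}\sum_{i=1}^n \max\{0, 1-y_i'((\B x_i')_S^T \hat{\B\beta})\}.$$

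Next, the $n-1$ summands where $\cD$ and $\cD'$ agree cancel exactly. For the remaining $n$-th terms, the contribution from $\cD'$ is nonnegative (the hinge loss is always $\geq 0$) and is being subtracted, so it can be dropped in the upper bound. This reduces the difference to a single-sample quantity:
$$\cR(S,\cD) - \cR(S,\cD') \leq \frac{1}{n}\max\{0, 1-y_n((\B x_n)_S^T \hat{\B\beta})\}.$$

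The final step is to uniformly bound this single-sample hinge term. Using that $y_n \in \{-1,+1\}$ for the classification setting (so $|y_n|=1$), the elementary inequality $\max\{0,1-t\} \leq 1 + |t|$ gives
$$\max\{0, 1-y_n((\B x_n)_S^T \hat{\B\beta})\} \leq 1 + |(\B x_n)_S^T \hat{\B\beta}|.$$
Applying Cauchy--Schwarz together with the boundedness assumption $|X_{i,j}| \leq b_x$ and the feasibility constraint $\|\hat{\B\beta}\|_2 \leq r$ yields $|(\B x_n)_S^T\hat{\B\beta}| \leq \|(\B x_n)_S\|_2\, \|\hat{\B\beta}\|_2 \leq b_x\sqrt{s}\cdot r$, since $|S|=s$. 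Putting everything together, $\cR(S,\cD) - \cR(S,\cD') \leq \frac{1}{n}(1 + rb_x\sqrt{s})$, and maximizing over $S$ and over neighboring pairs gives the claimed bound on $\Delta$.

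There is no real obstacle here; the argument is essentially a one-line computation once one observes the two structural facts that make it work, namely (i) that the cancellation reduces the sensitivity to a single-sample quantity and (ii) that the $\cD'$-side hinge loss on that single sample can be dropped thanks to nonnegativity, leaving only an upper bound on the $\cD$-side hinge at a feasible $\hat{\B\beta}$. The only mild subtlety is that the stated bound does not involve $b_y$, which implicitly uses $|y_n|=1$ in the classification setting; any looser bound $|y_n| \leq b_y$ would merely replace the constant $1$ with a term proportional to $b_y$.
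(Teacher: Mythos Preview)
Your proof is correct and follows essentially the same approach as the paper's: fix a support and neighboring datasets, use the minimizer for $\cR(S,\cD')$ as a feasible point for $\cR(S,\cD)$, cancel all but the differing observation, drop the nonnegative subtracted hinge term, and bound the remaining single-sample hinge via Cauchy--Schwarz together with $|y_n|=1$, $\|(\B x_n)_S\|_2\le b_x\sqrt{s}$, and $\|\hat{\B\beta}\|_2\le r$. Your remark about the implicit use of $|y_n|=1$ is exactly the point the paper relies on as well.
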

\begin{proof}
    Suppose $\cD,\cD'$ are two neighboring datasets. Fix a support $S\in\cO$ and suppose 
    $$\hat{\B \beta}\in \argmin_{\|\B\beta\|_2\leq r} \frac{1}{n} \sum_{i=1}^n \max\{0, 1-y'_i ( (\B x_{i}')_S^T \B \beta)\}. $$ 
    Then note that $${\cR}(S,\cD)-{\cR}(S,\cD') \leq \frac{1}{n} (\sum_{i=1}^n \max\{0, 1-y_i ((\B x_{i})_S^T \hat{\B \beta})\}- \sum_{i=1}^n \max\{0, 1-y'_i ((\B x_{i}')_S^T \hat{\B \beta})\}).$$ Let us assume without loss of generality that $\cD,\cD'$ differ in the $n$-th observations. Then we have that 
     \begin{align*}
&\frac{1}{n} (\sum_{i=1}^n \max\{0, 1-y_i ((\B x_{i})_S^T \hat{\B \beta})\}- \sum_{i=1}^n \max\{0, 1-y'_i ((\B x_{i}')_S^T \hat{\B \beta})\}) \\
& = \frac{1}{n} ( \max\{0, 1-y_n ((\B x_{n})_S^T \hat{\B \beta})\}- \max\{0, 1-y'_n ((\B x_{n}')_S^T \hat{\B \beta})\}) \\
& \leq \frac{1}{n}  \max\{0, 1-y_n ((\B x_{n})_S^T \hat{\B \beta})\} \\
& \leq \frac{1}{n} (1+rb_x\sqrt{s})
     \end{align*}
     where the last inequality uses the fact that  $|S|=s$ and the Cauchy-Schwarz inequality, since $-y_n ((\B x_{n})_S^T \hat{\B \beta}) \leq |y_n ((\B x_{n})_S^T \hat{\B \beta})| = |(\B x_{n})_S^T \hat{\B \beta}| \leq \| (\B x_{n})_S \|_2 \| \hat{\B \beta}\|_2$.
\end{proof}

\subsection{Optimization formulation}

As in the BSS case, we consider the penalized form of Problem~\ref{hinge} in order to obtain $\hat{S}_k(\cD)$ for $k\in [R]$.
We define $$c( \B z) =  \min_{ \| \B \beta \|_2^2 \leq r^2} \frac{1}{n} \sum_{i=1}^n \max\{0, 1-y_i (\B x_i^T \B \beta)\} + \frac{\lambda}{n} \sum_{i=1}^p \frac{\beta_i^2}{z_i}$$
and we seek to solve
\begin{align}
\label{hingelossOA}
    \min_{
    \B z}~~ &c( \B z) \nonumber \\
    \text{subject to}~ \: &\B z \in \{0,1 \}^p,~ \sum_{i=1}^p z_i = s, \nonumber \\
     &\sum_{i\in\hat{S}_j(\cD)} z_i \leq s-\frac{1}{2} \: \: \forall j\in [k-1].  \nonumber
\end{align} 

Given $\B z \in \{0,1 \}^p$, define $\hat{\B z} \in (0,1]^p$ as in Section \ref{opt-algos}. Let $$\hat{\B \beta} \in \text{arg}\min_{\| \B \beta \|_2^2 \leq r^2} \frac{1}{n} \sum_{i=1}^n \max\{0, 1-y_i (\B x_i^T \B \beta)\} + \frac{\lambda}{n} \sum_{i=1}^p \frac{\beta_i^2}{\hat z_i}.$$

We then have $$(\nabla c( \hat{\B z}))_i = -\frac{\lambda}{n} \frac{(\hat \beta_i)^2}{ \hat{ z}_i^2}$$

and we run Algorithm \ref{oaalg} with these modifications.

\subsection{Solving $c(\hat{\B z})$ using Projected Subgradient method}

At each iteration of outer approximation, we solve  $$ \min_{\| \B \beta \|_2^2 \leq r^2} \frac{1}{n} \sum_{i=1}^n \max\{0, 1-y_i (\B x_i^T \B \beta)\} + \frac{\lambda}{n} \sum_{i=1}^p \frac{\beta_i^2}{\hat{z}_i}$$ using a projected subgradient method. Using the subgradient $$g(\B \beta) = \frac{1}{n}\left(\sum_{\substack{i=1 \\ y_i \B \beta^T \B x_i < 1}} -y_i \B x_i + 2\lambda \text{Diag}(\frac{1}{\hat{z}_1},...,\frac{1}{\hat{z}_p}) \B \beta\right),$$
we run the update $$ \B \beta_{t+1} = \frac{r(\B \beta_t - \eta_t g(\B \beta_t))}{\max\{r,||\B \beta_t - \eta_t g(\B \beta_t)||_2\}}$$
where $\eta_t = \frac{1}{\sqrt{t}}$.

\FloatBarrier
\section{Additional experimental results}
\label{supplementary-exp}
\FloatBarrier
\subsection{BSS}
\label{bss-exp}

\subsubsection{Prediction accuracy and utility loss}

In an effort to compare prediction accuracy across methods, we performed a 70/30 random train/test split and implemented Algorithm~2 in~\cite{kifer2012resample} on the training data, using half of the privacy budget (i.e., $\epsilon/2$) for variable selection with the top-$R$, mistakes, Samp-Agg, or MCMC methods, and the remaining half for model optimization via objective perturbation (Algorithm~1 in~\cite{kifer2012resample}) to obtain the regression coefficients under the privacy budget $(\beta_{\text{priv}})$.  
We ran experiments with $p=100$, $s=5$, $\epsilon=2$, $\text{SNR}=5$, and $\rho=0.1$, and conducted $10$ independent trials for each value of $n$.  
For each trial, we drew $100$ samples from the distribution corresponding to each algorithm.  
For MCMC, we used $100$ independent Markov chains per trial.
After running objective perturbation over the selected supports, we obtained $1000$ distinct coefficient vectors for each method and each $n$, and computed the average MSE on the test data.  
The choices of $\lambda=120$ and MCMC iterations $=1000$ were made to keep the runtimes comparable, as in our support recovery results.  
We summarize the results below, showing that our top-$R$ and mistakes methods outperform the competitor algorithms in prediction accuracy for sufficiently large~$n$.

\begin{figure}[H]
    \centering
    \begin{subfigure}[b]{0.45\textwidth}
        \includegraphics[width=\linewidth]{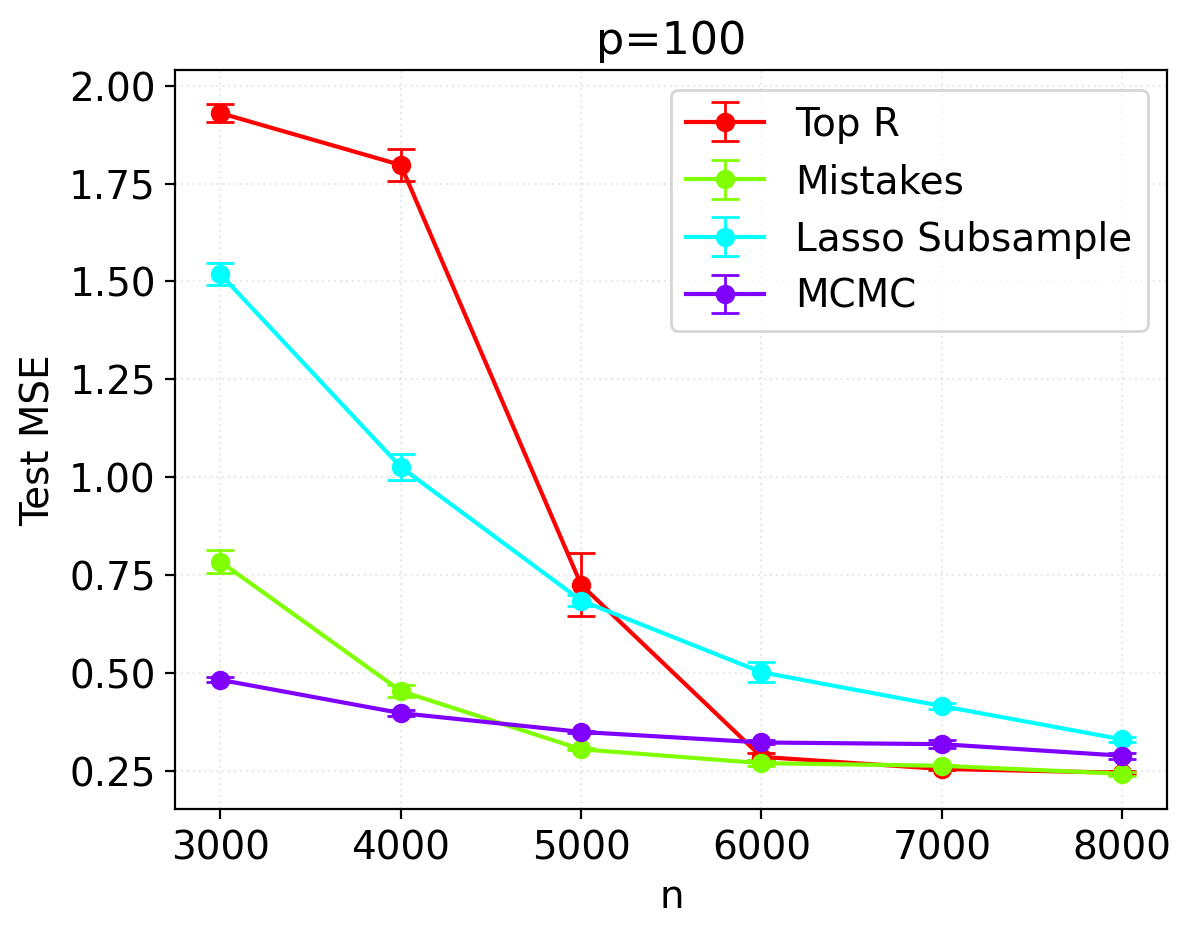}
        \caption{$p=100$}
        \label{p100fig-testmse}

    \end{subfigure}
    \begin{subfigure}[b]{0.45\textwidth}
        \includegraphics[width=\linewidth]{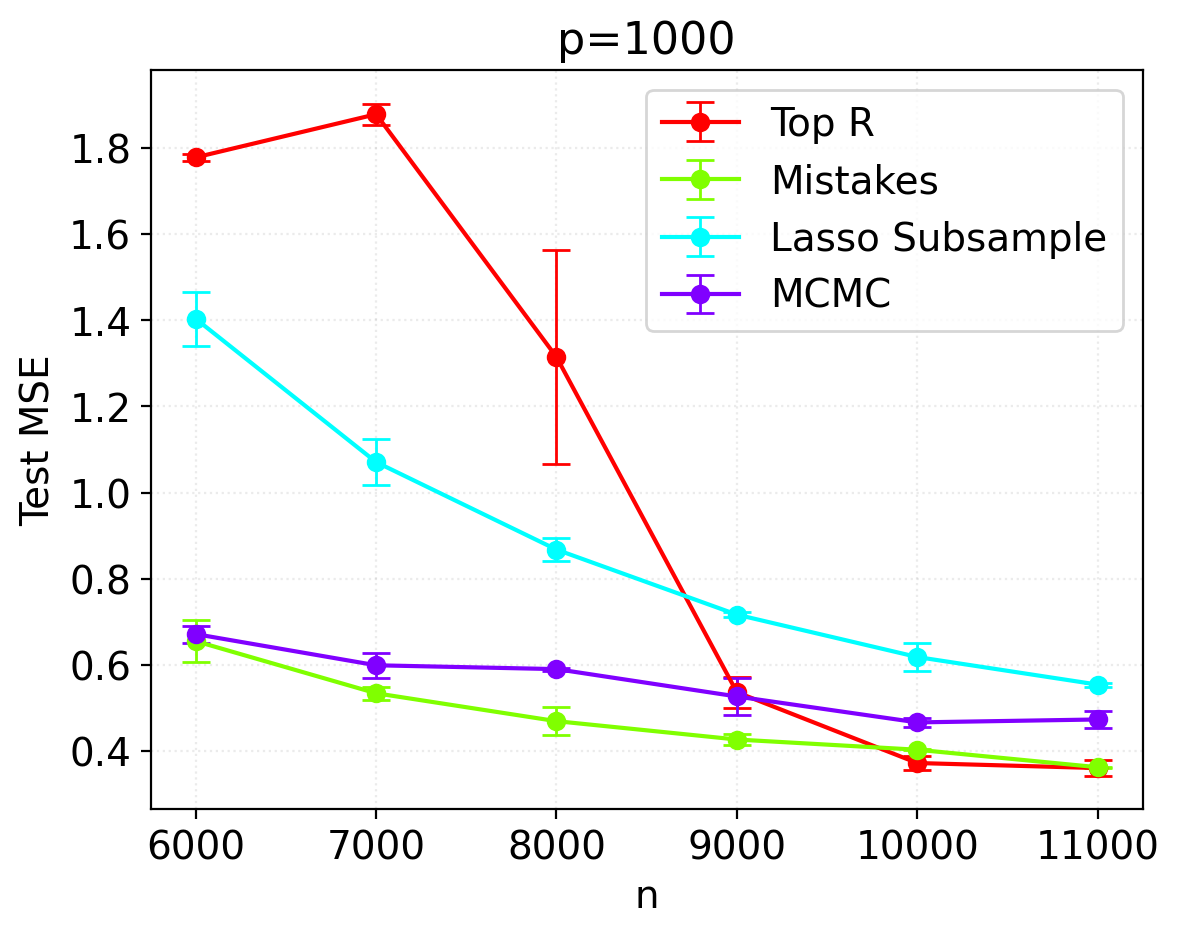}
        \caption{$p=1{,}000$}
        \label{p1000fig-testmse}
    \end{subfigure}
        \caption{Numerical experiments for $p=100$ and $p=1{,}000$, with $s=5$, SNR=5, $\rho=0.1$, and $\epsilon=2$. The penalty parameter $\lambda$ in Algorithm \ref{oaalg} was set to $120$ and $250$ for figures         \ref{p100fig-testmse} and \ref{p1000fig-testmse},  respectively. On the $x$-axis, we vary the value of $n$ and plot the average test MSE across 10 independent trials. Error bars denote the mean standard error.}
\end{figure}

Furthermore, we ran experiments to evaluate the utility loss, defined as the gap between the objective at $\beta_{\text{priv}}$ and the objective at $\hat{\beta}$ (the optimal BSS solution), using the same parameters as in the experiments above.  
As with the prediction accuracy results, our top-$R$ and mistakes methods outperform the competitor algorithms in terms of utility loss when $n$ is large enough.

\begin{figure}[H]
    \centering
    \begin{subfigure}[b]{0.45\textwidth}
        \includegraphics[width=\linewidth]{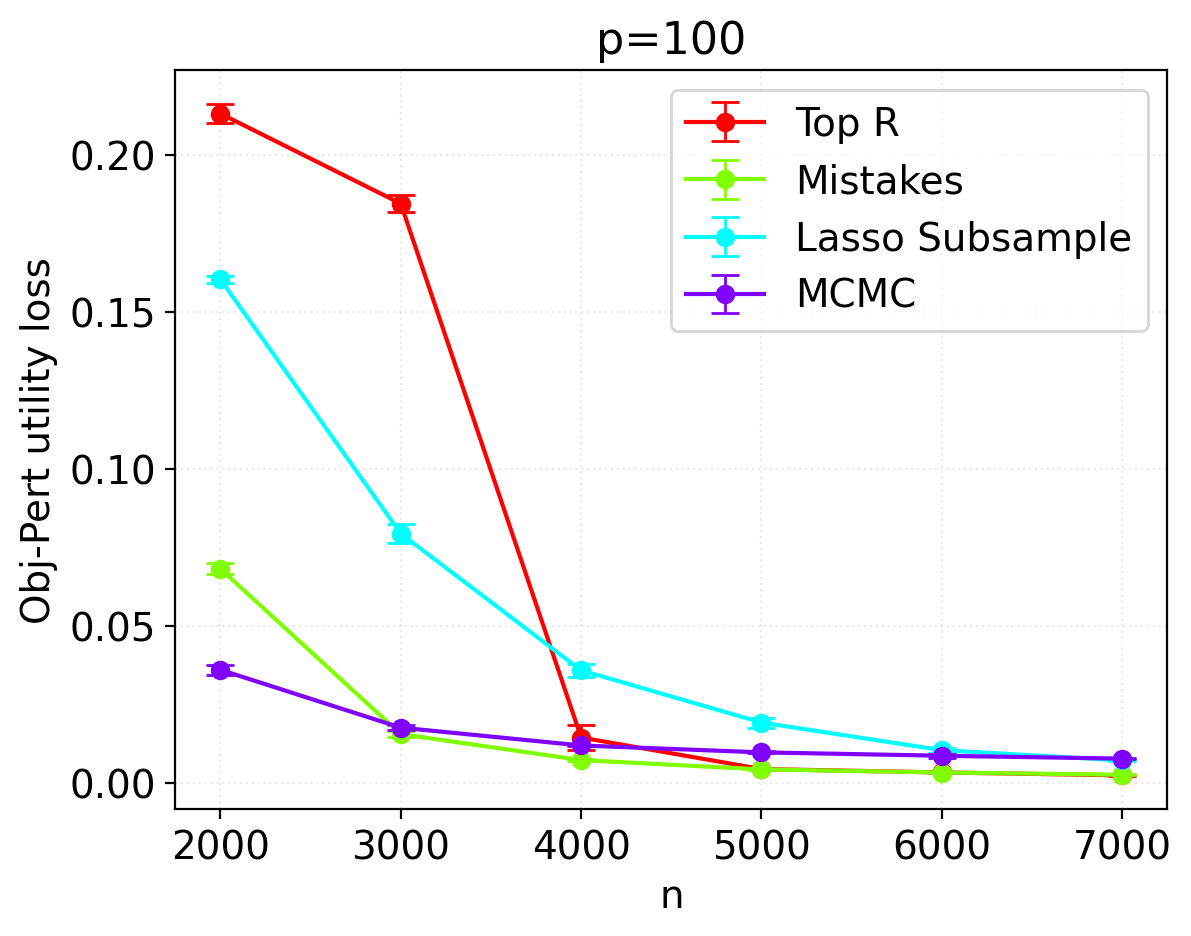}
        \caption{$p=100$}
        \label{p100fig-utilloss}

    \end{subfigure}
    \begin{subfigure}[b]{0.45\textwidth}
        \includegraphics[width=\linewidth]{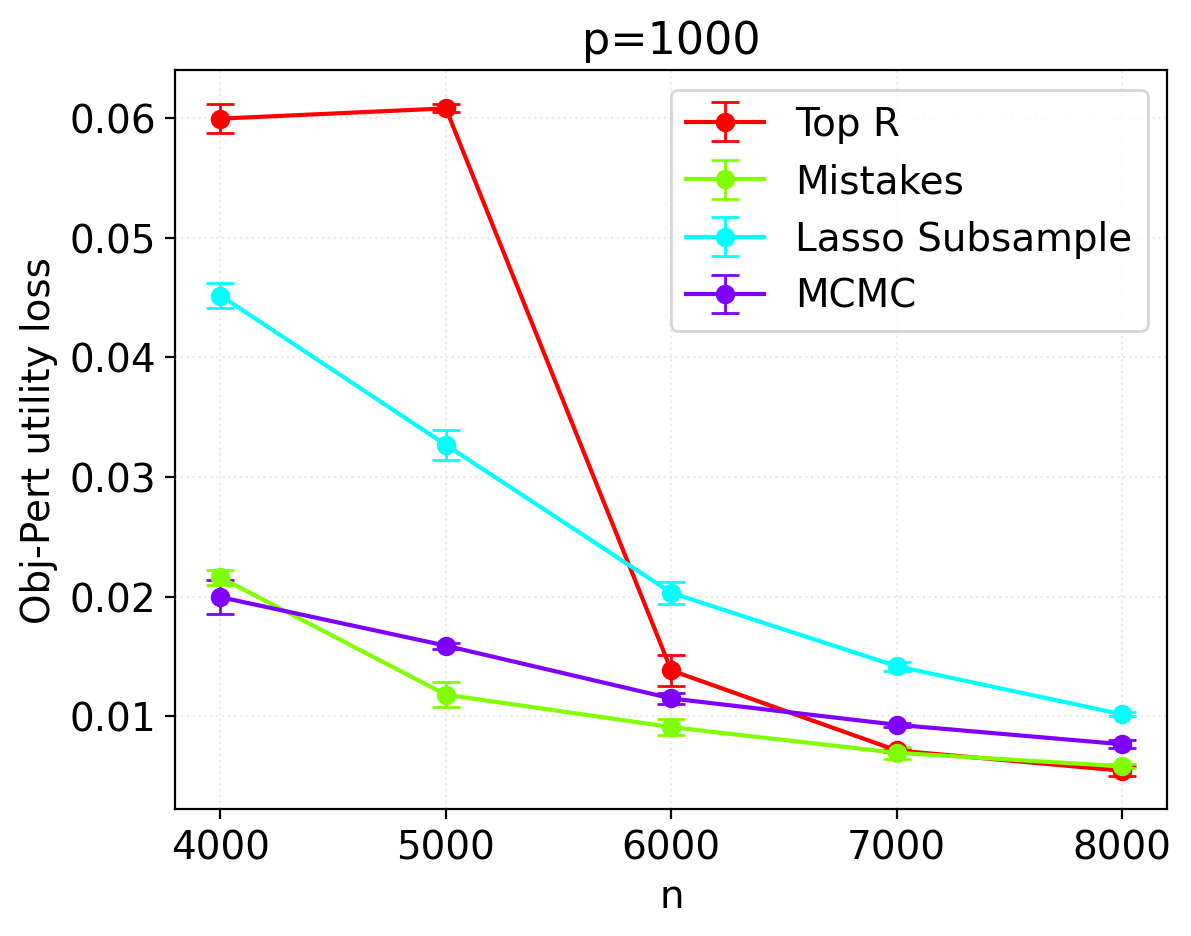}
        \caption{$p=1{,}000$}
        \label{p1000fig-utilloss}
    \end{subfigure}
        \caption{Numerical experiments for $p=100$ and $p=1{,}000$, with $s=5$, SNR=5, $\rho=0.1$, and $\epsilon=2$. The penalty parameter $\lambda$ in Algorithm \ref{oaalg} was set to $120$ and $250$ for figures         \ref{p100fig-utilloss} and \ref{p1000fig-utilloss}, respectively. On the $x$-axis, we vary the value of $n$ and plot the average utility loss across 10 independent trials. Error bars denote the mean standard error.}
\end{figure}

\subsubsection{Ablation studies}

In this section, we present several ablation studies in order to show the effect of changing $R$, $\lambda$, and $(b_x, b_y)$ on the fraction of correctly recovered supports and on the F1 score. For the following results, we ran 10 independent trials and drew 100 samples from the distribution corresponding to each of our algorithms.

\paragraph{Ablation study of $R$.} 
As noted in Appendix \ref{algo-details}, we observe empirically in the results below that the largest gaps in objective value across two consecutive enumerated supports often occur when the consecutive supports belong to different elements of the partition $P_0(\cD),P_1(\cD), P_2(\cD),..., P_s(\cD)$. Moreover, as intuitively expected and shown formally in Lemma \ref{increasingR} of the appendix, increasing $R$ has a positive effect on support recovery. We used $\lambda = 120$ and $(b_x, b_y) = (0.5, 0.5)$.

\begin{figure}[H]
    \centering
    \begin{subfigure}[b]{0.60\textwidth}
        \includegraphics[width=\linewidth]{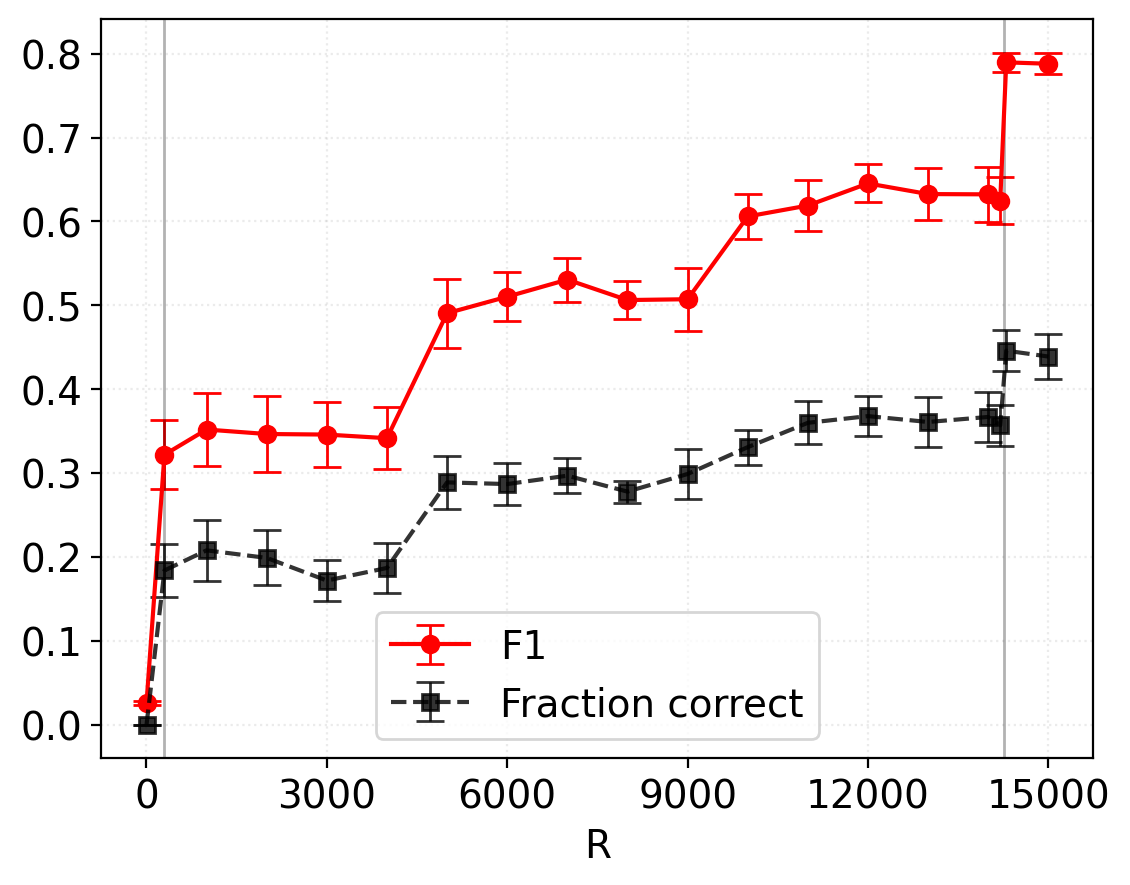}
        \label{Rablstudy}
    \end{subfigure}
        \caption{Numerical experiments for $n=1{,}000$, $p=100$, $s=3$, SNR=5, $\rho=0.1$, $\epsilon=1$. Vertical bars denote an objective gap corresponding to an increase in Hamming distance from the optimal support. On the $x$-axis, we vary the value of $R$ and plot the average F1 score and average fraction of correct supports across 10 independent trials for the top-$R$ algorithm. Error bars denote the mean standard error.}
\end{figure}

\paragraph{Ablation study of $\lambda$.} 
 We present results below for tuning $\lambda$. We used $R = 2 + (p - s)s$ and $(b_x, b_y) = (0.5, 0.5)$. We observe that choosing a very large $\lambda$ can have a negative effect on support recovery. However, increasing $\lambda$ is beneficial for the runtime of our outer approximation solver. Therefore, we choose a moderate value for $\lambda$ to ensure that the runtime of our method remains comparable to that of MCMC.

\begin{figure}[H]
    \centering
    \begin{subfigure}[b]{0.45\textwidth}
        \includegraphics[width=\linewidth]{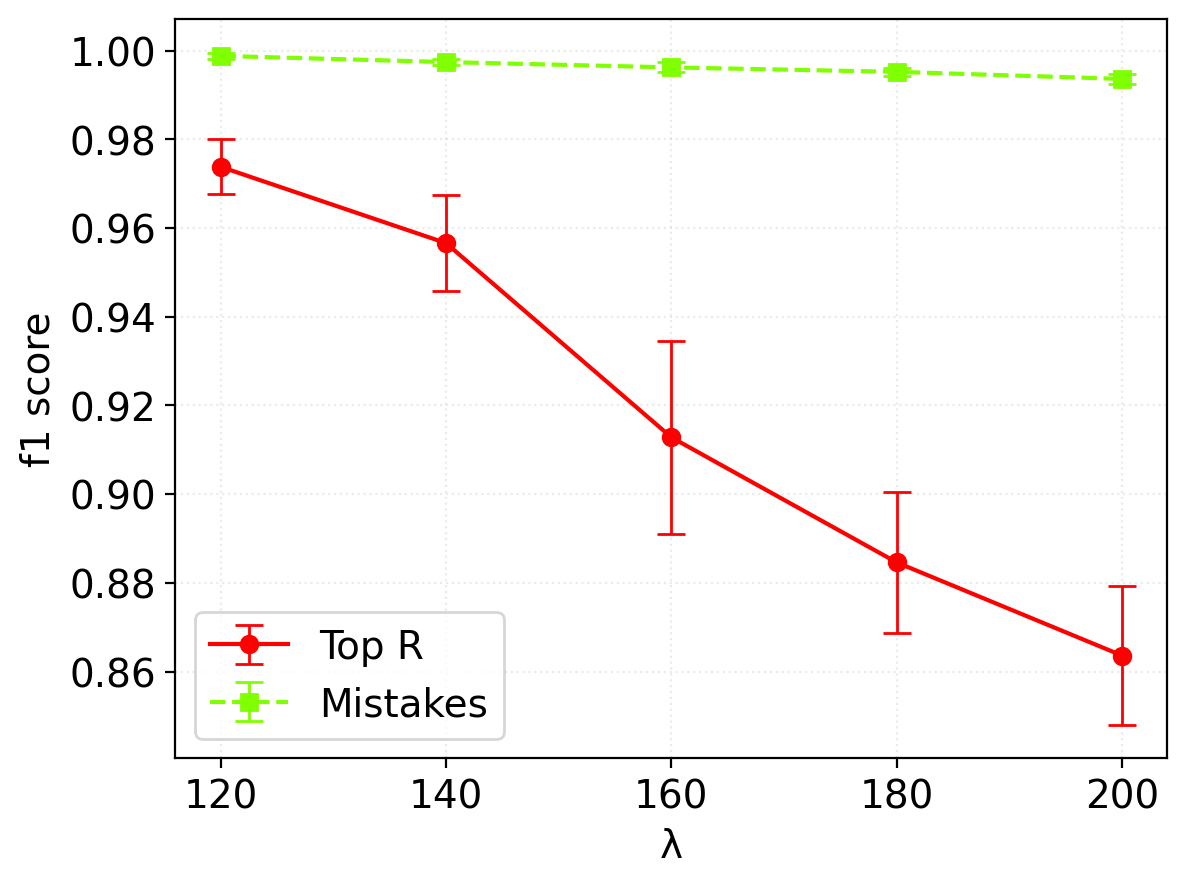}
        \caption{F1 score}
        \label{abllambdaf1}
    \end{subfigure}
       \begin{subfigure}[b]{0.45\textwidth}
        \includegraphics[width=\linewidth]{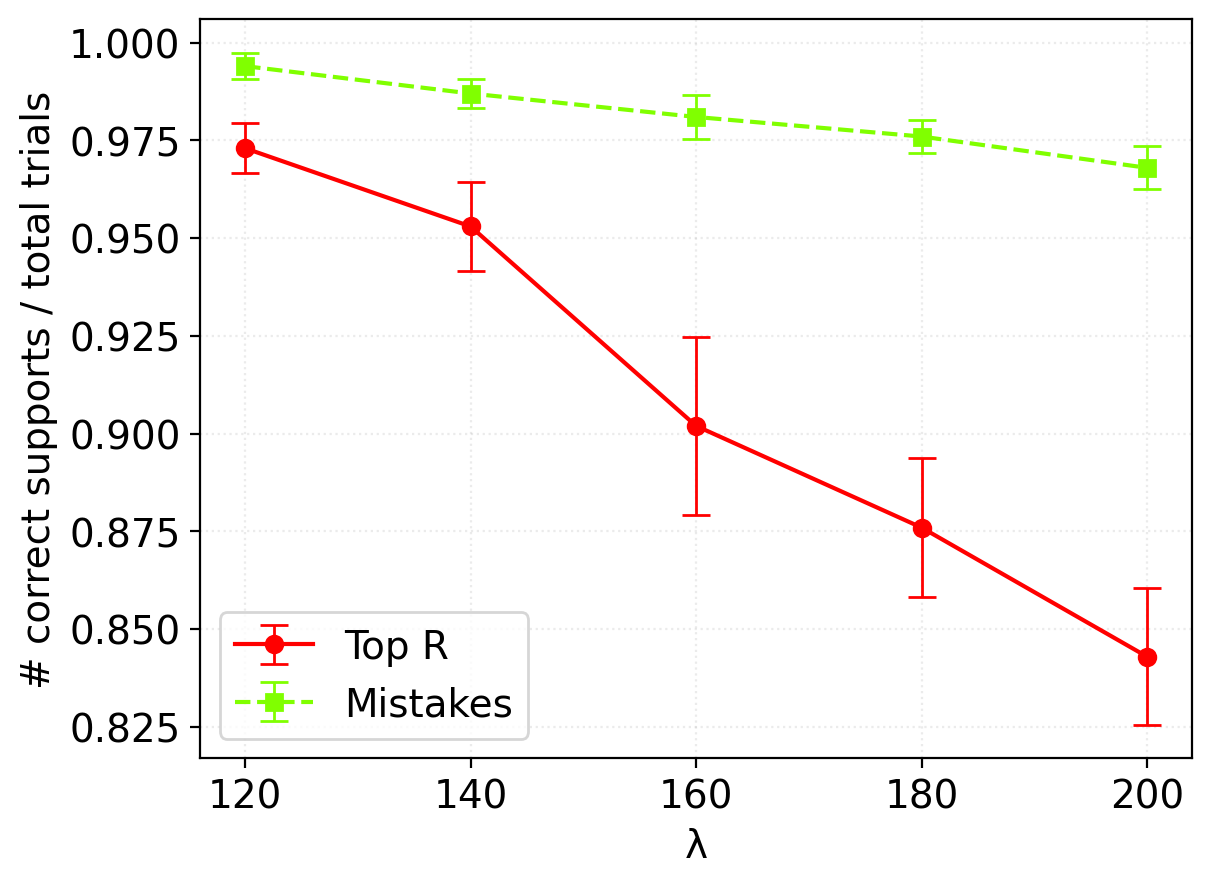}
        \caption{Proportion correct supports}
        \label{abllambdafrac}
    \end{subfigure}
        \caption{Numerical experiments for $n=4{,}000$, $p=100$, $s=5$, SNR=5, $\rho=0.1$, $\epsilon=1$. On the $x$-axis, we vary the value of $\lambda$, and in Figures~\ref{abllambdaf1} and \ref{abllambdafrac} and respectively plot the average F1 score and average fraction of correct supports across 10 independent trials for the top-$R$ and mistakes algorithm. Error bars denote the mean standard error.}
\end{figure}

\paragraph{Ablation study of $(b_x, b_y)$.} 
We present results below for tuning $(b_x, b_y)$. For simplicity, we set $b_x = b_y$.  
We used $R = 2 + (p - s)s$ and $\lambda = 300$.  Our results indicate that choosing the clipping constants too small or too large can negatively affect support recovery quality.  In practice, these hyperparameters can be tuned via cross-validation.

\begin{figure}[H]
    \centering
    \begin{subfigure}[b]{0.45\textwidth}
        \includegraphics[width=\linewidth]{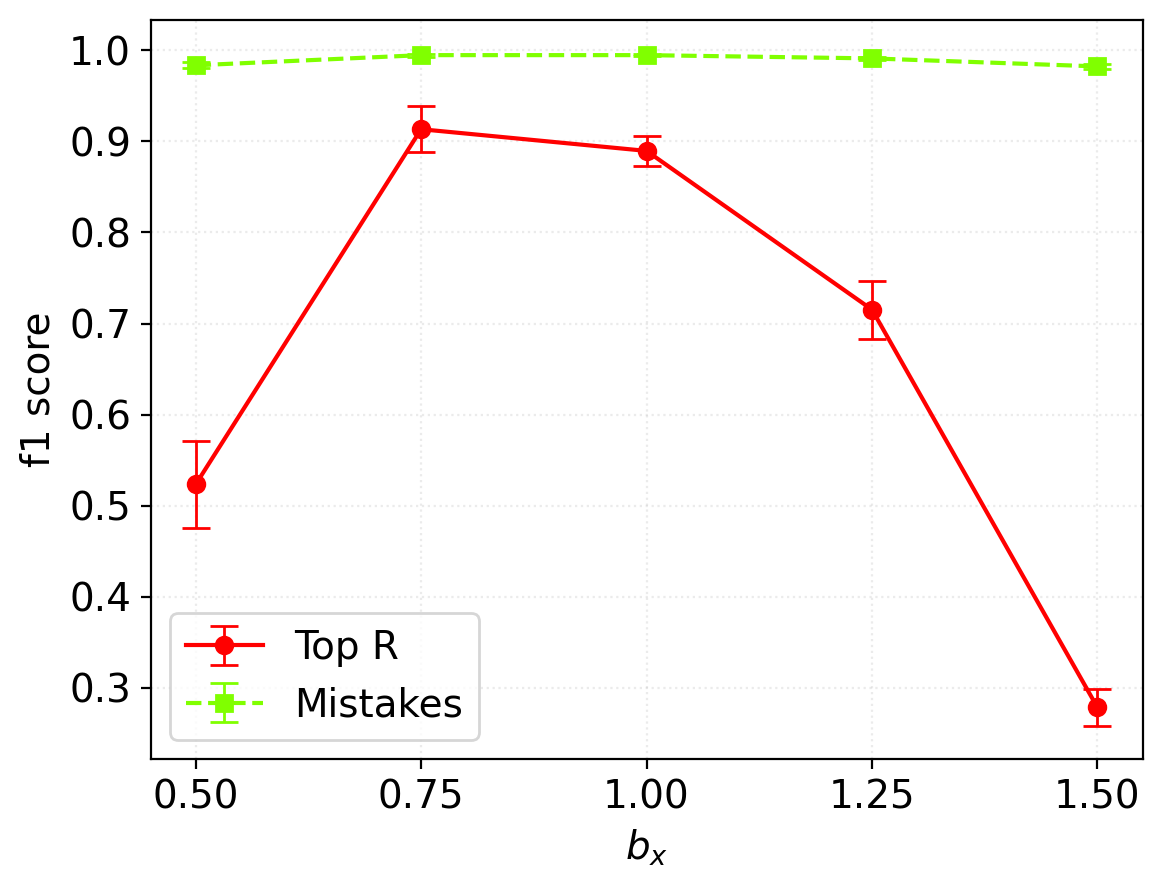}
        \caption{F1 score}
        \label{ablbxf1}
    \end{subfigure}
       \begin{subfigure}[b]{0.45\textwidth}
        \includegraphics[width=\linewidth]{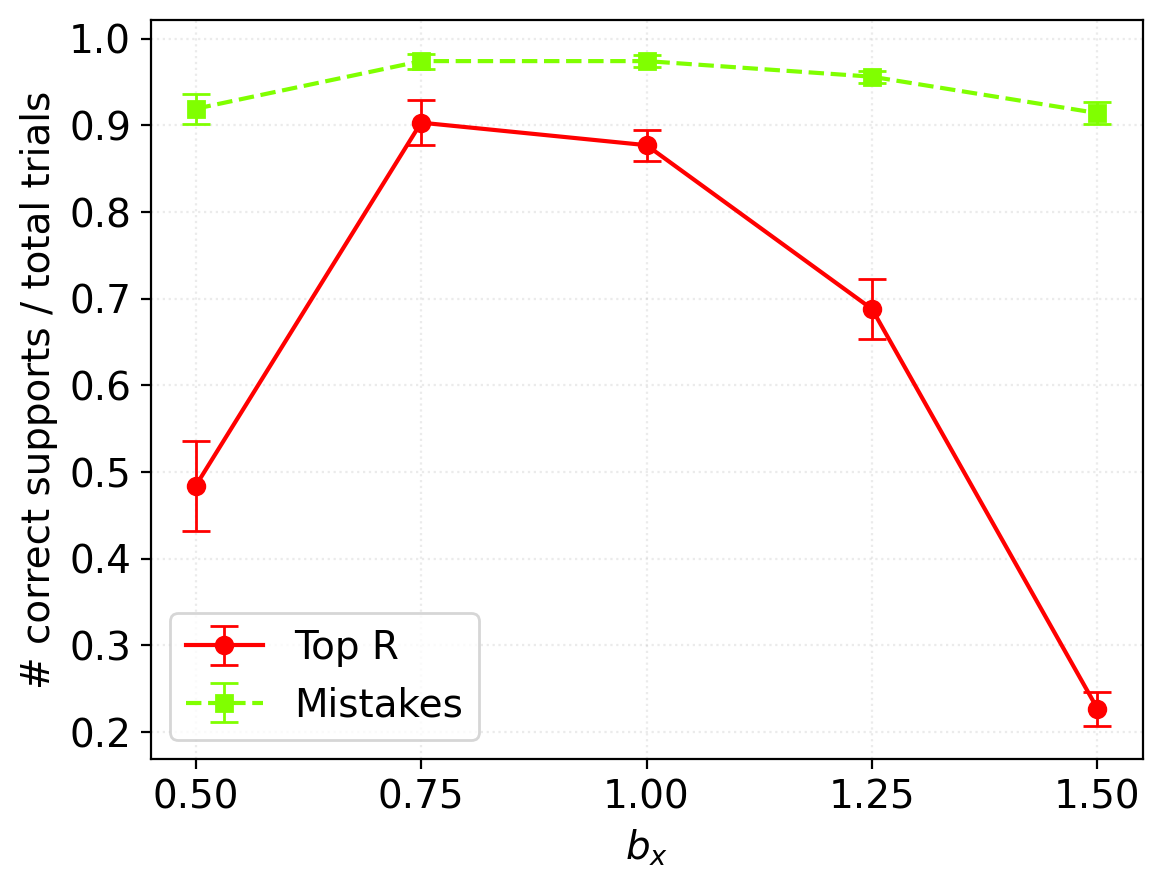}
        \caption{Proportion correct supports}
        \label{ablbxfrac}
    \end{subfigure}
        \caption{Numerical experiments for $n=4{,}000$, $p=100$, $s=5$, SNR=5, $\rho=0.1$, $\epsilon=1$. On the $x$-axis, we vary the value of $(b_x, b_y)$, and in Figures \ref{ablbxf1} and \ref{ablbxfrac} and respectively plot the average F1 score and average fraction of correct supports across 10 independent trials for the top-$R$ and mistakes algorithm. Error bars denote the mean standard error.}
\end{figure}

\subsubsection{Support recovery}

In this section, we present additional experimental results for support recovery in BSS, for varying values of $s, \rho, \epsilon, $ and SNR. As noted in Section \ref{sec:numerical}, the magnitude of the penalty parameter $\lambda$ in Algorithm \ref{oaalg} and the number of MCMC iterations for the algorithm by \cite{dpbss} were chosen such that the average runtimes of the methods across different trials were comparable. That is, our average runtime for each value of $n$ was at most the runtime of the MCMC algorithm. 


To demonstrate the power of our MIP-based estimator, we note that for $p=1000,s=8$, which is one of the settings that we ran, we have ${1000 \choose 8}\approx 10^{19}$. To put this in perspective, to use the standard exponential mechanism and enumerate all feasible supports, assuming computing each feasible support takes $10^{-6}$ seconds and 16 bits of storage, one would need \textbf{764 thousand years and 48 million terabytes of storage}. This shows the usefulness of our MIP-based estimator in practice, enabling us to solve BSS with DP for problem sizes that otherwise would be prohibitive. 

\begin{figure}[H]
    \centering

    \begin{subfigure}[b]{0.50\textwidth}
        \centering
        \includegraphics[width=\linewidth]{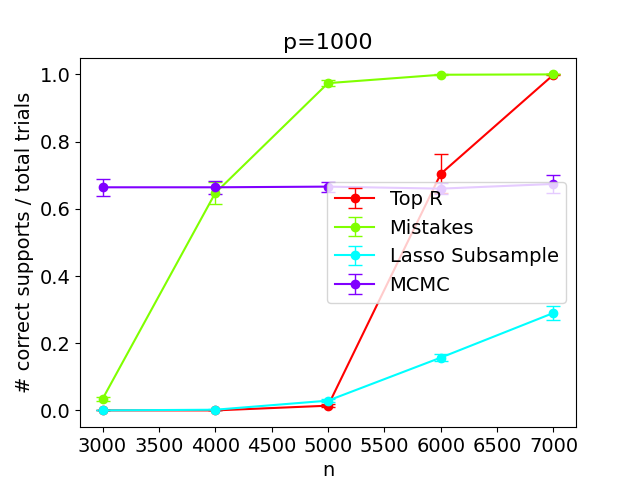}
        \caption{$s=5$, SNR=5, $\rho=0.1$, $\epsilon=1$}
        \label{p1000ex1}
    \end{subfigure}

    \vspace{0.5em} 

    \begin{subfigure}[b]{0.43\textwidth}
        \includegraphics[width=\linewidth]{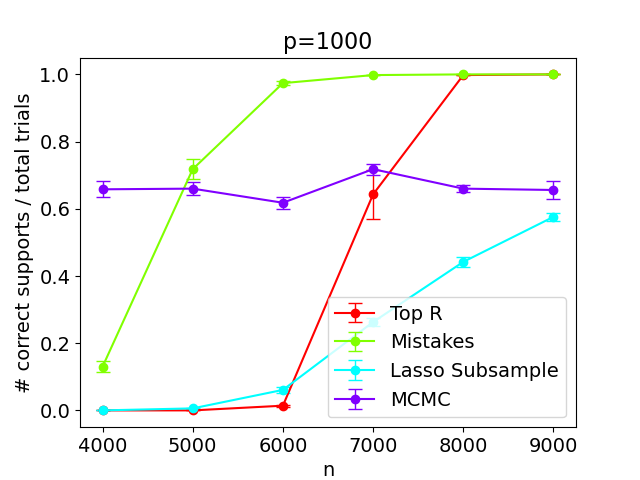}
        \caption{$s=5$, SNR=2, $\rho=0.1$, $\epsilon=1$}
    \end{subfigure}%
    \hfill
    \begin{subfigure}[b]{0.43\textwidth}
        \includegraphics[width=\linewidth]{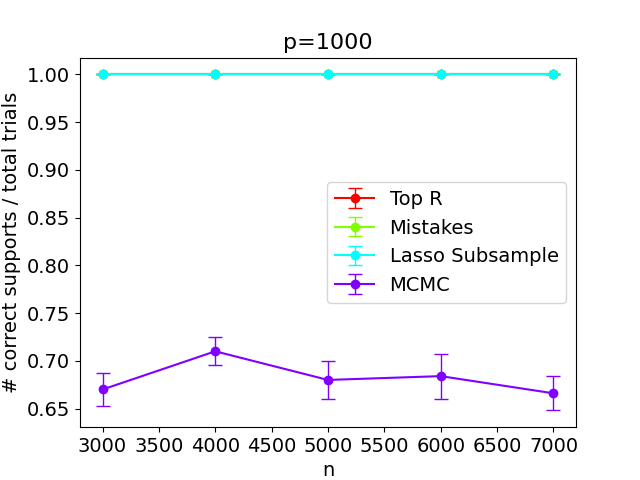}
        \caption{$s=5$, SNR=5, $\rho=0.1$, $\epsilon=5$}
        \label{p1000ex2}
    \end{subfigure}

    \vspace{0.5em}

    \begin{subfigure}[b]{0.43\textwidth}
        \includegraphics[width=\linewidth]{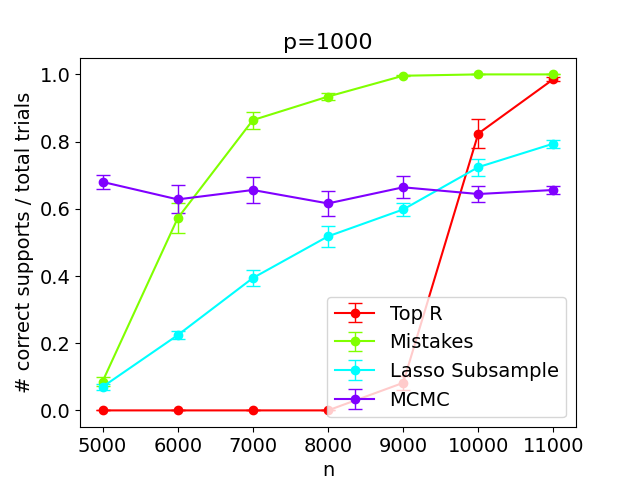}
        \caption{$s=5$, SNR=5, $\rho=0.5$, $\epsilon=1$}
        \label{p1000ex3}
    \end{subfigure}%
    \hfill
    \begin{subfigure}[b]{0.43\textwidth}
        \includegraphics[width=\linewidth]{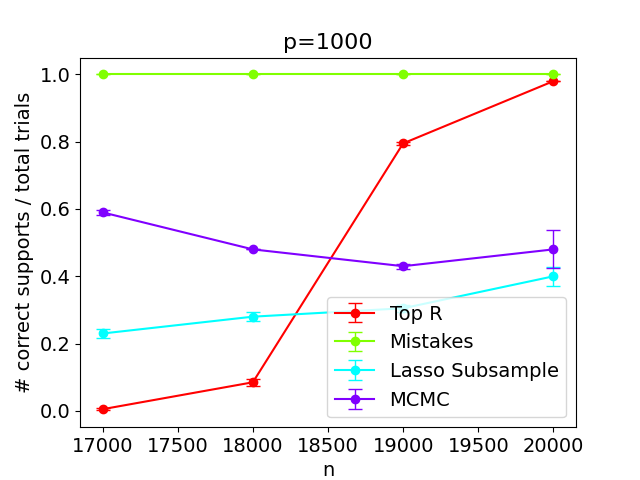}
        \caption{$s=8$, SNR=5, $\rho=0.1$, $\epsilon=1$}
        \label{p1000ex4}
    \end{subfigure}

    \caption{Numerical experiments for different values of $s,\rho, \epsilon$ and SNR, with $p = 1,000$. The penalty parameter $\lambda$ in Algorithm \ref{oaalg} was set to $250$ for figures \ref{p1000ex1}-\ref{p1000ex3} and 600 for figure \ref{p1000ex4}. The number of MCMC iterations was  set to $8000$ for figures \ref{p1000ex1}-\ref{p1000ex3} and $10000$ for figure \ref{p1000ex4}. On the $x$-axis, we vary the value of $n$ and plot the average proportion of draws across 10 independent trials that recovered the right support for each corresponding algorithm. Error bars denote the mean standard error.}
    \label{fig:p1000examples}
\end{figure}

\begin{figure}[H]
    \centering
    \begin{subfigure}[b]{0.6\textwidth}
        \includegraphics[width=\linewidth]{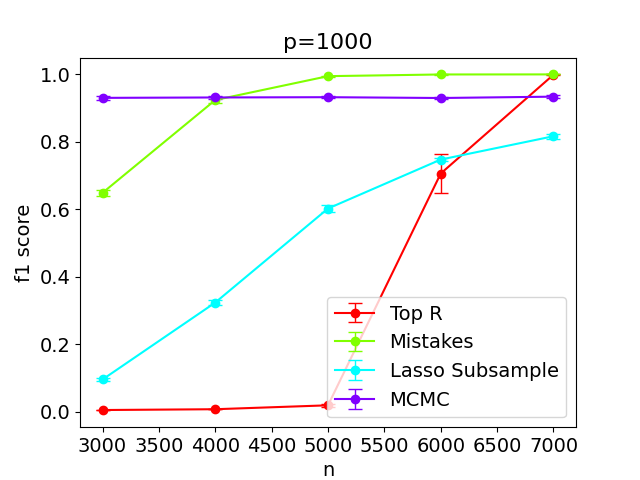}
        \caption{$s=5$, SNR=5, $\rho=0.1$, $\epsilon=1$}
        \label{p1000ex1-f1}
    \end{subfigure}
    
    \begin{subfigure}[b]{0.45\textwidth}
        \includegraphics[width=\linewidth]{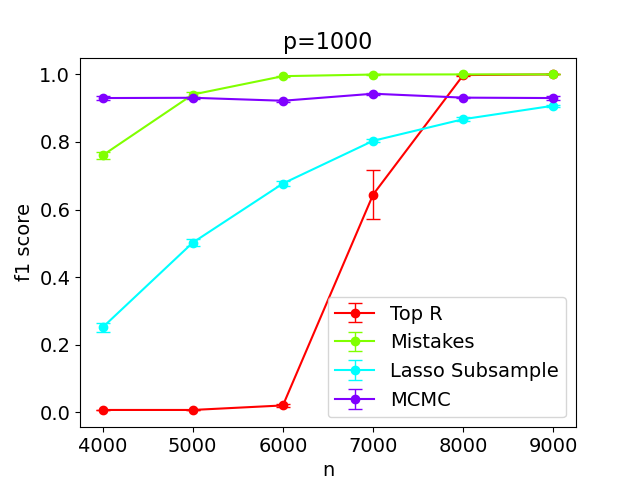}
        \caption{$s=5$, SNR=2, $\rho=0.1$, $\epsilon=1$}
    \end{subfigure}
    \begin{subfigure}[b]{0.45\textwidth}
        \includegraphics[width=\linewidth]{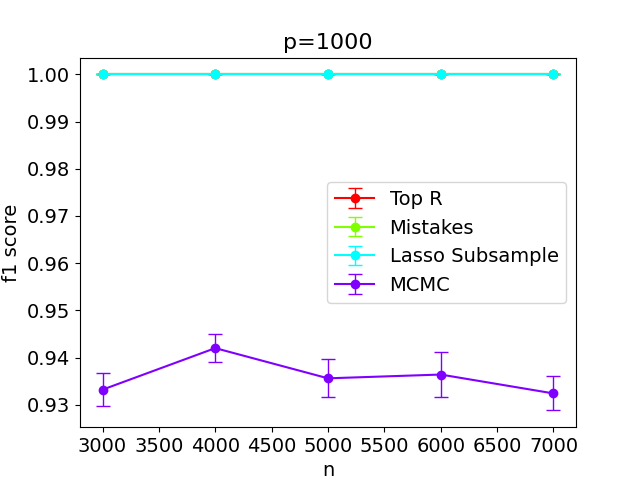}
        \caption{$s=5$, SNR=5, $\rho=0.1$, $\epsilon=5$}
                \label{p1000ex2-f1}

    \end{subfigure}
    
    \begin{subfigure}[b]{0.45\textwidth}
        \includegraphics[width=\linewidth]{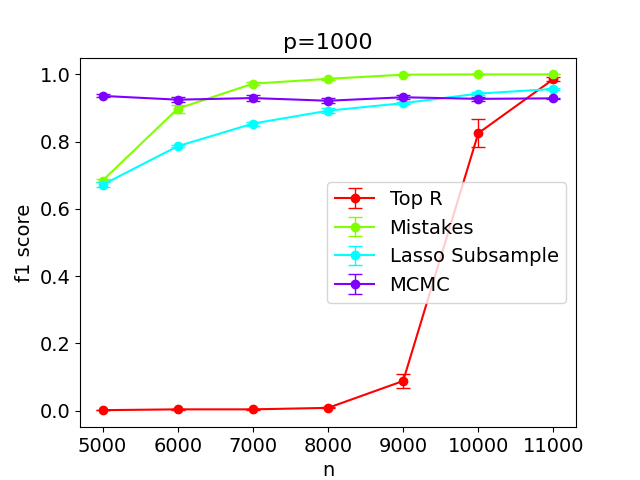}
        \caption{$s=5$, SNR=5, $\rho=0.5$, $\epsilon=1$}
                \label{p1000ex3-f1}

    \end{subfigure}
    \begin{subfigure}[b]{0.45\textwidth}
        \includegraphics[width=\linewidth]{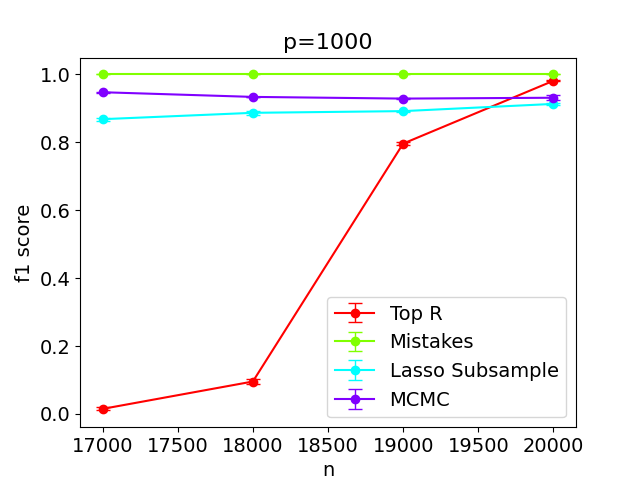}
        \caption{$s=8$, SNR=5, $\rho=0.1$, $\epsilon=1$}
                \label{p1000ex4-f1}

    \end{subfigure}
    
    \caption{Numerical experiments for different values of $s,\rho, \epsilon$ and SNR, with $p = 1,000$. The penalty parameter $\lambda$ in Algorithm \ref{oaalg} was set to $250$ for figures \ref{p1000ex1-f1}-\ref{p1000ex3-f1} and 600 for figure \ref{p1000ex4-f1}. The number of MCMC iterations was  set to $8000$ for figures \ref{p1000ex1-f1}-\ref{p1000ex3-f1} and $10000$ for figure \ref{p1000ex4-f1}. On the $x$-axis, we vary the value of $n$ and plot the average F1 score across 10 independent trials for each corresponding algorithm. Error bars denote the mean standard error.}
\end{figure}

\begin{figure}[H]
    \centering
       \begin{subfigure}[b]{0.6\textwidth}
        \includegraphics[width=\linewidth]{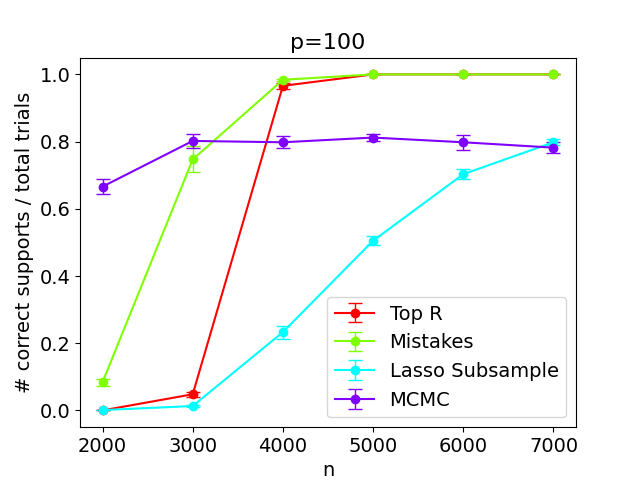}
        \caption{$s=5$, SNR=5, $\rho=0.1$, $\epsilon=1$}
        \label{p100ex1}
    \end{subfigure}
    \begin{subfigure}[b]{0.45\textwidth}
        \includegraphics[width=\linewidth]{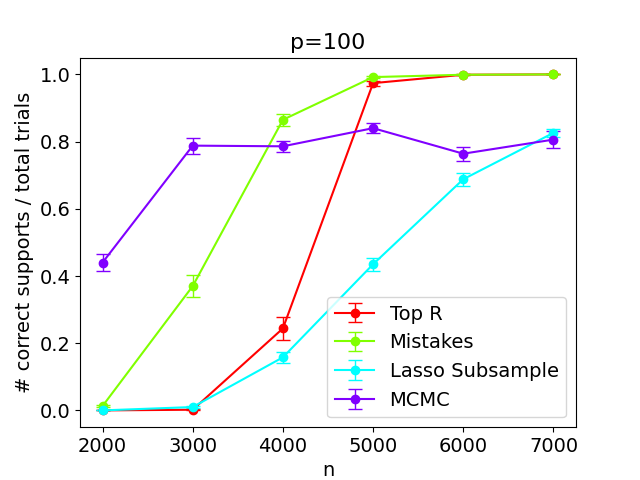}
        \caption{$s=5$, SNR=2, $\rho=0.1$, $\epsilon=1$}
    \end{subfigure}
    \begin{subfigure}[b]{0.45\textwidth}
        \includegraphics[width=\linewidth]{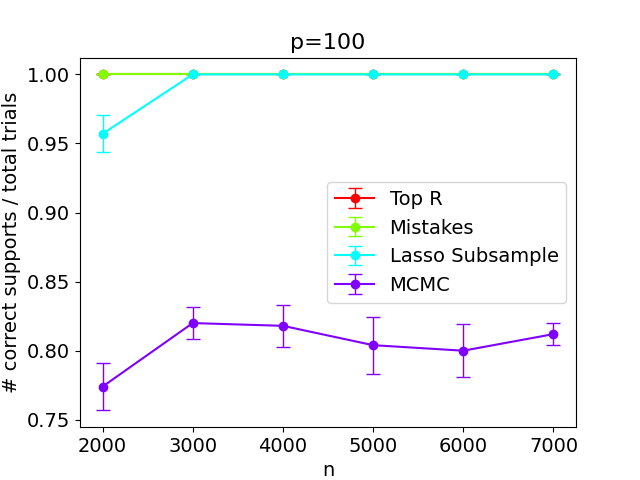}
        \caption{$s=5$, SNR=5, $\rho=0.1$, $\epsilon=5$}
                \label{p100ex2}

    \end{subfigure}
    
    \begin{subfigure}[b]{0.45\textwidth}
        \includegraphics[width=\linewidth]{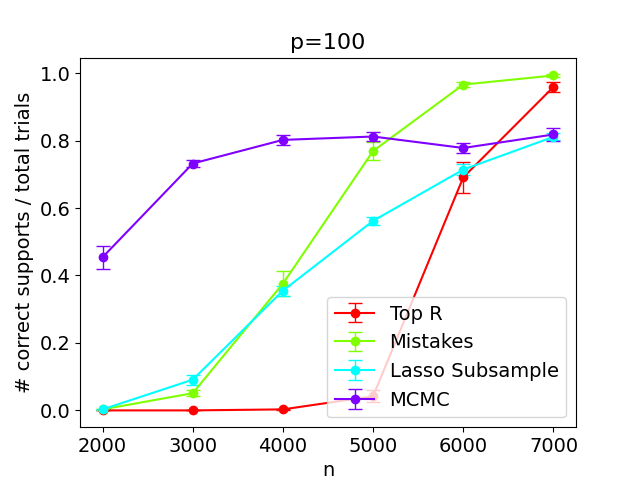}
        \caption{$s=5$, SNR=5, $\rho=0.5$, $\epsilon=1$}
                \label{p100ex3}

    \end{subfigure}
    \begin{subfigure}[b]{0.45\textwidth}
        \includegraphics[width=\linewidth]{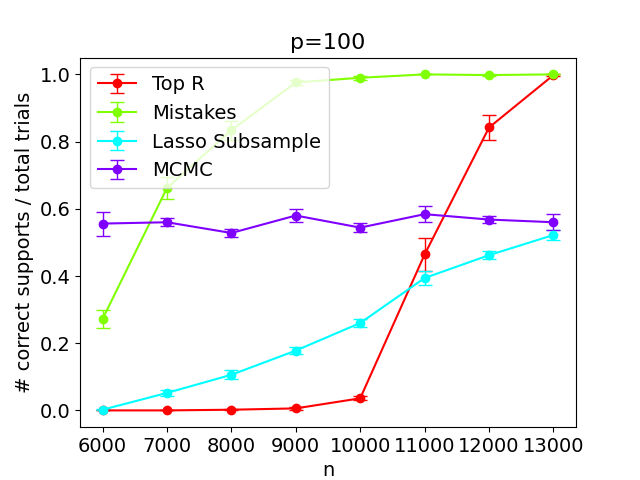}
        \caption{$s=8$, SNR=5, $\rho=0.1$, $\epsilon=1$}
                \label{p100ex4}

    \end{subfigure}
    
    \caption{Numerical experiments for different values of $s,\rho, \epsilon$ and SNR, with $p = 100$. The penalty parameter $\lambda$ in Algorithm \ref{oaalg} was set to $120$ for figures \ref{p100ex1}-\ref{p100ex3} and 350 for figure \ref{p100ex4}. The number of MCMC iterations was set to $1000$ for figures \ref{p100ex1}-\ref{p1000ex4}. On the $x$-axis, we vary the value of $n$ and plot the average proportion of draws across 10 independent trials that recovered the right support for each corresponding algorithm. Error bars denote the mean standard error.}
\end{figure}

\begin{figure}[H]
    \centering
     \begin{subfigure}[b]{0.6\textwidth}
        \includegraphics[width=\linewidth]{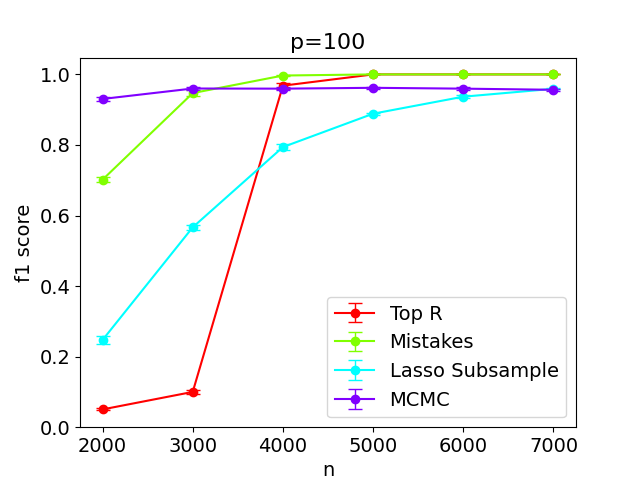}
        \caption{$s=5$, SNR=5, $\rho=0.1$, $\epsilon=1$}
        \label{p100ex1-f1}
    \end{subfigure}
    \begin{subfigure}[b]{0.45\textwidth}
        \includegraphics[width=\linewidth]{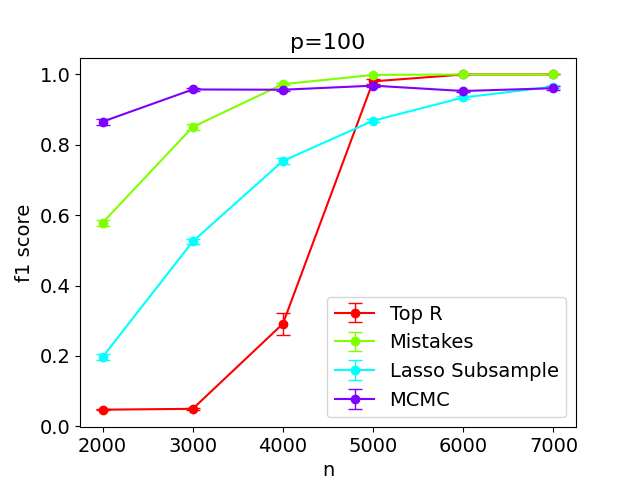}
        \caption{$s=5$, SNR=2, $\rho=0.1$, $\epsilon=1$}
    \end{subfigure}
    \begin{subfigure}[b]{0.45\textwidth}
        \includegraphics[width=\linewidth]{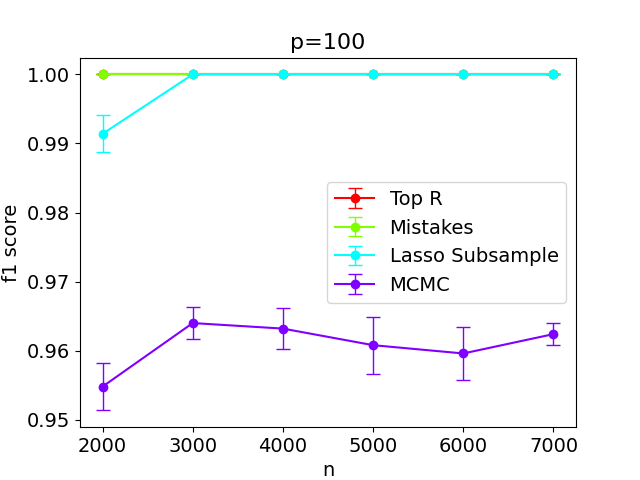}
        \caption{$s=5$, SNR=5, $\rho=0.1$, $\epsilon=5$}
                \label{p100ex2-f1}

    \end{subfigure}
    
    \begin{subfigure}[b]{0.45\textwidth}
        \includegraphics[width=\linewidth]{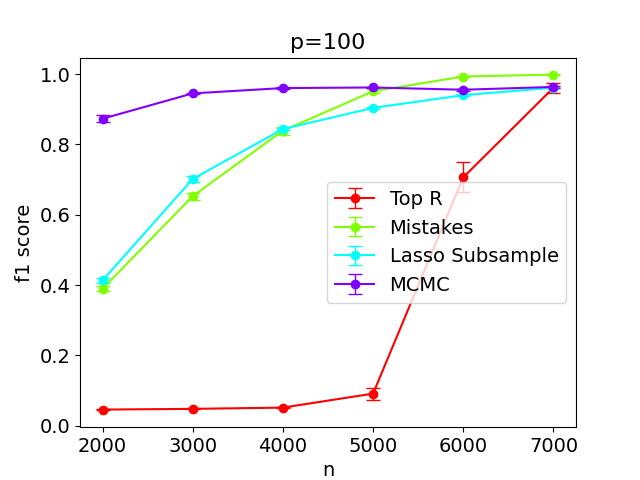}
        \caption{$s=5$, SNR=5, $\rho=0.5$, $\epsilon=1$}
                \label{p100ex3-f1}

    \end{subfigure}
    \begin{subfigure}[b]{0.45\textwidth}
        \includegraphics[width=\linewidth]{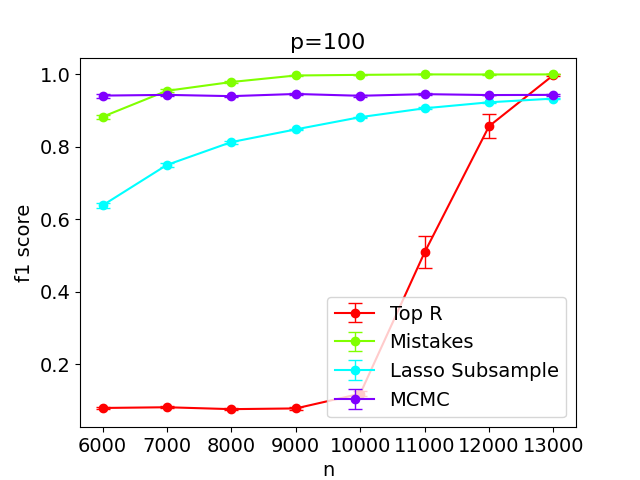}
        \caption{$s=8$, SNR=5, $\rho=0.1$, $\epsilon=1$}
                \label{p100ex4-f1}

    \end{subfigure}
    
    \caption{Numerical experiments for different values of $s,\rho, \epsilon$ and SNR, with $p = 100$. The penalty parameter $\lambda$ in Algorithm \ref{oaalg} was set to $120$ for figures \ref{p100ex1-f1}-\ref{p100ex3-f1} and 350 for figure \ref{p100ex4-f1}. The number of MCMC iterations was set to $1000$ for figures \ref{p100ex1-f1}-\ref{p1000ex4-f1}. On the $x$-axis, we vary the value of $n$ and plot the average F1 score across 10 independent trials for each corresponding algorithm. Error bars denote the mean standard error.}
\end{figure}

\begin{figure}[H]
    \centering
   \begin{subfigure}[b]
   {0.60\textwidth}
        \includegraphics[width=\linewidth]{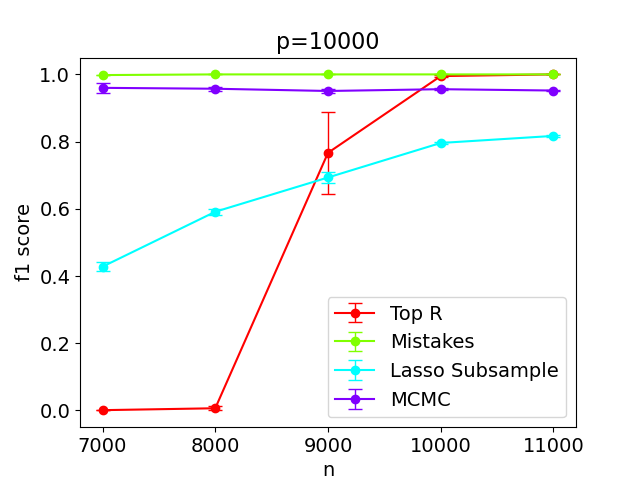}
    \end{subfigure}
   
    \caption{Numerical experiments for $p=10,000$, with $s=5$, SNR=5, $\rho=0.1$, and $\epsilon=1$. The penalty parameter $\lambda$ in Algorithm \ref{oaalg} was set to $600$, and the number of MCMC iterations was set to $100,000$. On the $x$-axis, we vary the value of $n$ and plot the average F1 score across 10 independent trials for each corresponding algorithm. Error bars denote the mean standard error.}
\end{figure}

\subsection{Hinge Loss}
\label{hinge-exp}

In this section, we present our experimental results in the setting of sparse classification, presented in Problem \ref{hinge}. 

To generate our data, we first generate $z_i = \B{x}_i^T\B\beta^*+\epsilon_i$ for $i\in[n]$, where $\B x_1,\cdots,\B x_n\overset{\text{iid}}{\sim}\cN(\B 0,\B \Sigma)\in\R^p$ and the independent noise follows $\B\epsilon\sim\cN(\B 0,\sigma^2\B I_n)$ where $\B I_n$ is the identity matrix of size $n$. We then draw $u_i \sim \text{Uniform}[0,1]$ for $i\in[n]$, and we set $$y_i =
    \begin{cases}
    1 & \text{if } u_i > \sigma(z_i) \\
    -1 & \text{otherwise}
    \end{cases}
    \quad \text{where } \sigma(z) = \frac{1}{1 + e^{-z}}.$$
 Moreover, for $i,j\in[p]$, we set $\Sigma_{i,j}=\rho^{|i-j|}$ and set nonzero coordinates of $\B\beta^*$ to take value $1/\sqrt{s}$ at indices $\{1,3,\cdots,2s-1\}$. We define the Signal to Noise Ratio as $\text{SNR}=\|\B X\B\beta^*\|_2^2/\|\B\epsilon\|_2^2$.
 
 As with the BSS results, our methods show favorable empirical support recovery in both low and high-dimensional settings, with our mistakes method outperforming our top-$R$ method.

\begin{figure}[htbp]
    \centering
    \begin{subfigure}[b]{0.45\textwidth}
        \includegraphics[width=\linewidth]{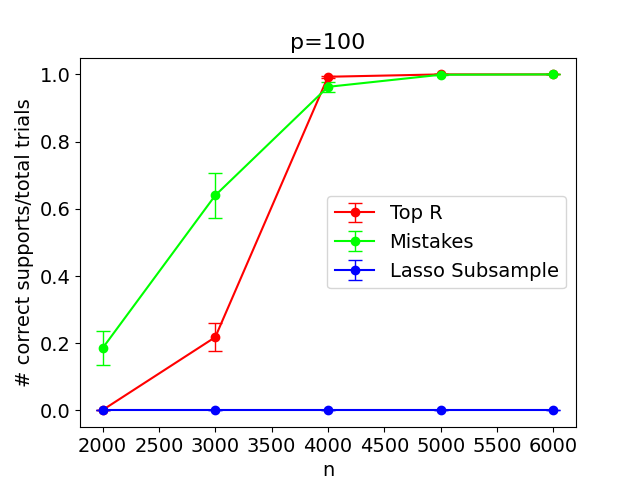}
        \caption{$p=100$}
        \label{p100fig-hinge}

    \end{subfigure}
    \begin{subfigure}[b]{0.45\textwidth}
        \includegraphics[width=\linewidth]{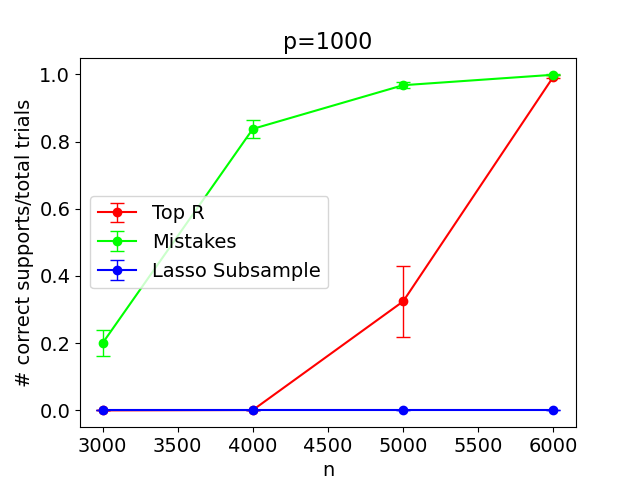}
        \caption{$p=1{,}000$}
        \label{p1000fig-hinge}
    \end{subfigure}

        \caption{Numerical experiments for $p=100$ and $p=1{,}000$, with $s=5$, SNR=5, $\rho=0.1$, and $\epsilon=1$. The penalty parameter $\lambda$ in Algorithm \ref{oaalg} was set to $90$ and $100$ for figures \ref{p100fig-hinge} and \ref{p1000fig-hinge},  respectively. On the $x$-axis, we vary the value of $n$ and plot the average proportion of draws across 10 independent trials that recovered the right support for each corresponding algorithm. Error bars denote the mean standard error.}
\end{figure}

\begin{figure}[htbp]
    \centering
    \begin{subfigure}[b]{0.45\textwidth}
        \includegraphics[width=\linewidth]{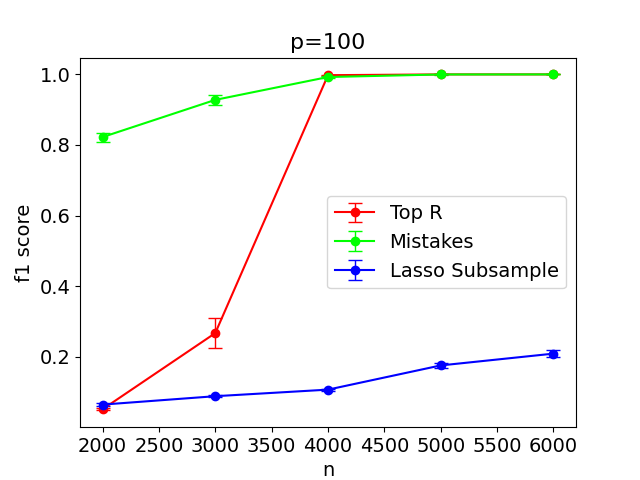}
        \caption{$p=100$}
        \label{p100fig-hinge-f1}

    \end{subfigure}
    \begin{subfigure}[b]{0.45\textwidth}
        \includegraphics[width=\linewidth]{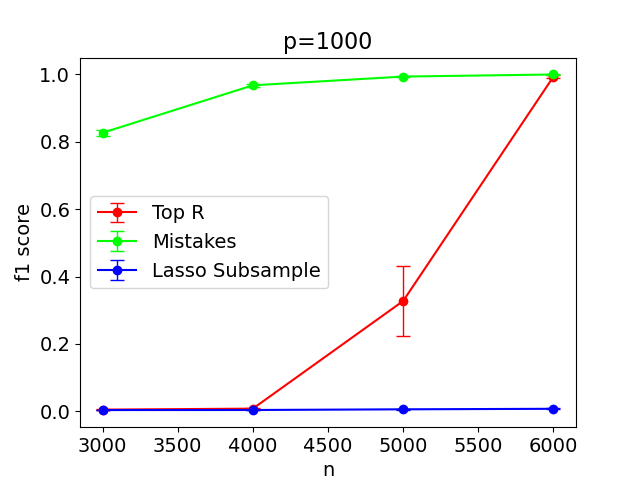}
        \caption{$p=1{,}000$}
        \label{p1000fig-hinge-f1}
    \end{subfigure}

        \caption{Numerical experiments for $p=100$ and $p=1{,}000$, with $s=5$, SNR=5, $\rho=0.1$, and $\epsilon=1$. The penalty parameter $\lambda$ in Algorithm \ref{oaalg} was set to $90$ and $100$ for figures \ref{p100fig-hinge-f1} and \ref{p1000fig-hinge-f1},  respectively. On the $x$-axis, we vary the value of $n$ and plot the average F1 score across 10 independent trials for each corresponding algorithm. Error bars denote the mean standard error.}
\end{figure}

\begin{figure}[htbp]
    \centering
    \begin{subfigure}[b]{0.60\textwidth}
        \includegraphics[width=\linewidth]{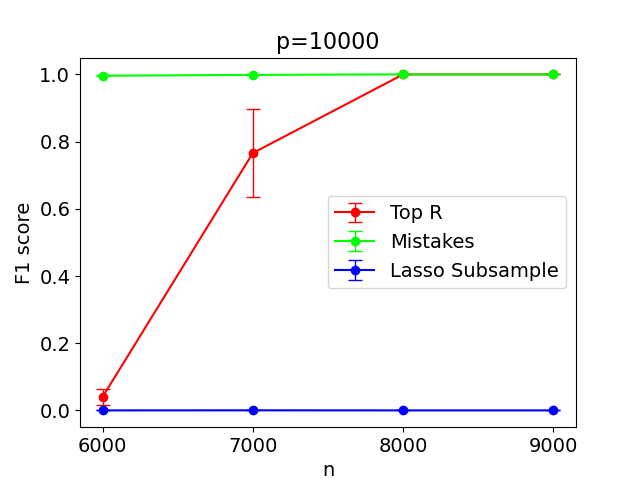}
        \label{p10000fig-hinge-f1}
    \end{subfigure}
        \caption{Numerical experiments for $p=10{,}000$, with $s=5$, SNR=5, $\rho=0.1$, and $\epsilon=1$. The penalty parameter $\lambda$ in Algorithm \ref{oaalg} was set to $170$. On the $x$-axis, we vary the value of $n$ and plot the average F1 score across 10 independent trials for each corresponding algorithm. Error bars denote the mean standard error.}
\end{figure}

\end{document}